\documentclass{article}

\usepackage{microtype}
\usepackage{graphicx}
\usepackage{subcaption}
\usepackage{booktabs} 

\usepackage{hyperref}

\usepackage[accepted]{icml2026}

\usepackage{amsmath}
\usepackage{amssymb}
\usepackage{mathtools}
\usepackage{amsthm}
\usepackage{bm}
\usepackage{appendix}
\usepackage{dsfont}
\usepackage{xr}
\usepackage[font=small,labelfont=it,
            justification=raggedright,
            singlelinecheck=false,
            skip=4pt]{caption}
\usepackage{booktabs}
\usepackage{graphicx}
\usepackage{hyperref}
\usepackage{comment}
\usepackage{tikz}
\usepackage{float}
\usepackage{multirow}

\newcommand{\Real}{\mathbb{R}}

\newcommand{\Expect}{{\mathbb E}}
\newcommand{\Var}{{\rm Var}}
\newcommand{\indic}{\mathds{1}}

\newcommand{\Dir}{{\rm Dir}}
\newcommand{\DP}{{\rm DP}}
\newcommand{\Beta}{{\rm Beta}}

\newcommand{\Gam}{{\rm Gam}}
\newcommand{\Mult}{{\rm Mult}}
\newcommand{\Bern}{{\rm Bern}}
\newcommand{\Norm}{\mathcal{N}}
\newcommand{\PIF}{{\rm PIF}}
\newcommand{\Tr}{{\rm Tr}}

\newcommand{\bpi}{\boldsymbol{\pi}}

\newcommand{\bbeta}{\boldsymbol{\beta}}
\newcommand{\bmu}{\boldsymbol{\mu}}
\newcommand{\obs}{\vy}
\newcommand{\pred}{\hat{\vy}}

\newcommand{\normtwo}[1]{\left\lVert #1 \right\rVert_2}

\usepackage{color, soul}
\usepackage{tabularx}
\usepackage{xcolor}
\definecolor{snsblue}{RGB}{0, 123, 194}        
\definecolor{snsorange}{RGB}{255, 157, 0}      
\definecolor{snsgreen}{RGB}{0, 172, 81}        
\definecolor{snsred}{RGB}{213, 94, 0}          
\definecolor{snspurple}{RGB}{204, 121, 167}    
\definecolor{snsbrown}{RGB}{139, 69, 19}       
\definecolor{snspink}{RGB}{255, 192, 203}      
\definecolor{snsgray}{RGB}{127, 127, 127}      
\definecolor{snsyellow}{RGB}{255, 255, 51}     
\definecolor{snscyan}{RGB}{23, 190, 207}       
\definecolor{snsdeeppurple}{RGB}{89, 30, 113} 

\usepackage{xspace}
\newcommand{\batched}{\textcolor{snsred}{\texttt{BR-iHMM}}\xspace}
\newcommand{\wolf}{\textcolor{snsgreen}{\texttt{WoLF-iHMM}}\xspace}
\newcommand{\vanilla}{\textcolor{snsorange}{\texttt{iHMM}}\xspace}
\newcommand{\dsmgauss}{\textcolor{snspurple}{\texttt{DSM-BOCD}}\xspace}
\newcommand{\beam}{\textcolor{snsgray}{\texttt{offline-iHMM}}\xspace}
\newcommand{\bocd}{\textcolor{purple}{\texttt{BOCD}}\xspace}
\newcommand{\unknownvar}{\textcolor{snsdeeppurple}{\texttt{iHMM (unknown-var)}\xspace}}

\newcommand{\myvec}[1]{\mathbf{#1}}
\newcommand{\myvecsym}[1]{\boldsymbol{#1}} 

\newcommand{\valpha}{\myvecsym{\alpha}}
\newcommand{\vbeta}{\myvecsym{\beta}}

\newcommand{\vXi}{\myvecsym{\Xi}}

\newcommand{\vmu}{\myvecsym{\mu}}

\newcommand{\vomega}{\myvecsym{\omega}}

\newcommand{\vPhi}{\myvecsym{\Phi}}
\newcommand{\vpi}{\myvecsym{\pi}}

\newcommand{\vpsi}{\myvecsym{\psi}}
\newcommand{\vPsi}{\myvecsym{\Psi}}

\newcommand{\vtheta}{\myvecsym{\theta}}

\newcommand{\vTheta}{\myvecsym{\Theta}}

\newcommand{\vSigma}{\myvecsym{\Sigma}}

\newcommand{\vxi}{\myvecsym{\xi}}

\newcommand{\ve}{\myvec{e}}

\newcommand{\vn}{\myvec{n}}

\newcommand{\vr}{\myvec{r}}

\newcommand{\vu}{\myvec{u}}

\newcommand{\vw}{\myvec{w}}

\newcommand{\vx}{\myvec{x}}

\newcommand{\vy}{\myvec{y}}

\def\vtheta{{\bm{\theta}}}

\def\ve{{\bm{e}}}

\def\vn{{\bm{n}}}

\def\vr{{\bm{r}}}

\def\vu{{\bm{u}}}

\def\vw{{\bm{w}}}
\def\vx{{\bm{x}}}
\def\vy{{\bm{y}}}

\usepackage{amsmath,amssymb}

\newcommand{\cI}{\mathcal{I}}

\newcommand{\vA}{\myvec{A}}
\newcommand{\vB}{\myvec{B}}

\newcommand{\vF}{\myvec{F}}

\newcommand{\vH}{\myvec{H}}
\newcommand{\vI}{\myvec{I}}

\newcommand{\vK}{\myvec{K}}

\newcommand{\vM}{\myvec{M}}

\newcommand{\vN}{\myvec{N}}

\newcommand{\vP}{\myvec{P}}

\newcommand{\vR}{\myvec{R}}
\newcommand{\vS}{\myvec{S}}

\newcommand{\vX}{\myvec{X}}

\newcommand{\data}{\mathcal{D}}

\newcommand{\lgp}{\vPsi}
\newcommand{\dpp}{\vPhi}
\newcommand{\KL}{{\rm KL}}
\newcommand{\emat}{\vF} 

\usetikzlibrary{positioning, arrows.meta, shapes, fit, calc, decorations.pathreplacing, backgrounds}
\tikzset{
  >=Latex, 
  box/.style = {draw, rounded corners=2pt, inner sep=6pt, align=center},
  state/.style = {circle, draw, minimum size=6mm, inner sep=0pt, fill=white},
  arrow/.style = {->, semithick},
  obs/.style = {draw, minimum size=6mm, inner sep=0pt, align=center, fill=gray!20},
  square/.style={draw, minimum size=6mm, inner sep=0pt, align=center}
}

\usepackage[capitalize,noabbrev]{cleveref}

\theoremstyle{plain}
\newtheorem{theorem}{Theorem}[section]
\newtheorem{proposition}[theorem]{Proposition}
\newtheorem{lemma}[theorem]{Lemma}

\theoremstyle{definition}

\theoremstyle{remark}
\newtheorem{remark}[theorem]{Remark}

\icmltitlerunning{Doubly Outlier-Robust Online Infinite Hidden Markov Model}

\begin{document}

\twocolumn[
  \icmltitle{Doubly Outlier-Robust Online Infinite Hidden Markov Model}



  \icmlsetsymbol{equal}{*}

  \begin{icmlauthorlist}
    \icmlauthor{Horace Yiu}{maths,omi}
    \icmlauthor{Leandro S\'{a}nchez-Betancourt}{maths,omi}
    \icmlauthor{\'{A}lvaro Cartea}{maths,omi}
    \icmlauthor{Gerardo Duran-Martin}{omi}
  \end{icmlauthorlist}

  \icmlaffiliation{omi}{Oxford-Man Institute of Quantitative Finance}
  \icmlaffiliation{maths}{Mathematical Institute, University of Oxford}

  \icmlcorrespondingauthor{Gerardo Duran-Martin}{
    \hyperlink{mailto:g.duran@me.com}{g.duran@me.com}
  }

  \icmlkeywords{Machine Learning, ICML}

  \vskip 0.3in
]




\printAffiliationsAndNotice{}  

\begin{abstract}
We derive a robust update rule for the online infinite hidden Markov model (iHMM) for when the streaming data contains outliers and the model is misspecified.
Leveraging recent advances in generalised Bayesian inference,
we define robustness via the posterior influence function (PIF),
and provide conditions under which the online iHMM has bounded PIF.
Imposing robustness inevitably induces an adaptation lag for regime switching. 
Our method, which is called 
Batched Robust iHMM (BR-iHMM),
balances adaptivity and robustness with two additional tunable parameters.
Across limit order book data, hourly electricity demand, and a synthetic high-dimensional linear system,
BR-iHMM reduces one-step-ahead forecasting error by up to 67\% relative to competing online Bayesian methods.
Together with theoretical guarantees of bounded PIF, our results highlight the practicality of our approach for both forecasting and interpretable online learning.

\end{abstract}

\section{Introduction}
Non-stationarity is ubiquitous in fields such as
finance \citep{stanley2003statistical, adams2019identifying, cartea2026detecting},
geology \citep{dastjerdy2023review}, speech recognition \citep{lee2006study, betkowska2007robust, ondel2021discovering},
and continual learning \citep{liu2022non, 5288526}.
Here, the underlying data-generating process (DGP) may drift gradually,
undergo abrupt regime changes, or be contaminated by outliers.
Two crucial properties of the datasets that these DGPs produce are that 
(i) previously observed dynamics reappear,
and that (ii) new regimes often emerge.

Popular approaches such as Bayesian changepoint detection (CPD) models \citep{adams2007bayesian, wilson2010hazard, fearnhead2007bocd} and Kalman filters (KF) \citep{welch1995introduction} are able to handle non-stationarity,
however, they typically reset or gradually forget the dynamics after a changepoint (CP) and cannot readily reuse previously encountered regimes.
On the other hand, online infinite hidden Markov models (iHMM) are particularly well suited for such settings \citep{rodriguez2011line}.
By maintaining a reusable library of regimes,
iHMMs allow rapid returns to previously encountered dynamics while retaining the flexibility to discover new regimes when warranted.

\begin{figure}[t]
    \centering
    \includegraphics[width=0.9\linewidth]{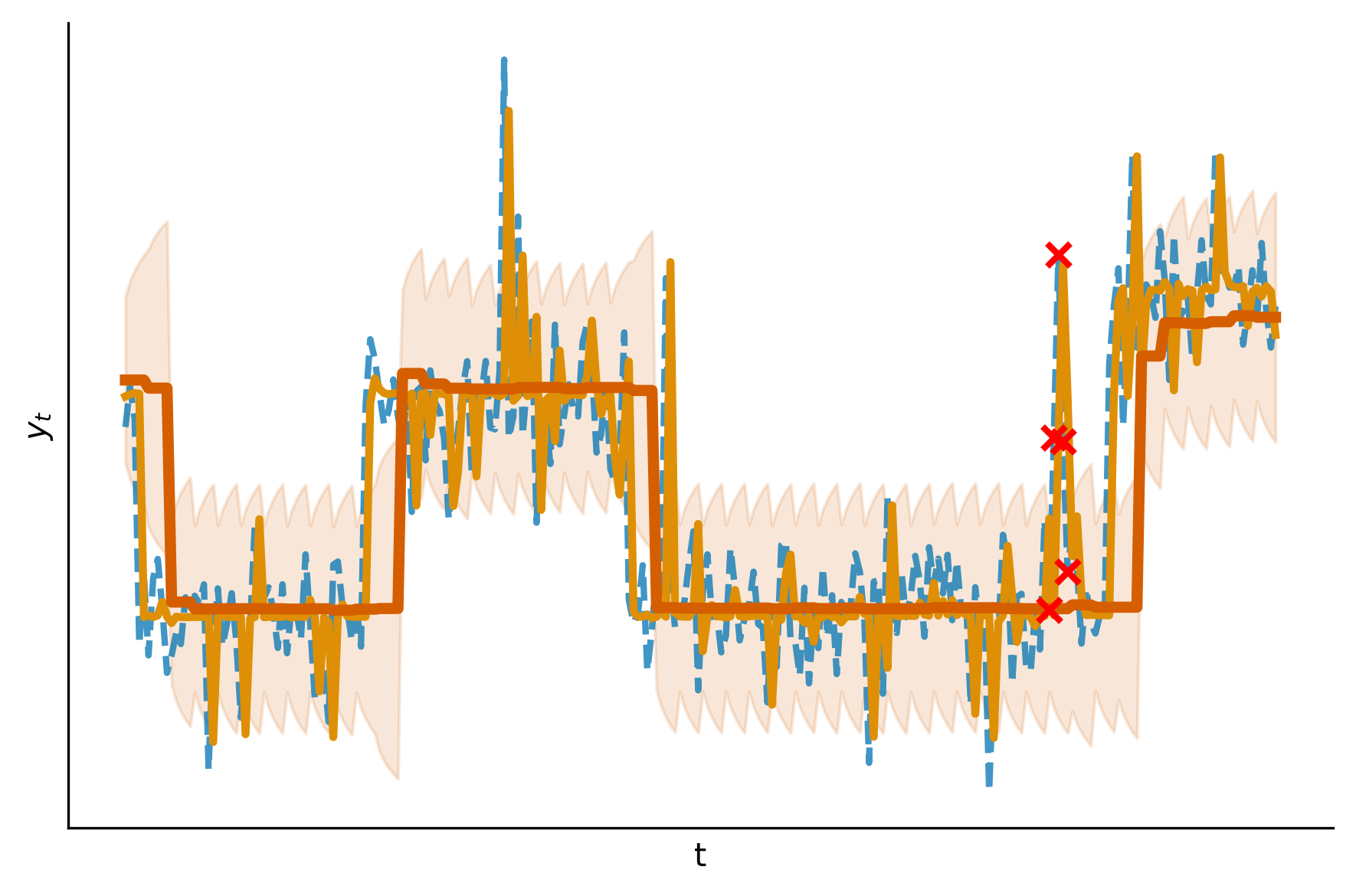}
    \caption{
    Data (\textcolor{snsblue}{blue dotted line}) generated according to \eqref{form:iid_dgp} with Student-$t$ observation noise.
    The outliers are marked with \textcolor{red}{x}.
    The \textcolor{snsred}{solid red line} shows our \batched,
    and the \textcolor{snsorange}{solid orange line} shows the standard online iHMM.
    Regions around the posterior predictive mean cover $\pm2$ standard deviations as error bounds.
    }
    \label{fig:eye_candy}
\end{figure}

However, this flexibility comes at a cost.
Online iHMMs are highly sensitive to model misspecification and outliers:
a single anomalous data point can corrupt the posterior over a regime’s parameters, 
degrading future predictions even after the anomaly has passed.
At the same time, the same outlier may trigger the creation of spurious regimes by falsely suggesting the presence of new dynamics.
While state creation is appropriate when the data genuinely enter a new regime,
outliers arising from sensor failures, corrupted measurements, or misspecified observation models
lead to artificial states that harm interpretability and predictive performance
\citep{stanley2003statistical, Zimmerman01101994, liu2022non, fox2008hdp}.
Existing online iHMM formulations lack robustness in both the observation space and the latent state space.

This work addresses this gap.
We introduce a provably doubly-robust online infinite hidden Markov model
for online learning, forecasting, and regime detection.
Our approach combines robust Generalised Bayes updates \citep{bissiri2016general}
in the observation space with a novel batched inference mechanism that enforces robustness in the state space.
To the best of our knowledge, this is the first online iHMM to provide robustness guarantees in both spaces simultaneously.

\section{Related work}

Infinite hidden Markov models belong to the broader class of hierarchical state-space models (HSSMs).
Hidden Markov models (HMMs) are a widely studied subclass of HSSMs for discrete latent state spaces and provide a natural framework for modelling regime-switching dynamics \citep{baum1966statistical, rabiner2002tutorial}.
The iHMM of \citet{beal2001infinite} extends HMMs by placing independent Dirichlet process (DP) priors over transition distributions, allowing an unbounded number of latent states.
\citet{teh2006teh} later formalised this construction as the hierarchical Dirichlet process (HDP), yielding explicit posterior Bayesian updates and enabling practical inference.

Existing online iHMM formulations typically rely on
hybrid sequential Monte Carlo (SMC) methods such as
particle learning (PL),
Rao-Blackwellised particle filters (RBPF),
or
mixture Kalman filters
\citep{doucet2000sequential, chen2000mixture, murphy2001rao, carvalho2010particle}.
While effective for non-stationary data, these methods remain sensitive to outliers and model misspecification, often exhibiting rapid state switching or the creation of spurious regimes \citep{shah2006integrating, fox2007developing, SGOURALIS20172117}.
Robust methods have previously been studied, but largely in isolation across different components of the model.
State-space robustness has primarily been explored in offline iHMMs through modified priors or post-processing \citep{fox2008hdp, johnson2012hierarchical, van2008beam},
while observation-space robustness has been developed in robust variants of KFs and related linear state-space models \citep{welch1995introduction, karlgaard2015nonlinear, pmlr-v235-duran-martin24a}, as well as in Bayesian CPD models \citep{altamirano2023robust}.

Our work connects these strands by providing a unified treatment of robustness in both the observation space and the latent state space within an online iHMM framework.
A comprehensive review of related literature is provided in Appendix~\ref{app:related-work-full}.
\section{Infinite Hidden Markov Models}\label{sec:ihmm}
We briefly review iHMMs and the main assumptions.
Let $\vy_{1:t} = \{ \vy_1,\ldots, \vy_t \}$ with $\vy_i \in \Real^d$ denote a sequence of $d$-dimensional observations
and
$\vx_{1:t} = \{\vx_1,\ldots, \vx_t\}$
with $\vx_t \in {\cal X}$
a sequence of exogenous input features.
The pair $\data_t = (\vx_t, \vy_t)$  is the data available at time $t$ and $\data_{1:t} = (\vx_{1:t}, \vy_{1:t})$.
Predictive quantities for some future time $t' > t$ conditioned on $\data_{1:t}$ are indexed with the subscript $t'|t$. 

\subsection{Data generating process}\label{sec:dgp}
In this work, we consider iHMMs with linear-Gaussian (LG) emissions. 

Let $s_t \in \{1,2,\ldots, t\}$ denote the latent state at time $t$.
The latent state sequence $(s_t)_{t \ge 1}$, with base assignment $s_1 = 1$,  evolves according to a transition probability 
\begin{align*}
    P(s_t \mid s_{t-1}) = \pi_{s_{t-1}, s_t}.
\end{align*}
Let $\vbeta_t = (\beta_1, \ldots, \beta_{t+1})$ denote the global state weights of an iHMM drawn from a stick-breaking process with concentration parameter $\gamma > 0$, inducing a DP prior over a countably-infinite set of latent states.
For each state $s_t$, the state-specific transition probabilities are given by
\[
\vpi_{s_t} = (\pi_{s_t, 1}, ..., \pi_{s_t, t+1}) \sim \Dir(\alpha \vbeta_t),
\]
where $\alpha > 0$ is a concentration parameter that controls how strongly $\vpi_{s_t}$ is dispersed around $\vbeta_t$.\footnote{
A measure-theoretic HDP construction is in Appendix~\ref{app:dp-hdp}, or alternatively, the work in \citet{teh2006teh}.
}

Given the latent state $s_t$, the exogenous covariates $\vx_t$,
and observation noise covariance $\vR_t \in \Real^{d \times d}$, the observation $\vy_t$ is generated according to the LG model
\begin{align}\label{form:lg_model}
    P(\vy_t \mid \vtheta_{1:t}, s_t, \vx_t) = \mathcal{N}\!\left(\vy_t \mid f(\vx_t)\, \vtheta_{s_t},\;\vR_t\right),
\end{align}
where 
$\vtheta_{1:t} = (\vtheta_1, ..., \vtheta_t)$ is the sequence of per-regime model parameters,
$\vtheta_{s_t} \in \Real^m$ are the model parameters that correspond to state $s_t$,
$f : \mathcal{X} \to \Real^{d \times m}$ is a known feature transformation.
Similar linear dynamical models are considered as iHMM observation models for its flexibility \citep{linderman2017bayesian} and are frequently used in time-series forecasting and regression \citep{ghahramani2000variational, fox2008nonparametric}.
Although we focus on linear models, our framework is general and accommodates a wide range of emission models.\footnote{
See \citet{rodriguez2011line} for instance, where emission model is specified as an exponential-family.
}

\subsection{Online infinite hidden Markov models}
Let $s_{1:t}$ be the state path up to time $t$, that is, 
$$
    s_{1:t} = (s_{1:t-1}, s_t) \in \{1\} \times \{1,2\} \times \ldots \times \{1, 2, \ldots, t\}.
$$
For $t \geq 1$,
Bayesian online inference for iHMMs involves recursive estimation of the joint posterior,
\begin{equation*}
    \begin{aligned}
        P(\vtheta_{1:t}, s_{1:t} \mid \data_{1:t}) = 
            \underbrace{
                P(\vtheta_{1:t} \mid s_{1:t}, \data_{1:t})
            }_\text{observation posterior}
            \underbrace{
                P(s_{1:t} \mid \data_{1:t})
            }_\text{path posterior}.
    \end{aligned}
\end{equation*}
for  $t\geq 1$.
These two factors are studied in Sections~\ref{sec:lg_inference} and \ref{sec:hdp_inference}, respectively.

\subsection{Sequential inference for linear-Gaussian models}\label{sec:lg_inference}

Conditioned on a latent state path $s_{1:t}$ and data $\data_{1:t}$, the posterior over observation parameters $\vtheta_{1:t}$ factorises such that only the parameter associated with the active state $s_t$ is updated.
Under the linear-Gaussian (LG) model, the posterior admits the factorisation
\begin{align}\label{form:param_posterior}
    P(\vtheta_{1:t} \mid s_{1:t}, \data_{1:t})
    = \prod_{k=1}^t p(\vtheta_k \mid \lgp_{t,k}),
\end{align}
where $\lgp_{k, t}=(\vmu_k, \vSigma_k)$ denotes the sufficient statistics associated with state $k$ at time $t$.

Given a datapoint $\data_t$ and Gaussian prior $P(\vtheta_t \mid \lgp_{t, t-1}) = \Norm(\vmu_0,\vSigma_0)$, the posterior can be updated recursively as
\begin{align*}
    P(\vtheta_{1:t} \mid s_{1:t}, \data_{1:t})
    \propto
    P(\vy_t \mid \vtheta_{s_t}, \vx_t)
    \prod_{k=1}^t P(\vtheta_k \mid \lgp_{k, t-1}).
\end{align*}
This expression shows that the observation $\vy_t$ affects only the posterior for the active state $s_t$; all other state parameters are carried forward unchanged.

For $k\neq s_t$, we set $\lgp_{k, t}=\lgp_{k, t-1}$, while the sufficient statistics of the active state $s_t$
are updated according to the equations
\begin{equation}\label{eq:kf-update}
    \begin{aligned}
        \hat{\vy}_{s_t} &\gets f(\vx_t)\,\vmu_{s_t},\\
        \vS_{s_t} &\gets f(\vx_t)\,\vSigma_{s_t}\,f(\vx_t)^\top + \vR_t,\\
        \vK_t &\gets \vSigma_{s_t}\,f(\vx_t)^\top\,\vS_{s_t}^{-1},\\
        \vmu_{s_t} &\gets \vmu_{s_t} + \vK_t(\vy_t - \hat{\vy}_{s_t}),\\
        \vSigma_{s_t} &\gets \vSigma_{s_t} - \vK_t\,\vS_{s_t}\,\vK_t^\top.
    \end{aligned}
\end{equation}

The resulting posterior predictive distribution is given by
\begin{equation}\label{form:lg_post_pred}
    \begin{aligned}
        &P(\vy_{t+1} \mid \data_{1:t},\, s_{1:t+1},\, \vx_{t+1})\\
        &= \int P(\vy_{t+1} \mid \vtheta, \vx_{t+1})
        \, P(\vtheta \mid \lgp_{t,s_{t+1}}) \, d\vtheta \\
        &= \mathcal{N}(\vy_{t+1} \mid \pred_{s_{t+1}},\, \vS_{s_{t+1}}).
    \end{aligned}
\end{equation}

\subsection{Sequential inference for HDPs}\label{sec:hdp_inference}
Next, estimation of the path posterior $s_{1:t}$ takes the form
\begin{equation}\label{form:state_post}
    \begin{aligned}
        P(s_{1:t} \mid \data_{1:t})
        &\propto  P(\vy_t \mid \data_{1:t-1}, s_{1:t}, \vx_t) \\
            &\quad
                P(s_{t} \mid s_{1:t-1}) \,
                P(s_{1:t-1} \mid \data_{1:t-1}) \, .
    \end{aligned}
\end{equation}
The first probability in \eqref{form:state_post} is defined in \eqref{form:param_posterior}.
The second probability is the posterior predictive transition probability from the terminal state $s_{t-1}$, which only depends on the path $s_{1:t-1}$ through the HDP sufficient statistics $\dpp_{t-1}$.
Specifically,
\begin{equation}\label{form:online_second_prob}
    \begin{aligned}
        P&(s_t \mid s_{1:t-1}) \\
            &= P(s_t \mid s_{t-1}, \dpp_{t-1}) \\
            &= \int_{\Delta^{t-1}} \pi_{s_{t-1},s_t}\,
        \Dir(\vpi_{s_{t-1}} \mid \hat{\alpha}_{t-1}\,\hat{\vbeta}_{t-1}) \, d\vpi_{s_{t-1}} \\
            &= \frac{
            n_{s_{t-1},s_t,t-1} + \hat{\alpha}_{t-1}\hat{\beta}_{s_t,t-1}
            }{
            \sum_{\ell} n_{s_{t-1},\ell,t-1} + \hat{\alpha}_{t-1}} \,,
    \end{aligned}
\end{equation}
where
$\dpp_t = (\hat{\alpha}_t, \hat{\vbeta}_t, \hat{\gamma}_t, L_t, \vN_t)$ are the \textit{structural parameters}
that summarises the path $s_{1:t}$ into sufficient statistics of the HDP.
Here,
$(\hat{\alpha_t}, \hat{\vbeta}_t, \hat{\gamma}_t)$ estimates $(\alpha, \vbeta, \gamma)$ respectively,
$L_t$ is the number of unique states instantiated in $s_{1:t}$,
and
$\vN_t \in \mathbb{N}^{t \times t}$ is the transition count matrix such that $[\vN_t]_{\ell k} = n_{\ell, k, t}$ is the number of times transitioned from state $k$ to $\ell$ and is a function of $s_{1:t}$.
The final equality in \eqref{form:online_second_prob} follows from Dirichlet–multinomial conjugacy.
A full derivation of \eqref{form:online_second_prob}, as well as the posterior updates of the structural parameters $\dpp_{t}$ given $s_t$ and $\dpp_{t-1}$, are provided in Appendix~\ref{sec:ihmm_inference}.\footnote{
Or alternatively, Section 5.3 of \citet{teh2006teh} for the HDP posterior updates.
}

\section{The Doubly-robust Online Infinite Hidden Markov Model}\label{sec:doubly_robust}
This section introduces our novel Batched Robust iHMM (\batched).
We show that robustness in online iHMMs is inherently a two-fold problem, requiring control of both the observation-parameter posterior and the latent-state posterior.
Addressing only one of these aspects is insufficient.
We therefore introduce a doubly-robust online iHMM that combines robust observation updates with robust state inference, yielding provable guarantees against outlier contamination.

\subsection{Quantifying robustness}

We formalise robustness with respect to outliers in terms of posterior sensitivity.
An \emph{outlier} is defined as an observation $\vy_t^c \in \Real^d$ that is not generated according to the assumed DGP and whose magnitude may be arbitrarily large.
Robustness is assessed by measuring the influence such an outlier has on posterior distributions.

Following \citet{matsubara2022robust, altamirano2023robust, pmlr-v235-duran-martin24a}, we quantify this influence using the \emph{posterior influence function} (PIF), defined as the Kullback--Leibler divergence (KLD) between posteriors updated with and without contamination.
Let $\data_t^c = (\vx_t, \vy_t^c)$ and $\data_{1:t}^c = (\data_{1:t-1}, \data_t^c)$.
We define the PIFs for the latent state path and the observation parameter as
\begin{equation*}
    \small
    \begin{aligned}
        \PIF_{s_t}(\vy_t^c, \vy_{1:t}) 
        &= {\rm KL}\!\left(
            P(s_{1:t} \mid \data_{1:t}) 
            \;\middle\|\;
            P(s_{1:t} \mid \data_{1:t}^c)
        \right),\\
        \PIF_{\vtheta_t}(\vy_t^c, \vy_{1:t}) 
        &= {\rm KL}\!\left(
            P(\vtheta_t \mid s_{1:t}, \data_{1:t})   
            \;\middle\|\;
            P(\vtheta_t \mid s_{1:t}, \data_{1:t}^c)
        \right).
    \end{aligned}
\end{equation*}

A posterior is said to be \emph{outlier-robust} if its PIF is uniformly bounded, i.e.
\[
\sup_{\vy_t^c \in \Real^d} \PIF(\vy_t^c, \vy_{1:t}) < \infty .
\]

Assuming that $\vtheta_t$ has continuous support, the PIF of the joint posterior over $(\vtheta_t, s_{1:t})$ admits the additive decomposition (see Appendix~\ref{app:cond_KL})
\begin{equation}\label{form:cond_KL}
    \begin{aligned}
        &\PIF_{\vtheta_t, s_t}(\vy_t^c, \vy_{1:t})\\
        &=
        \PIF_{s_t}(\vy_t^c, \vy_{1:t})
        +
        \sum_{s_{1:t}} P(s_{1:t} \mid \data_{1:t}) \,
        \PIF_{\vtheta_{s_t}}(\vy_t^c, \vy_{1:t}).
    \end{aligned}
\end{equation}
Equation~\eqref{form:cond_KL} shows that robustness of the joint posterior requires boundedness of \emph{both} the observation-space PIF and the state-space PIF.

\paragraph{Batched posterior influence.}
In online regime-switching settings, a single outlier should be insufficient to justify the creation of a new regime.
To capture robustness against short sequences of anomalous observations, we extend the definition of the PIF to batches of size $B \ge 1$.
Let $\vy_{t+1:t+B}^c$ denote a batch of $B$ potentially contaminated observations, and let $\data_{1:t+B}^c$ denote the corresponding contaminated dataset.
We define the \emph{batched PIF} as the KLD between posteriors updated with $\data_{1:t+B}$ and $\data_{1:t+B}^c$.

A model is said to be \emph{batch-robust of order $B$} if the batched PIF is uniformly bounded for all contaminations of size $B$.
This notion allows us to formalise a robustness--adaptivity trade-off: larger values of $B$ increase robustness to transient outliers at the cost of slower detection of genuine regime changes.
We verify this empirically in the appendix with Figures~\ref{fig:diff_batch_sizes} and \ref{fig:diff_batch_sizes_welllog} by comparing detection delay of our model on various batch sizes.

\subsection{Robustness in observation space (WoLF)}

We first address robustness in the observation space, that is, the sensitivity of the posterior over observation parameters $\vtheta_t$ to extreme observations.
For standard LG models, it is well known that non-robust Bayesian updates yield unbounded posterior influence: a single outlier can arbitrarily distort the posterior over $\vtheta_t$ \citep{pena2009bayesian, desgagne2019bayesian}.

Since Bayesian
updating of
LG models
can be seen as
a special case of KFs, existing work on robust KFs can be directly leveraged to make robust updates \citep{lambert2022recursive}.
In this work, we adopt the weighted observation likelihood filter (WoLF) \citep{pmlr-v235-duran-martin24a}, which replaces the standard likelihood with a weighted likelihood in a generalised Bayesian update \citep{bissiri2016general}.
Thus,
conditioned on a latent state path $s_{1:t}$, the generalised posterior update for the active state's parameter takes the form
\begin{equation}
    \begin{aligned}
        &P(\vtheta_{1:t} \mid s_{1:t}, \data_{1:t})\\
        &\propto 
        P(\vy_{t} \mid \vtheta_{s_t}, \vx_t)^{W(\vy_t, \pred_{s_t})^2}\,
        \prod_{k=1}^t p(\vtheta_k \mid \lgp_{k, t-1}).
    \end{aligned}
\end{equation}
where $W \colon \Real^d \times \Real^d \rightarrow \Real$ is a bounded weighting function and $\pred_{s_t}$ denotes the state-specific predictive mean.

To ensure bounded observation-space influence, the weight function is required to satisfy
\begin{equation}\label{form:epsilon_assumptions}
    \begin{aligned}
        \sup_{\obs \in \Real^d} W(\obs, \pred) &< \infty, \qquad
        \sup_{\obs \in \Real^d} W(\obs, \pred)^2 \normtwo{\obs}^2 < \infty ,
    \end{aligned}
\end{equation}
for any $\pred \in \Real^d$.
Under these conditions, WoLF yields a bounded observation-space PIF.
Moreover, under the LG assumption, the weighted update preserves conjugacy and admits closed-form recursive updates,
which is essential for PL.
In what follows, we use the inverse-multiquadratic (IMQ) weight
\begin{equation}
     W(\obs, \pred)^2 = \frac{1}{1 + c^{-2} \Vert \obs - \pred \Vert_{\vR_t}^2},
\end{equation}
where $c > 0$ is a soft-threshold parameter, $\vR_t$ is the observation noise covariance, and $\lVert \cdot \rVert_{\vR_t}$ denotes the Mahalanobis norm.
We choose IMQ specifically as it is a convenient representation of the admissible class of weighting functions. 
With the Mahalanobis norm, the weight is scale-aware with respect to $\vR_t$, which is known in the LG model, so it measures residual size in the natural geometry of the observation model rather than in raw coordinates.\footnote{For other admissible weighting functions, see \citet{west1981robust, pmlr-v235-duran-martin24a} and references therein.}
We denote by $w_{s, t'|t}$ the corresponding weight assigned to observation $\obs_{t'}$ relative to the prediction produced by state $s$ at time $t$.

With WoLF, the update is modified by replacing the posterior predictive variance
in \eqref{eq:kf-update} with
\begin{equation}\label{form:wolf_update}
    \small
    \begin{aligned}
        \vS_{s_t} &\gets \begin{cases}
            f(\vx_t)\vSigma_{s_t}f(\vx_t)^\top + \vR_t / w_{s_t, t|t-1}^{2}  & \sum_\ell n_{\ell, s_t, t} > 0 \\
            f(\vx_t)\vSigma_{s_t}f(\vx_t)^\top + \vR_t & \text{otherwise} \\
        \end{cases}
    \end{aligned}
\end{equation}
which limits changes to the posterior mean $\vmu_{s_t}$ of any existing state $s_t$ if prediction error $\Vert \vy_t - \pred_{s_t} \Vert_{\vR_t}^2$ is large and $w_{s_t, t|t-1}^2$ is close to 0.

\paragraph{Observation-space robustness is insufficient.}
While WoLF guarantees bounded $\PIF_{\vtheta_t}$, this alone does not ensure robustness of the joint posterior.
Even when observation updates are robust, outliers can still induce pathological behaviour in the latent state posterior.

\begin{theorem}[Observation-space robustness alone is insufficient]\label{th:wolf_not_enough}
    Consider an online iHMM with linear-Gaussian emissions.
    Suppose that, conditional on any fixed latent state path $s_{1:t}$,
    the observation parameters are updated using WoLF with a weight function satisfying \eqref{form:epsilon_assumptions}.
    Then the joint  $\PIF_{\vtheta_t, s_t}$ is unbounded, as the state-space  $\PIF_{s_t}$ is unbounded.
\end{theorem}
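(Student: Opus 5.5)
Because \eqref{form:cond_KL} writes the joint PIF as $\PIF_{s_t}$ plus a nonnegative weighted average of $\PIF_{\vtheta_{s_t}}$ terms, it suffices to exhibit, for some fixed clean history $\data_{1:t}$, a sequence of contaminations $\vy_t^c$ along which $\PIF_{s_t}(\vy_t^c,\vy_{1:t})\to\infty$. WoLF changes only the conditional parameter posterior; the path posterior \eqref{form:state_post} still multiplies by the \emph{unweighted} Gaussian predictive likelihood $\Norm(\vy_t^c\mid \pred_{s_t},\vS_{s_t})$ from \eqref{form:lg_post_pred}. I will show this is the source of the blow-up. The WoLF term stays finite, so the sum diverges.

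\textbf{Step 1: reduce to two paths.} Fix the history and the previous path distribution. Take two candidate extensions of a common prefix $s_{1:t-1}$. In the first, $s_t=k$ is an existing state with $\sum_\ell n_{\ell,k,t-1}>0$; its predictive covariance uses the weighted update, but its predictive evaluated at $\vy_t^c$ is a Gaussian with mean $\pred_k$. In the second, $s_t=L_{t-1}+1$ is a new state with prior $\Norm(\vmu_0,\vSigma_0)$; its predictive is $\Norm(f(\vx_t)\vmu_0, f(\vx_t)\vSigma_0 f(\vx_t)^\top+\vR_t)$. Both transitions have strictly positive prior probability by \eqref{form:online_second_prob}, since $\hat\alpha\hat\beta>0$.

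\textbf{Step 2: log-odds diverge.} Choose the clean history so that the existing state's predictive covariance $\vS_k$ is strictly smaller, in the Loewner sense along some direction $v$, than the new-state covariance $\vS_{\rm new}$. This is generic, because $\vSigma_k\preceq\vSigma_0$ after updates. Alternatively, take $\vy_t^c=\lambda v$ with $\lambda\to\infty$ and note that whichever quadratic form $v^\top \vS^{-1} v$ is smaller dominates. The log-ratio of the posterior weights is then
\[
\tfrac{\lambda^2}{2}\,v^\top(\vS_k^{-1}-\vS_{\rm new}^{-1})v + O(\lambda),
\]
which tends to $+\infty$. The contaminated posterior $P(s_{1:t}\mid\data^c_{1:t})$ therefore assigns mass to the existing-state paths decaying like $e^{-c\lambda^2}$, while the clean posterior gives them a fixed positive mass $p>0$.

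\textbf{Step 3: conclude and handle the obstacle.} Since $\KL(P\|Q)\ge \sum_{s} P(s)\log P(s)/Q(s) - \text{(bounded terms)}$, and $Q$ of a $P$-positive-mass set goes to zero, $\PIF_{s_t}\ge p\log(p/Q(A))-\log 2\to\infty$, using the data-processing inequality onto the partition $\{A,A^c\}$. The main obstacle is guaranteeing the strict covariance ordering, i.e.\ ensuring the existing state really has a narrower predictive than the competitor. I would handle this by constructing the clean history explicitly: a single state observed at least once so that $\vSigma_k\prec\vSigma_0$ strictly, with $f(\vx_t)$ of full row rank. This suffices because the theorem only needs unboundedness for some admissible configuration. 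Finally, I note that $\PIF_{\vtheta}$ is bounded by the WoLF guarantee under \eqref{form:epsilon_assumptions}, so the joint PIF is unbounded because of the state-space term.
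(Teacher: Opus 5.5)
Your argument is correct and is essentially the paper's. The new state's predictive covariance dominates the existing state's, so for $\vy_t^c=\lambda v$ the quadratic term drives the contaminated path posterior off an existing-state extension that keeps positive clean mass, which gives $\PIF_{s_t}\to\infty$ and, by nonnegativity of the second term in \eqref{form:cond_KL}, an unbounded joint PIF; your data-processing bound onto $\{A,A^c\}$ and your explicit choice of a history with strict covariance ordering are, if anything, tidier than Lemmas~\ref{lem:degen_state_dist}--\ref{lem:unbounded_pif_state}. One aside is wrong but not load-bearing: under \eqref{form:wolf_update} a state with no prior counts is updated with the unweighted $\vR_t$, so on new-state paths $\PIF_{\vtheta}$ is itself unbounded (the paper's Lemma~\ref{lem:unbounded_obs_state}), not kept finite by the WoLF guarantee as you claim.
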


The proof is in Appendix~\ref{proof:wolf_not_enough}.
Intuitively, as the prediction error $\|\vy_t - \pred_t\|^2$ grows, the posterior increasingly favours the creation of a new state whose prior variance penalises the error less severely than existing states.
As a result, the state posterior degenerates onto paths that instantiate a new regime at time $t$, leading to an unbounded $\PIF_{s_t}$.

This result shows that robustness in observation space, while necessary, is not sufficient.
In the next subsection, we introduce a complementary mechanism that directly controls robustness in the state space.

\subsection{Robustness in state space (batched inference)}

Observation-space robustness alone does not prevent outlier-driven state creation.
Even when the likelihood contribution of an extreme observation is downweighted, a single anomalous point can still dominate state inference by favouring a path that introduces a new regime.
To control this effect, we introduce a complementary mechanism that enforces robustness directly in the state space.

The key idea is to infer latent states over short batches of observations rather than at every individual time step.
Instead of deciding whether to switch regimes based on a single data point, the model accumulates evidence over a batch of size $B > 1$ before allowing a state transition.
This pooling prevents a single outlier from making a switching path dominate the posterior, and thus bounds the influence of extreme observations on state inference.

\paragraph{Batched state inference.}
Following \citet{duran2024bone}, we consider generalised posteriors over state trajectories.
For a candidate extension of the state path over the next batch, $s_{t+1:t+B}$, we define a batched (log) posterior score that approximates \eqref{form:state_post} by aggregating predictive likelihoods across the batch,
\begin{equation}\label{form:batched_posterior_pred}
    \begin{aligned}
    &\log \, \nu(s_{1:t+B};\;\data_{1:t+B})\\
        &=
        \sum_{b=1}^{B}
        w_{s_{t+b},\, t+b \mid t}^2
        \log P(\vy_{t+b} \mid \data_{1:t+b-1}, \vx_{t+b},\,s_{t+b}) \\
        &\quad
        + \log \sum_{s_{1:t}}
        P(s_{1:t} \mid \data_{1:t})
        P(s_{t+1} \mid s_t, \dpp_t)\\
        &\quad\quad
        \prod_{b=2}^B \indic(s_{t+b-1} = s_{t+b}) .
    \end{aligned}
\end{equation}
The first term pools the WoLF-weighted predictive likelihoods over the batch, while the second term accounts for the transition into the batch.
The indicator enforces persistence within the batch, so only paths that remain in a single state over the batch are admissible.

Intuitively, an isolated outlier contributes little to the pooled likelihood due to its small weight, while the remaining uncontaminated observations stabilise the posterior.
As a result, switching paths cannot dominate unless there is consistent evidence across multiple observations.

\paragraph{Degenerate sticky HDP prior.}
We implement intra-batch persistence via a time-dependent degenerate sticky HDP prior to achieve the approximation in \eqref{form:batched_posterior_pred}.
Specifically, we modify the state-transition distribution to
\begin{equation}\label{form:degen_sticky_HDP}
    \vpi_{s_t} \mid \dpp_t \sim \Dir(\hat{\alpha}_t \hat{\vbeta}_t + \ve_{s_t}\kappa_t),
\end{equation}
where $\ve_{s_t} \in [0, 1]^{t+1}$ is the standard basis vector corresponding to state $s_t$ and $\kappa_t \ge 0$ is a self-transition bias.
We define
\begin{equation}\label{form:kappa_t}
    \kappa_t =
    \begin{cases}
        0, & t \equiv 1 \pmod{B}, \\
        \infty, & \text{otherwise},
    \end{cases}
\end{equation}
and interpret $\kappa_t \in \{0,\infty\}$ in the limiting sense.
At inter-batch times ($t \equiv 1 \!\!\!\pmod{B}$), the prior reduces to the standard HDP, allowing regime changes or the creation of new states.
At intra-batch times, $\kappa_t = \infty$ forces self-transitions almost surely, preventing switching within the batch.

This deterministic scheduling yields what we refer to as the \emph{batched mechanism}.
State inference is performed only at inter-batch times, and the states within each batch are propagated deterministically.

\paragraph{State-space robustness.}
The batched mechanism directly bounds the state-space PIF.
Because switching is only permitted after aggregating $B$ observations, a single extreme observation cannot induce a new state.
Figure~\ref{fig:pif_states} illustrates this effect: while the state-space PIF grows unbounded in the non-batched case, batching yields a uniformly bounded PIF even under large perturbations.
This mechanism also mitigates the rapid-switching pathology of iHMMs, where redundant states with similar posteriors alternate frequently due to reinforcement by the HDP prior \citep{fox2008hdp}.
By construction, intra-batch alternation is prohibited, improving prequential stability.

\paragraph{Doubly-robust inference.}
Combining WoLF-based observation updates with batched state inference yields robustness in both spaces.
Figure~\ref{fig:batched_bayesian_network} illustrates the graphical model.
We formalise this result below.

\begin{theorem}[Joint observation--state space robustness]\label{th:robustness}
    Let $W(\cdot,\cdot)$ satisfy the boundedness conditions of Theorem~\ref{th:wolf_not_enough}.
    The batched inference posterior in \eqref{form:batched_posterior_pred}, together with the degenerate sticky HDP prior in \eqref{form:degen_sticky_HDP}, yields a bounded state-space PIF.
    Consequently, the joint PIF is bounded.
\end{theorem}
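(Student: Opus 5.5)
We use the decomposition \eqref{form:cond_KL}: the joint PIF is the state-space PIF plus a posterior-weighted average of observation-space PIFs. WoLF with a weight satisfying \eqref{form:epsilon_assumptions} already gives a bounded $\PIF_{\vtheta_{s_t}}$ for every fixed path. The weights $P(s_{1:t}\mid\data_{1:t})$ sum to one, and the paths admissible under \eqref{form:kappa_t} are finite in number. So the second term is bounded by the maximum of finitely many bounded quantities, and everything reduces to bounding the batched state-space PIF.

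\textbf{Step 1: write the path posterior in closed form.} The indicator in \eqref{form:batched_posterior_pred} restricts the admissible extensions $s_{t+1:t+B}$ to constant ones $s_{t+1}=\dots=s_{t+B}=k$, with $k\in\{1,\dots,L_t+1\}$. We write the normalised batched posterior as $p_k\propto a_k\exp(\ell_k)$, where:
\begin{itemize}
\item $a_k>0$ is the prior transition mass, which does not depend on the contaminated data;
\item $\ell_k=\sum_{b} w^2_{k,t+b|t}\log \Norm(\vy_{t+b}\mid \pred_{k},\vS_k)$.
\end{itemize}
The clean and contaminated posteriors $p$ and $p^c$ differ only through $\ell_k$ versus $\ell_k^c$.

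\textbf{Step 2: bound each weighted log-likelihood uniformly in the contamination.} For the IMQ weight, $w^2\lVert\vy-\pred\rVert^2_{\vR}$ is bounded by \eqref{form:epsilon_assumptions}, and $w^2\le 1$. The Gaussian log-density is a constant plus $-\tfrac12$ times a quadratic form in the residual. We compare $\vS_k$ with $\vR_t$ (or use the fact that $\vS_k\succeq \vR_t/w^2$ in the WoLF update), and the log-determinant term is bounded. This gives constants $C_1,C_2$, independent of $\vy^c$, with $|w^2\log\Norm(\cdot)|\le C_1+C_2$. The predictive quantities $\pred_{k},\vS_k$ inside the batch depend on earlier batch points through the WoLF recursion. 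Since the WoLF mean shift $\vK(\vy-\pred)$ with inflated $\vS$ is bounded (again from \eqref{form:epsilon_assumptions}), these quantities stay in a bounded set, and the bound propagates across the $B$ steps by induction on $b$. Hence $|\ell_k^c-\ell_k|\le M$ for all $k$ and all contaminations $\vy^c_{t+1:t+B}$.

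\textbf{Step 3: bound the KL divergence.} Since $a_k$ is common to both posteriors, $\log(p_k/p^c_k)=(\ell_k-\ell_k^c)-\log\frac{\sum_j a_j e^{\ell_j}}{\sum_j a_j e^{\ell^c_j}}$. Each of the two terms is bounded in absolute value by $M$, so $\KL(p\Vert p^c)\le 2M$ uniformly. The earlier history $s_{1:t}$ enters only through the common prior factor, so the same bound holds for the full path posterior over $s_{1:t+B}$.

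\textbf{Main obstacle.} The delicate step is Step 2 for a newly created state, whose update uses the unweighted branch of \eqref{form:wolf_update}. For that state we must either use that the weight $w^2_{k,t+b|t}$ in \eqref{form:batched_posterior_pred} is still applied, or use that its predictive covariance $f\vSigma_0 f^\top+\vR_t$ is fixed and positive definite.

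Uniform boundedness then rests on the product $w^2\times$quadratic being bounded. This relies on the IMQ Mahalanobis norm being defined with $\vR_t$, which is comparable to $\vS_k^{-1}$ up to constants because $\vSigma_k\preceq\vSigma_0$. For Theorem~\ref{th:wolf_not_enough} the unbounded PIF came from an unweighted single-step comparison. Here the weighting inside the score removes that growth, and we must check this explicitly for the new-state term.
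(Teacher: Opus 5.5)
Your overall structure matches the paper's: the decomposition \eqref{form:cond_KL}, a state-space lemma, and an observation-space lemma. Your Steps~1--3 are in substance Lemma~\ref{lem:bounded_pif_state}. You restrict to constant batch extensions, note that the transition factor is free of contamination, and bound each $w^2$-weighted Gaussian log-likelihood through $\sup W^2\lVert\vy\rVert_2^2<\infty$ and $\lambda_{\max}(\vS^{-1})<\infty$. For a general $W$, rely on these conditions and $\sup W<\infty$ rather than IMQ facts such as $w^2\le 1$. Your explicit bound $\KL(p\Vert p^c)\le 2M$ is a cleaner finish than the paper's lower bound on the contaminated probabilities.

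Your Step~2 induction, and with it your ``main obstacle'', comes from reading the likelihoods in \eqref{form:batched_posterior_pred} as updated inside the batch. That reading is fair to the displayed equation. However, the paper's proof and Algorithm~\ref{alg:online_iHMM_batched} use $b$-step-ahead predictives from $\lgp_t$, i.e.\ $\pred_{k,t+b|t}$ and $\vS_{k,t+b|t}$ from \eqref{form:multistep_pred}. Under that reading nothing in the score depends on the batch, and the weight multiplies every candidate, including the fresh state. Since $\pred_{k,t+b|t}$ and $\vS_{k,t+b|t}\succeq\vR_{t+b}$ are fixed, the bound closes with no special case.

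Under your reading the induction actually fails. The fresh state takes the unweighted branch of \eqref{form:wolf_update}, so an outlier at $t+1$ shifts its mean by $\vK(\vy^c_{t+1}-\pred)$ without bound. Neither of your proposed fixes helps. The weight is computed relative to $\pred_{k,t+b|t}$, not the shifted mean, and a fixed covariance does not control a mean.

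The genuine gap is the observation term, which you settle in one sentence. First, the one-step WoLF bound does not by itself cover $B$ successive updates, several of which may be contaminated. The paper proves this in Lemma~\ref{lem:bounded_pif_obs} by induction over the batch. It bounds the trace, log-determinant and mean-difference terms of the Gaussian KL, using $\vSigma^c_{n+1}\preceq\vSigma^c_n$, the precision (Woodbury) form of the covariance update, and a bound on $\lVert\vK^c\rVert_2$ through $\lambda_{\min}(\vR/w^2)$. You have these ingredients in Step~2 but never apply them to the KL.

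Second, and more seriously, ``bounded for every fixed path'' is false for the path that moves the batch into a fresh state. There \eqref{form:wolf_update} performs a plain Kalman update, whose $\PIF_{\vtheta}$ is unbounded; this is exactly Lemma~\ref{lem:unbounded_obs_state}. That path has positive clean posterior mass (your $a_k>0$), so the weighted sum in \eqref{form:cond_KL} is unbounded. This, not Step~2, is where the unweighted branch really matters.

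The paper's Lemma~\ref{lem:bounded_pif_obs} does not treat this case either. Its base case cites the weighted one-step bound, and its inductive step uses $\vR/w^2$ throughout. To conclude that the joint PIF is bounded, you must add the hypothesis that freshly created states also receive the weighted update.
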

The proof is in Appendix~\ref{proof:robustness}.

\paragraph{Implementation.}
Following \citet{rodriguez2011line}, we implement the online iHMM via PL.
We use the superscript $(i)$ to indicate quantities associated with the $i$-th particle. 
The particle-specific posterior predictive of the next state is conditioned on $s_{1:t-1}^{(i)}$ to recover the form in \eqref{form:online_second_prob}.
Following standard practice in SMCs, we incorporate a threshold $\tau_{\text{ESS}}$; particles are resampled whenever the effective sample size (ESS) falls below this value \citep{doucet2001introduction}.
The formula for ESS is given in Appendix \ref{sec:metrics}.
Particle weights are initialised uniformly with $\omega_0^{(i)} = N^{-1}$ for $N > 0$ particles.
The classical online iHMM algorithm via PL is presented in Algorithm~\ref{alg:online_ihmm_full} in Appendix \ref{sec:algos}. 

The pseudocode in Algorithm~\ref{psuedo:ihmm_batched} summarises the resulting doubly-robust online iHMM, 
where $\texttt{wolf\_update}$ refers to the update rules in \eqref{eq:kf-update} and \eqref{form:wolf_update}, 
and $\texttt{update\_struct}$ refers to those in \eqref{form:sample_u}-\eqref{form:sample_beta} in Appendix~\ref{sec:ihmm_inference}.
In Algorithm~\ref{psuedo:ihmm_batched}, we sample auxiliary variables $\vM_t \in \mathbb{N}^{t \times t}$ with $[\vM_t]_{l, k} = m_{l, k, t}$.
We write its distribution as $\vM_t \sim {\rm Antoniak}(\vN_{t}, \alpha_{t-1}, \vbeta_{t-1})$ if its elements follow the probability law \eqref{form:m_law} in the appendix \citep{antoniak1974mixtures}.

\begin{algorithm}[t]
    \caption{\batched}
    \label{psuedo:ihmm_batched}
    \begin{algorithmic}[1]
        \tiny
        \STATE {\bfseries Input:} Number of particles $N$, ESS threshold $\tau_{\text{ESS}}$, initial $\vtheta_0^{(i)}$, $s_0^{(i)}\sim\pi^{(i)}$, $\omega_0^{(i)}\gets 1/N$, batch size $B>1$, max number of states before prune \texttt{MAX\_STATES}
        \FOR{$t < T$}
            \FOR{each particle $i$}
                \STATE Predict $\pred^{(i)}_{t+1:t+B}=\pred_{s_{t}^{(i)},\,t+1:t+B\mid t}$
                \STATE \textcolor{snsred}{$\vw_{l,t}^{(i)}\gets [W(\vy_{t+b},\hat{\vy}_{l,t+b\mid t})]_{b=1}^B$}
                \STATE \textcolor{snsred}{$\omega_{t+B}^{(i)}\gets P(\vy_{t+1:t+B}\mid \dpp_t^{(i)},\lgp_t^{(i)},\{\vw_{l,t}^{(i)}\})$}
            \ENDFOR

            \IF{ESS $\le \tau_{\text{ESS}}$}
                \STATE Resample $(\dpp_t^{(i)},\lgp_t^{(i)},\vpsi_t^{(i)},\vN_t^{(i)})\sim\text{Mult}(\{\omega_{t+B}^{(i)}\})$
                \STATE Reset $\omega_{t+B}^{(i)}\gets 1/N$
            \ELSE
                \STATE Normalize $\omega_{t+B}^{(i)}\gets \omega_{t+B}^{(i)}/\sum_j\omega_{t+B}^{(j)}$
            \ENDIF

            \FOR{each particle $i$}
                \STATE $\hat{s}_{1:t+B}^{(i)}\gets \text{sample from \eqref{form:batched_posterior_pred}}$
                \IF{$s_{t+B}^{(i)}$ is a new state}
                    \STATE $L_{t+B}^{(i)}\gets L_t^{(i)}+1$
                \ENDIF
                \STATE $\vN_{t+B}^{(i)}\gets \vN_t^{(i)}+\text{counts}(\hat{s}_{t+1:t+B}^{(i)})$
                \STATE \textcolor{snsred}{Prune and update (see Algorithm \ref{alg:prune_hard})}
                \FOR{$b=1$ {\bfseries to} $B$}
                    \STATE $\vM_{t+b}^{(i)}\sim\text{Antoniak}(\vN_{t+B}^{(i)},\hat{\alpha}_{t+b-1}^{(i)},\hat{\vbeta}_{t+b-1}^{(i)})$
                    \IF{\textcolor{snsred}{$b>1$}}
                        \STATE \textcolor{snsred}{$m_{j,j,t+b}^{(i)}\gets 0\quad\forall j$}
                    \ENDIF
                    \STATE $\dpp_{t+b}^{(i)}\gets \texttt{update\_struct}(\dpp_{t+b-1}^{(i)},\vM_{t+b}^{(i)})$
                    \STATE \textcolor{snsgreen}{$\lgp_{t+b}^{(i)}\gets \texttt{WoLF\_update}(\lgp_{t+b-1}^{(i)},\,\vy_{t+b},\,w^{(i)}_{\hat{s}_{t+b},t+b\mid t})$}
                \ENDFOR
            \ENDFOR

            \STATE $t\gets t+B$
        \ENDFOR

        \STATE {\bfseries Return:} $\{\pred^{(i)}_{1:T},\,\hat{s}_{1:T}^{(i)},\,\lgp_{1:T}^{(i)},\,\dpp_{1:T}^{(i)},\,\omega_T^{(i)}\}_{i=1}^N$
    \end{algorithmic}
\end{algorithm}

\begin{figure*}[h]
    \centering
    \begin{tikzpicture}[node distance=2.5mm and 12mm, font=\scriptsize]
    
    \node[square] (agam) {\textcolor{red}{$\, \Gamma(1, 1)$ }};
    \node[square, below=of agam] (aalph) {\textcolor{red}{$\, \Gamma(1, 1)$ }};
    
    \node[state, right=of agam] (gamma) {$\gamma$};
    \node[state, below=of gamma] (alpha) {$\alpha$};
    \node[square, below=of aalph] (mu0) {\textcolor{red}{$\vmu_0$}};
    \node[square, below=3pt of mu0] (Sigma0) {\textcolor{red}{$\vI_m$}};

    \draw[->] (agam) -- (gamma);
    \draw[->] (aalph) -- (alpha);
    
    \node[state, right=of gamma] (beta) {$\bbeta$};
    \draw[->] (gamma) -- node[above, font=\scriptsize]{${\rm SB}(\gamma)$} (beta);
    
    \node[state, right=of alpha] (pi) {$\bpi_\ell$};
    \draw[->] (beta) -- node[right, font=\scriptsize]{${\rm Dir}(\alpha \bbeta)$} (pi);
    \draw[->] (alpha) -- (pi);
    
    \node[state, below=of pi] (theta) {$\vtheta_\ell$};
    \node[below right=0.5pt and 0.01pt of theta, font=\scriptsize] (states list) {$\ell=1,2,...$};
    \draw[->] (mu0) --  (theta);
    \draw[->] (Sigma0) -- node[below=0.2pt, font=\scriptsize]{$\Norm(\vmu_0, \vI_m)$} (theta);

    \begin{scope}[on background layer]
        \node[
            fill=brown!20, 
            fit=(pi) (theta), 
            inner sep=2pt,
            rounded corners=5pt
        ] (plate_local) {};
    \end{scope}
    
    
    \node[state, below right=1.5mm and 18mm of pi] (s1) {$s_1$};
    \node[state, right=10pt of s1] (s2) {$s_2$};
    \node[right=10pt of s2, font=\scriptsize] (dots_s1) {...};
    \node[state, right=10pt of dots_s1] (sB) {$s_B$};
    \node[state, right=10pt of sB] (sB+1) {$s_{B+1}$};
    \node[right=10pt of sB+1, font=\scriptsize] (dots_s2) {...};
    \node[right=of sB+1] (eps_text) {iHMM states};
    
    \draw[->] (s1) -- (s2);
    \draw[->] (s2) -- (dots_s1);
    \draw[->] (dots_s1) -- (sB);
    \draw[->] (sB) -- (sB+1);
    \draw[->] (sB+1) -- (dots_s2);
    \draw[yshift=6pt, decorate, decoration={brace, amplitude=8pt}]
    ($(s1.north west)+(0,12pt)$) -- ($(sB.north east)+(0,12pt)$)
    node[midway, above=8pt, font=\scriptsize, ] (psiB) {$s_{1:B}$};
    
    \node[state, below=of s1] (e1) {$w_{1}$};
    \node[state, below=of s2] (e2) {$w_{2}$};
    \node[right=10pt of e2, font=\scriptsize] (dots_e1) {...};
    \node[state, below=of sB] (eB) {$w_{B}$};
    \node[state, below=of sB+1] (eB+1) {$w_{B+1}$};
    \node[right=10pt of eB+1, font=\scriptsize] (dots_e2) {...};
    \node[right=of eB+1] (eps_text) {Weights};
    
    \node[
        below right=3.0mm and 0.5mm of s2, 
        font=\scriptsize,
        text=blue!40
    ] (text_batch1) {Batch 1};
    \node[
        below right=3.0mm and 0.5mm of sB+1, 
        font=\scriptsize,
        text=blue!40
    ] (text_batch2) {Batch 2};
    
    \draw[->] (s1) -- (e1);
    \draw[->] (s2) -- (e2);
    \draw[->] (sB) -- (eB);
    \draw[->] (sB+1) -- (eB+1);
    
    \node[obs, below=of e1] (y1) {$\vy_1$};
    \node[obs, below=of e2] (y2) {$\vy_2$};
    \node[right=10pt of y2, font=\scriptsize] (dots_y1) {...};
    \node[obs, below=of eB] (yB) {$\vy_B$};
    \node[obs, below=of eB+1] (yB+1) {$\vy_{B+1}$};
    \node[right=10pt of yB+1, font=\scriptsize] (dots_y2) {...};
    \node[right=of yB+1] (obs_text) {Observations};
    
    \draw[->] (e1) -- (y1);
    \draw[->] (e2) -- (y2);
    \draw[->] (eB) -- (yB);
    \draw[->] (eB+1) -- (yB+1);
    
    \begin{scope}[on background layer]
      \node[
          fill=blue!20,
          fill opacity=0.3,
          rounded corners=5pt,
          inner sep=4pt,
          fit=(s1) (sB) (e1) (eB) (text_batch1),
          name=batch1
      ] {};
    \end{scope}
    \begin{scope}[on background layer]
      \node[
          fill=blue!20,
          fill opacity=0.3,
          rounded corners=5pt,
          inner sep=4pt,
          fit=(sB+1) (eB+1) (text_batch2),
          name=batch2
      ] {};
    \end{scope}

    \draw[->] (pi) .. controls +(0:10mm) and +(90:6mm) .. (s1);
    \draw[->] (pi) .. controls +(0:10mm) and +(90:6mm) .. (s2);
    \draw[->] (pi) .. controls +(0:10mm) and +(90:6mm) .. (sB);
    \draw[->] (pi) .. controls +(0:10mm) and +(90:6mm) .. (sB+1);

    \draw[->] (theta) .. controls +(0:10mm) and +(100:6mm) .. (e1);
    \draw[->] (theta) .. controls +(0:10mm) and +(100:6mm) .. (e2);
    \draw[->] (theta) .. controls +(0:10mm) and +(100:6mm) .. (eB);
    \draw[->] (theta) .. controls +(0:10mm) and +(100:6mm) .. (eB+1);

    \draw[->] (theta) 
        to[out=-90, in=180, looseness=1.0] ($(theta)+(1cm,-1cm)$)
        to[out=0, in=100, looseness=1.0] (y1);
    \draw[->] (theta) 
        to[out=-90, in=180, looseness=1.0] ($(theta)+(1cm,-1cm)$)
        to[out=0, in=130, looseness=1.0] (y2);
    \draw[->] (theta) 
        to[out=-90, in=180, looseness=1.0] ($(theta)+(1cm,-1cm)$)
        to[out=0, in=150, looseness=1.0] (yB);
    \draw[->] (theta) 
        to[out=-90, in=180, looseness=1.0] ($(theta)+(1cm,-1cm)$)
        to[out=0, in=150, looseness=1.0] (yB+1);
\end{tikzpicture}
    \vspace{-18pt}
    \caption{
    Bayesian architecture of the \batched. 
    Variables in a circle are updated during inference. 
    \textcolor{red}{Red} variables are fixed hyperparameters. 
    For instance, the priors of $\gamma$ and $\alpha$ are both uninformed $\Gamma(1, 1)$ for all our experiments. 
    }
    \label{fig:batched_bayesian_network}
\end{figure*}
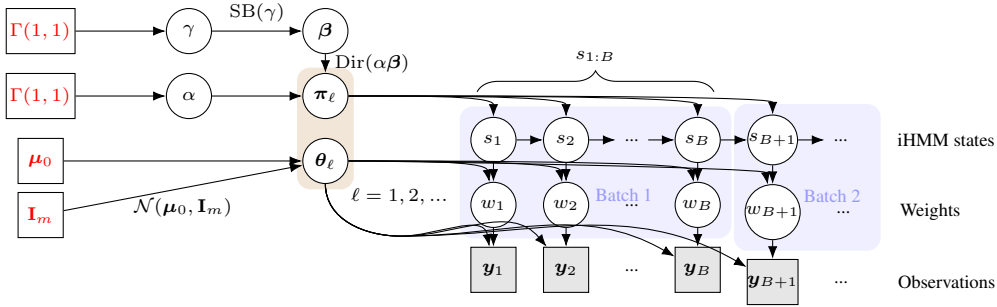

\subsection{Computational efficiency}

Beyond robustness, the batched mechanism yields substantial computational benefits.
In a standard online iHMM, 
transitions are permitted within a batch of size $B$, the number of admissible paths grows exponentially in $B$, which is infeasible in online settings \citep{lee2006study}.

The degenerate sticky HDP prior introduced in Section~\ref{sec:doubly_robust} approach avoids this complexity by enforcing intra-batch state persistence.
As a result, state sampling is required only once per batch.
Proposition~\ref{prop:batch_sampling_speedup} formalises the resulting computational simplifications.
In particular, the transition counts are updated only once per batch, while other structural parameters are resampled conditionally.
The full algorithm (Algorithm~\ref{alg:online_iHMM_batched}) and proof (Appendix~\ref{proof:batch_sampling_speedup}) are in the appendix.

\paragraph{Scalability.}
In long data streams, the number of instantiated states may grow unbounded.
To ensure scalability, we cap the number of active states to a fixed maximum and prune stale regimes using a heuristic based on usage frequency and recency.
When a state is pruned, it is permanently removed from the bookkeeping counts and the global state weights.
Formal details and proofs are provided in Algorithm~\ref{alg:prune_hard} and Proposition~\ref{prop:prune_hard_dirichlet} in the appendix.

\section{Experiments}

We evaluate the proposed \batched model on four datasets covering online forecasting and segmentation tasks.
Our primary objective is to assess one-step-ahead predictive accuracy, measured by root mean squared error (RMSE); results are summarised in Table~\ref{tab:forecast_results}.
We additionally evaluate regime segmentation performance using standard detection metrics (Table~\ref{tab:cp_stats}).
For simplicity, we initialise $\vtheta \sim \Norm(\vmu_0, \vI_m)$, where $\vI_m \in \Real^{m \times m}$ is the identity.
Initially, $\vmu_{s_t}$ samples from the same prior as $\vtheta$, whereas
$\vSigma_{s_t} = \vSigma_{(0)}$ is a known initial covariance matrix.
We also initialise $\hat{\alpha_0}$ and $\hat{\gamma}_0$ with Gamma priors as in \citet{escobar1995bayesian} to allow conjugate posterior sampling, which we further simplify with an uninformative initialisation, i.e., $\hat{\alpha}_0, \hat{\gamma_0} \sim \Gam(1, 1)$.

Across experiments, we compare three online iHMM variants with increasing levels of robustness:
(i) the original HDP-iHMM (\vanilla),
(ii) an iHMM with observation-space robustness via WoLF updates (\wolf),
and (iii) our batched state-space robust variant (\batched).
We also include two baselines: the offline beam-sampling iHMM (\beam) \citep{van2008beam}, and Bayesian online changepoint detection (\bocd), which enforces a left-to-right, non-revisiting regime structure.
For segmentation experiments, we further compare against the robust BOCD method of \citet{altamirano2023robust} (\dsmgauss) and the use of student-t posterior predictives with LGs of unknown-variance (\unknownvar).

Table \ref{tab:hyperparams} in Appendix~\ref{app:hyperparams} summarises the hyperparameters, which we optimise via \href{https://github.com/bayesian-optimization/BayesianOptimization}{\texttt{bayesian-optimization}} on a training partition \citep{BayesOpt, garrido2020dealing}. 
In particular, the choice of $B$ is application-dependent and fixed a priori as a hyperparameter.
For details on fitted hyperparameters, see in Table~\ref{tab:fitted_hyperparams} in the Appendix.
Each experiment is repeated 100 times using the same hyperparameters. 
We implement the models in \hyperlink{https://docs.jax.dev/en/latest/notebooks/thinking_in_jax.html}{JAX}, and run the experiments on \texttt{NVIDIA GeForce RTX 3090} with \texttt{256GB RAM}.
For \beam, we port the \hyperlink{https://mloss.org/revision/view/291/}{MATLAB code} \citep{gae09} and run for 1000 iterations.

\begin{table}[h]
    \centering
    \scriptsize
    \begin{tabular}{llll}
    \toprule
    Model & Synthetic & Electricity & OFI \\
    \hline
    \batched  & \textbf{46.1$\pm$0.00301} & \textbf{0.47$\pm$0.04}  & \textbf{0.616$\pm$0.082} \\
    \wolf     & 103.8$\pm 0.01155$        & 0.63$\pm$0.03     & 0.623$\pm$0.089 \\
    \vanilla  & 101.7$\pm 0.02636$        & 0.57$\pm$0.03     & 0.620$\pm$0.080\\
    \hline
    \beam     & 2.9 & 0.32 & 0.552 \\
    \bocd     & $123.12 \pm 0.01365$      & $0.80 \pm 0.11$   & 0.733\\
    \bottomrule
    \end{tabular}
    \caption{
    One-step ahead RMSE (Mean $\pm$ stdev) over 100 runs. 
    The most precise online model is bolded. 
    }
    \label{tab:forecast_results}
\end{table}

\vspace{-10pt}
\paragraph{Synthetic data: Online linear regression with regime changes.}
We construct a synthetic online regression task with regime switching and extreme outliers to evaluate robustness in high-dimensional settings.
The data consist of 2500 observations with a scalar response driven by a $d=100$-dimensional, regime-dependent parameter vector.
This setup reflects applications such as factor models in finance and online continual learning.

The DGP has three latent regimes with strong self-persistence (99.5\% self-transition probability).
Observations follow a linear model with heavy-tailed noise, and with probability 1\% are replaced by extreme outliers sampled uniformly from $[-600,600]$.
Although the response is univariate, the regression depends on a high-dimensional feature vector, making the task particularly sensitive to spurious regime creation.

Forecasting performance is reported under the ``Synthetic'' column of Table~\ref{tab:forecast_results}.
Figure~\ref{fig:toy_linear} visualises rolling RMSE and inferred MAP states.
\batched rapidly stabilises after a short calibration period, recovers the true number of regimes, and achieves forecasting accuracy comparable to the offline beam sampler.
In contrast, both \vanilla and \wolf exhibit severe state fragmentation, creating more than 30 spurious regimes driven by outliers.
This instability leads to pronounced RMSE spikes at outlier locations and prevents convergence to the true regimes.

State stability is critical in this setting: by suppressing outlier-driven regime creation, \batched maintains consistent state assignments and yields substantially more accurate predictions.
While \beam also achieves low RMSE, it creates additional states that are primarily assigned to outliers, is not robust to extreme observations, and is offline (see lower panel of Figure~\ref{fig:toy_linear}).

\begin{figure}[t]
    \centering
    \includegraphics[width=1\linewidth]{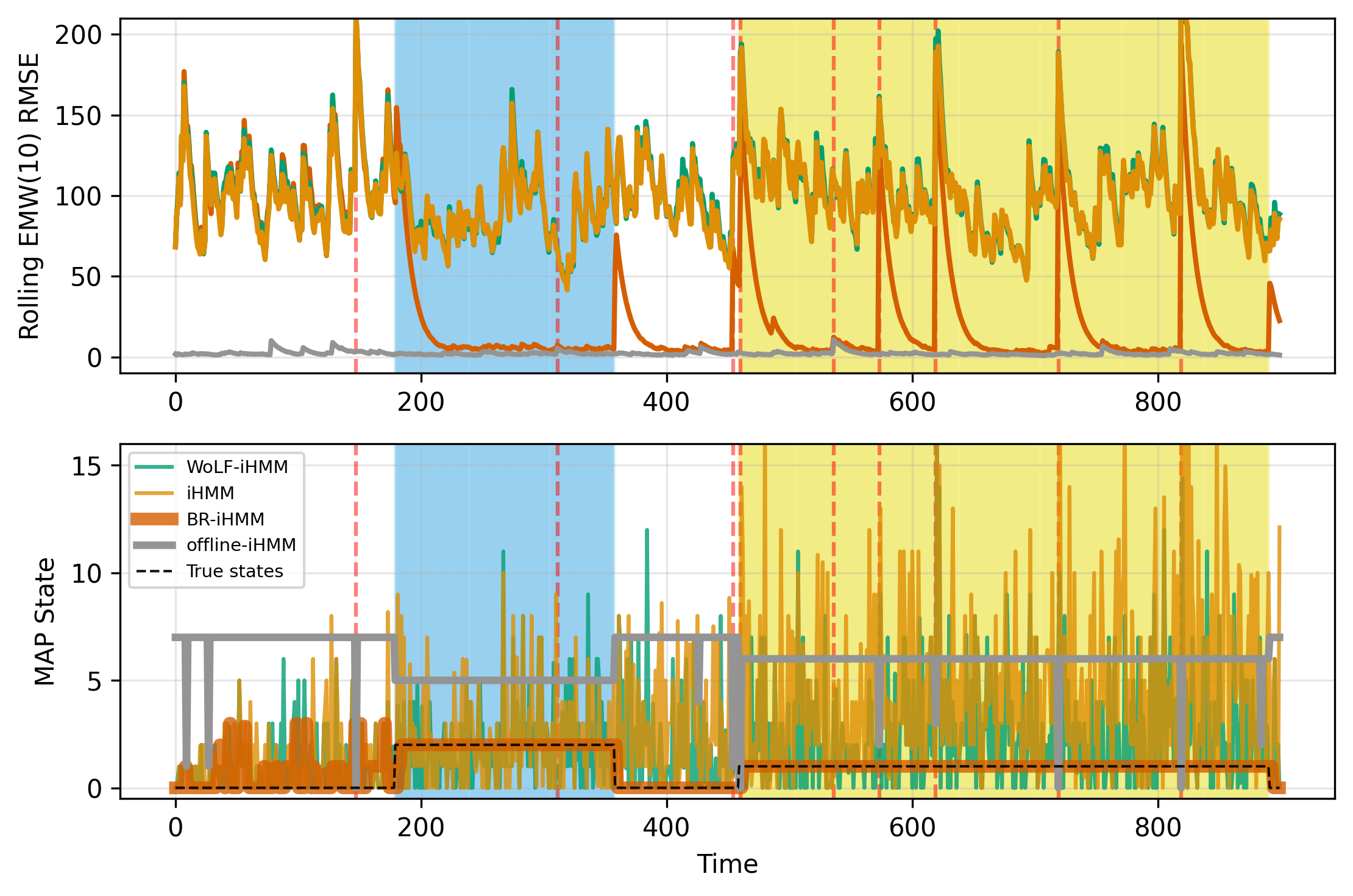}
    \caption{\textit{Synthetic linear data.} Outliers are marked by \textcolor{snsred}{\textbf{red}} vertical dashes. \textcolor{snsorange}{\textbf{Yellow}} and \textcolor{snsblue}{\textbf{blue}} regions correspond to regimes 2 and 3. 
    First subplot tracks rolling RMSE of mean predictions across the 100 runs. 
    Bottom plot shows average MAP state. 
    First 900 predictions are plotted. 
    The full Figure \ref{fig:app_toy_linear} is in the appendix.}
    \label{fig:toy_linear}
\end{figure}

\paragraph{Hourly electricity demand.}
We evaluate the proposed methods on a real-world hourly electricity demand dataset from \citet{ElectricityData}, spanning 2017–2020 and covering both pre-pandemic and pandemic periods.
The task is one-step-ahead forecasting of electricity load demand (\texttt{Load (kW)}), using weather-related covariates.
The dataset contains 31\,912 observations; hyperparameters are optimised on the first 12\,000 points.

We model electricity demand using a regime-dependent LG regression (see \eqref{form:electricity_model} in the appendix).
Figure~\ref{fig:electricity} compares rolling RMSE (top panel) and inferred latent regimes (bottom panel) for \batched, \vanilla, and \wolf. 
The top panel shows that \batched achieves uniformly lower rolling RMSE, with the largest improvements occurring at periods of elevated volatility. 
These gains are consistent with the results reported in the ``Electricity'' column of Table~\ref{tab:forecast_results}.
In the bottom panel, \batched identifies regime transitions beginning in March 2020, aligning with the onset of COVID-19. 
In contrast, both \vanilla and \wolf predominantly remain in a single regime over this period, indicating limited sensitivity to structural change. 
Together, these results show batching improves predictive accuracy and the inference of latent structures.

\paragraph{Order flow imbalance.}
We evaluate the proposed methods on a limit order book (LOB) dataset for Microsoft (\texttt{MSFT}) from March 2020, and forecast order flow imbalance (OFI) in a prequential setting.
We replicate the experimental setup of \citet{tsaknaki2025online}, where trades are aggregated into fixed-size buckets and scalar observations $y_t$ are produced by summing signed volumes within the partition.
The procedure used to construct the observations from the LOB and the DGP we assume is in Appendix~\ref{app:ofi}.

Figure~\ref{fig:ofi_rmse} reports cumulative RMSE for one-step-ahead predictions.
In this setting, hyperparameter optimisation selects a batch size of $B=1$ for \batched, effectively recovering the \wolf variant.
This indicates that larger batch sizes degrade performance by forcing consecutive minutes into the same latent state, while regime changes in \eqref{form:ofi_dgp} can occur rapidly and are not primarily driven by outliers.

\begin{figure}[h]
    \centering
    \includegraphics[width=1.\linewidth]{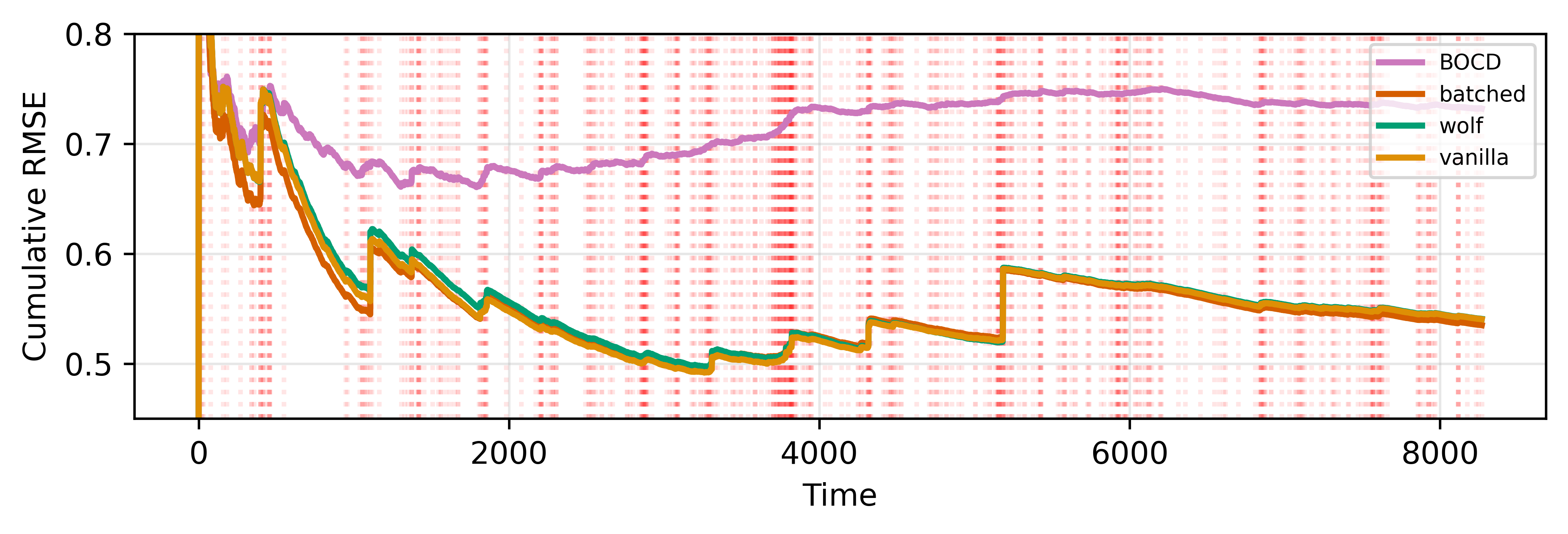}
    \caption{
    \textit{Cumulative RMSE for OFI predictions.}
    \beam is not plotted to allow zooming into the difference between the online models. 
    Vertical dotted red lines indicate an outlier (defined as outside 1.5 $\times$ inter-quartile range).}
    \label{fig:ofi_rmse}
\end{figure}

These results highlight an important property of \batched: batching is not imposed indiscriminately.
When state persistence is inappropriate, the method adapts by selecting $B=1$, avoiding unnecessary smoothing.
Quantitative results are reported under the ``OFI'' column of Table~\ref{tab:forecast_results}.

\paragraph{Segmentation: Well-log.}
The well-log dataset consists of 4050 measurements and is first studied by \citet{oruanaidh1996numerical};
it is a popular benchmark for inference methods on non-stationary data \citep{fearnhead2003line, fearnhead2007bocd}.
Studies usually remove outliers caused by transient geological events, but robustified methods such as ours and \citet{altamirano2023robust} leave them in. 
We assume a trivial i.i.d. Gaussian--Gaussian emission. 
Hyperparameters are trained on the first 1700 datum. 
Unlike other experiments, we use a lookahead mean absolute error objective to account for outliers preceded by gradual deviations rather than isolated spikes.

Simulations are plotted in Figure \ref{fig:welllog}. 
\batched is clearly more resistant to the spikes between observations 1000-1500. 
The predictive advantage of reverting to a previously encountered state manifests at short regimes, such as those near $t=2500$. 
In contrast, \dsmgauss requires a recalibration period.
In MAP state, \vanilla and \wolf both exhibit rapid state-switching in regions that were clearly stationary. 
\dsmgauss was the most stable, with the exception of missing the CP at $t=1500$, whereas all iHMMs correctly detected a new regime. 
\beam predictions were closest to the outliers, showing even offline models are not outlier-robust. 
Furthermore, a popular tactic in robust-statistics is to replace the likelihood function or a posterior predictive with heavy-tailed distributions which is more lenient to outliers.
We demonstrate with \unknownvar that such tactic does not prevent the outlier-driven state-creation pathology stated in Theorem~\ref{th:wolf_not_enough}.

\paragraph{Runtimes.} 
Table \ref{tab:runtimes} in Appendix~\ref{app:experiments} summarises the average runtime of each model.
\batched is faster as stated in Proposition \ref{prop:batch_sampling_speedup}. 
\batched takes the least time to compute on equal machinery against \vanilla and \wolf. 
The speed-up is significant for the electricity exercise.

\paragraph{Detection delay (DD).}
The batched mechanism inherently means that there is a delay in detecting CPs if the CP is intra-batch. 
A CP may be assigned to the previous state under batching, producing incorrect predictions for the mismatched states.
We evaluate the DD empirically with our synthetic data by comparing the positive predictive value (PPV), true positive rate (TPR), and DD (formulas in Appendix \ref{sec:regime_detection}). 
Table \ref{tab:cp_stats} shows \batched is more accurate in regime detection at the expense of a small DD.
\wolf and \vanilla outperform \beam with smaller DD due to rapid state transitions.

\begin{table}[h]
    \centering
    \scriptsize
    \begin{tabular}{llll}
    \toprule
    Model & \textbf{PPV} & \textbf{TPR} & \textbf{Delay} \\
    \midrule
    \batched & $\textbf{0.963} \pm \textbf{0.001}$ & $\textbf{0.996} \pm \textbf{0.001}$ & $3.4 \pm 0.0$ \\
    \wolf    & $0.533 \pm 0.002$ & $0.344 \pm 0.002$ & $\textbf{0.0} \pm \textbf{0.0}$ \\
    \vanilla & $0.551 \pm 0.002$ & $0.363 \pm 0.008$ & $0.2 \pm 0.0$ \\
    \hline
    \beam    & $0.996$           & $0.997$           & $0.0$  \\    
    \bottomrule
    \end{tabular}
    \caption{
    Values are averaged over 100 runs for online models. 
    Higher PPV and TPR is better, lower DD is better. 
    }
    \label{tab:cp_stats}
\end{table}

Furthermore, Figures~\ref{fig:diff_batch_sizes_welllog} and \ref{fig:diff_batch_sizes} confirm the intuition that larger batch sizes increase detection delay as the CP has larger chance to fall within a batch.
\section{Limitations}
Our approach has several limitations. 
First, the batch size $B$ is fixed a priori and dataset-dependent, motivating future work on adaptive or data-driven selection. 
Second, the theoretical guarantees rely on LG emissions to obtain closed-form PIF expressions, the analysis to more general emission models remains open. 
Third, while bounded PIF ensures stability of the posterior under contamination, its precise relationship to predictive performance is not yet characterised. 
Finally, we do not provide an analytical characterisation of the adaptivity–robustness trade-off induced by $B$, and instead rely on empirical evidence to illustrate this effect.
A more detailed discussion on the limitations is in Appendix~\ref{app:limitations},
where we also propose possible treatments to infer $B$ online, and provide general statements to bounding PIFs for non-LG distributions.

\section{Conclusion}
We introduced a doubly-robust online iHMM that controls sensitivity to outliers and remains efficient for online prequential settings.
By integrating generalised Bayes updates in the observation model with batched state-inference, we showed that robustness at either level alone is insufficient to ensure bounded joint influence.
Our contribution is not the introduction of new components, but their principled integration, we show that combining standard tools (WoLF updates, HDP priors, and particle learning) in this specific manner is necessary to achieve bounded joint PIF.
The superior predictive performance ability of our \batched is demonstrated across a wide range of forecasting exercises.
We find that our model is especially advantageous when the exogenous input is high-dimensional.

\section*{Impact statement}
This paper presents work whose goal is to advance the field of machine learning. There are many potential societal consequences of our work, none of which we feel must be specifically highlighted here

\bibliography{export}
\bibliographystyle{icml2026}

\newpage
\appendix
\onecolumn



\renewcommand{\thefigure}{\thesection.\arabic{figure}}
\renewcommand{\thetable}{\thesection.\arabic{table}}

%
%





%

%

\onecolumn




\begin{appendices}

\section{Notation}
We denote variables belonging to the \textit{i}-th particle with the superscript $(\cdot)^{(i)}$.
Furthermore, for any time-dependent matrix $\vX_t$, such that $[\vX_t]_{ij} = x_{i,j,t}$, we introduce the shorthand
\begin{equation}
    \begin{aligned}
        x_{i, \cdot, t} &= \sum_{j} x_{i, j, t}, \\
        x_{\cdot, j, t} &= \sum_{i} x_{i, j, t}, \\
        x_{\cdot, \cdot, t} &= \sum_{i, j} x_{i, j, t},
    \end{aligned}
\end{equation}
as the marginal sum of the $i$-th row, the $j$-th column, and the sum of all elements of $\vX_t$ respectively.
For any time-independent matrix $\vX$ with elements $[\vX]_{ij} = x_{i, j}$, the marginal sums are defined similarly without the time index.
Let $\lambda_{\max}(\vX)$ be a function that maps a square matrix $\vX$ to its largest eigenvalue, and $\det \vX$ be its determinant.
Let $\vA, \vB$ be two square matrices. 
We write $\vA \succ 0$ if $\vA$ is a positive definite, and $\vA \succeq 0$ for positive semi-definiteness. 
The relation $\succ$ and $\succeq$ is a non-strict partial order that is asymmetric, transitive, and reflexive.
If $\vA - \vB$ is positive definite, then $\vA \succ \vB$, and similarly $\vA \succeq \vB$ for the positive semi-definite case. 
We denote $\indic(\cdot)$ as the indicator function which evaluates to 1 if its argument is true, and 0 otherwise. 

Furthermore, as per \citet{duran2024bone}, we adopt the notational shorthand $\vpsi_{t} = s_{1:t} = (\vpsi_{t-1}, t)$ for convenience.
In the supplementary material, we extend the double subscript notation $\ell, t$ to distinguish between the value of a regime-specific estimate of state $\ell \in \{1, ..., t\}$ conditioned on different data. 
Indeed, we need to distinguish the value at a previous time $t-1$ and its updated posterior value at time $t$ when we prove results that reference the same quantity before and after a posterior update.

\paragraph{Model assumptions}
We now reiterate the assumptions of our linear-Gaussian (LG) setup.
For any state $\ell$, we assume a positive-definite initial covariance matrix $\vSigma_{\ell, 0} = \vSigma_{(0)} \succ 0$. 
In summary, the LG model is defined as
\begin{equation*}
    \begin{aligned}
        \obs_t &= \emat_t\vtheta_t + \vr_t \,.
    \end{aligned}
\end{equation*}
where $\Var(\vr_t) = \vR_t$, and the matrices $\vR_t \in \Real^{d \times d}, \emat_t \in \Real^{d \times m}$ are real-valued and finite. 
The feature transformation of the input $\vx_t \in \mathcal{X}$ given the feature map $f(\vx_t)$ is morally the same object as $\emat_t$, but we introduce the latter notation to make clear it is a matrix. 

In this work, we refer to $\Real^m$ as the parameter space, such that emission parameters $\vtheta$ takes values in $\Real^m$.
We collect the sufficient statistics statistics at time $t$ for the observation posterior
$\lgp_t = \{\lgp_{\ell, t}\}_\ell$ where $ \lgp_{\ell, t} = (\vmu_{\ell, t}, \vSigma_{\ell, t})_\ell$, 
and the path posterior $\dpp_t = \{\hat{\alpha}_t, \hat{\vbeta}_t, \hat{\gamma}_t, L_t, \vN_t\}$ respectively.

\raggedbottom
\pagebreak
\begin{table}[H]
    \centering
    \small
    \begin{tabular}{p{6.5cm}p{9.6cm}}
    \toprule
    \textbf{Symbol} & \textbf{Description} \\
    \hline
    $\normtwo{\cdot}$ & $\ell_2$ norm. \\
    $\left\lVert \cdot \right\rVert_{\vR_t}$ & Mahalanobis norm with respect to $\vR_t$. \\
    \hline
    $\Norm(\vy \mid \vmu, \vSigma)$ & Multivariate Gaussian density evaluated at $\vy$ with mean $\vmu$ and covariance $\vSigma$. \\
    $\DP(\gamma, H)$ & Dirichlet process with base measure $H$ and $\gamma > 0$ \\
    $\Dir(\vpi \mid \vbeta)$ & Dirichlet density evaluated at $\vpi$ with base prior $\vbeta$. \\
    $SB(\gamma)$ & Stick breaking distribution with concentration $\gamma$ \\
    $\Mult(\vn \mid \vpi)$ & Multinomial density evaluated at $\vn \in \mathbb{N}^m$ with discrete likelihood $\vpi \in [0, 1]^m$. \\
    $\delta_{a}$            & Dirac measure concentrated at $a$, unit probability mass on $a$ and zero elsewhere.\\
    $\vI_d \in \Real^{d \times d}$  & $d$-dimensional identity. \\
    
    \hline
    
    $y_t \in \Real$         & Scalar observations at time $t$. \\
    $\vy_t \in \Real^d$     & Observations at time $t$ with dimension $d \in \mathbb{N}$. \\
    $\vx_t \in \mathcal{X}$ & Inputs at time $t$. \\
    $\vu_{1:t} = \{\vu_1,..., \vu_t\}$  & Sequence of vectors ordered sequentially through time. \\
    $\mathcal{D}_{t} = (\vx_t, \vy_t)$  & Datapoint at time $t$.\\
    
    \hline
    
    $\ell \in \mathbb{N}$   & Indicator for state index. \\
    $(\Omega, \mathcal{A})$ & Measurable space with universe $\Omega$ and $\sigma$-algebra $\mathcal{A}$. \\
    $H^*$                   & Latent probability measure on $(\Omega, \mathcal{A})$ in the iHMM's DGP. \\
    $\gamma > 0$          & Latent concentration parameter for global prior's Dirichlet Process in the iHMM's DGP. \\
    $\vxi = (\xi_1, \xi_2,...)$     & Infinite-dimensional stick weights sampled from ${\rm SB}(\gamma)$. \\
    $\vXi_t = \{A_1, ..., A_t, A_{t+1}\}$       & Partition on $\Real^d$ to obtain the finite-dimensional approximation $\vbeta_t$ \\
    $\vbeta_t = (\beta_1, \beta_2,..., \beta_{t+1})$       & Latent global prior over the distribution of countably infinite states in the iHMM's DGP. \\
    $\alpha > 0$          & Latent concentration parameter for Dirichlet process of state-specific transition probabilities in the iHMM's DGP. \\
    $\vpi_\ell =(\vpi_{\ell1}, \vpi_{\ell2},...)$    & Latent transition probabilities from state $l$ in the iHMM's DGP. \\
    $\vtheta_{\ell} \in \Real^m$            & Random samples from $H$, parametrising the observation model corresponding to state $l$ in the iHMM's DGP. \\
    $\vTheta = \{\vtheta_1, \vtheta_2,...\}$      & Collection of all parameters in the iHMM's DGP observation models. \\
    $s_t \in \{1,...,t\}$ & Latent state at time $t$ in the iHMM's DGP. \\
    $s_{1:t} = (s_1,..., s_t)$   & Latent state trajectory up until time $t$ with base assignment $s_1 = 1$ (random variable). \\
    \hline
    $f: \mathcal{X} \rightarrow \Real^{d \times m}$ & feature transformation that takes in values from the input space $\mathcal{X}$ and maps to a linear projection for $\vtheta \in \Real^m$. \\
    $\emat_t = f(\vx_t) \in \Real^{d \times m}$       & Projection matrix in the SPN model, is morally the same object as $f(\vx_t)$. \\
    $\vR_t \in \Real^{d \times d}$       & Covariance matrix for observation noise in the SPN model. \\
    $\vr_t \in \Real^d$                  & Zero-mean observation noise vector with known covariance $\vR_t$ in the SPN model. \\ 
    \hline
    $\vmu_{\ell, t} \in \Real^d$ & Estimated posterior mean of Gaussian observation model for state $l$ at time $t$. \\
    $\vP_{\ell, t} \in \Real^{d \times d}$  & Estimated posterior covariance of Gaussian observation model for state $l$ at time $t$. \\
    $\vu_{t' \mid t}$         & Predictive value of the quantity $\vu$ at time $t'$ given sufficient statistics at time $t$. \\
    $\hat{\vbeta}_t, \hat{\valpha}_t, \hat{\vpsi}_t$     & Posterior estimates of $\vbeta, \valpha, \vpsi_t$ at time $t$. \\
    $\vN_t = [\vN_t]_{\ell k} = n_{\ell,k,t} \in \mathbb{N}^{t \times t}$& Matrix of transition counts; $n_{\ell,k,t}$ denotes the number of transitions from state $l$ to state $k$ up to time $t$. \\
    $L_t \leq t$              & Number of unique values in $\vpsi_t$. \\
    $\dpp_t = (\vN_t, L_t, \hat{\alpha}_t, \hat{\vbeta}_t, \hat{\gamma}_t)$ & Posterior estimates and sufficient statistics of the structural parameters at time $t$ conditional on $\data_{1:t}$. \\
    $\lgp_t = \{\lgp_{\ell, t}\}_{\ell=1}^{L_t+1} = \{(\vmu_{\ell, t}, \vP_{\ell, t})\}_{\ell=1}^{L_t+1}$ & Posterior estimates of the LG emission parameters at time $t$ conditional on $\data_{1:t}$. \\
    $\vM_t \in \mathbb{N}^{t \times t}$& Auxiliary variables sampled from ${\rm Antoniak}(\vN_{t}, \alpha_{t-1}, \vbeta_{t-1})$. \\
    $B \in \mathbb{N}$        & Batch size of the robust batched iHMM. \\  
    \hline
    $N > 0$                   & Number of particles in particle learning.\\
    $\omega_t^{(i)}$          & Weight of particle $i \in [1, N]$ at time $t$. \\ 
    \hline
    $\vy_t^c \in \Real^d$                 & Contaminated and unbounded observation at time $t$. \\
    $\data_{t}^c = (\vy_t^c, \vx_t)$ & Contaminated data point at time $t$ \\
    $\data_{1:t}^c = (\vy_{1:t}^c, \vx_{1:t})$ & Contaminated data points from time $1$ to $t$ \\
    \bottomrule
    \end{tabular}
    \caption{Table of notations.}
    \label{tab:notation}
\end{table}

\section{Related work}\label{app:related-work-full}
\subsection{Online inference HMMs}
Exact posterior inference for HMMs is intractable in the online setting due to the exponential growth of latent state trajectories \citep{schaeffer2021efficient}.
Online HMMs
therefore rely on sequential Monte Carlo (SMC) methods \citep{doucet2001introduction} to approximate the posterior over latent regimes,
while propagating sufficient statistics for the observation parameters via Rao-Blackwellised particle filters \citep{murphy2001rao}.

Particle learning (PL) schemes exploit conjugate observation models to enable closed-form parameter updates within each particle \citep{carvalho2010particle}.
While computationally efficient, individual particles remain sensitive to outliers.
Prior work has explored particle-level robustification \citep{maiz2009particle, boustati2020generalised},
but posterior updates within particles can still exhibit unbounded influence, leading to outlier-sensitive state inference and predictions.

\subsection{Robustness in the state space of iHMMs}
The behaviour of iHMMs under noisy or misspecified conditions is widely studied.
Despite their flexibility in state cardinality,
classical iHMMs are prone to rapid state switching and sensitivity to anomalous observations \citep{shah2006integrating, fox2007developing, SGOURALIS20172117}.
To mitigate this behaviour, several works modify the HDP prior to encourage state persistence,
typically by increasing self-transition probabilities \citep{murphy2002hidden, fox2008hdp, johnson2012hierarchical, dewar2012inference}.

Other approaches introduce heavy-tailed observation models \citep{van2008beam},
hierarchical structures to isolate outliers \citep{betkowska2007robust, lee2006study},
or post-hoc pruning of artefact states \citep{Aguirre2024robustHSMM}.
These methods are primarily developed for offline inference and segmentation tasks,
and often break conjugacy or require numerical approximations, limiting their applicability in online particle learning settings.

\paragraph{State-space robustness via heavy-tailed distributions}
A common tactic to improve robustness in filtering tasks is to use distributions with heavy tails, a natural choice would be the Student-t distribution \citep{chen2000mixture, van2008beam}.
Whilst the Student-t distribution is robust to some extent as its heavy tails accommodate outlier observations, it does not admit a bounded PIF as well. 
In other words, given large enough outliers, we recover the same pathology illustrated in Lemmas~\ref{lem:degen_state_dist} and \ref{lem:unbounded_pif_state}.
In Section~\ref{app:student_t_experiment}, we demonstrate the behaviour of an iHMM with LG emissions and unknown variances, which admits a Student-t posterior predictive.

\paragraph{Retrospective smoothing}
Backward smoothing is a standard technique in state-space models that refines latent state estimates by incorporating future observations, typically computing smoothed distributions of the form $P(s_t \mid \vy_{1:T})$.
We emphasise that this is not possible in the online prequential setting as the algorithm produces the next $B$ predictions, then performs samples the state-paths once the next $B$ observations are collected.
In online settings, fixed-lag smoothing variants approximate 
$P(s_{t} \mid \vy_{1:t+B})$, allowing a limited lookahead window to stabilise inference and mitigate the effect of transient anomalies.
While such approaches can improve retrospective state estimation and segmentation accuracy, they are fundamentally incompatible with the prequential forecasting setting considered in this work. 
In particular, one-step-ahead predictions at time $t$ must be formed using only information available up to time $t-1$. 
In the batched setting, smoothing methods condition on future observations $\vy_{t+1:t+B}$ are not available at prediction time. 
Consequently, any gains from backward smoothing arise from delayed or retrospective inference rather than genuine online prediction. 
It is possible that retrospective smoothing may improve state allocations post-hoc and improve parametric convergence, however this requires relaxing the constraints imposed in the online-prequential setting.
Furthermore, the computational overhead to performing a backward pass per observation/batch remains a risk to computation efficiency.

Our objective is instead to ensure robustness within the filtering distribution itself, so that predictions remain stable under outliers without relying on future data.

\subsection{Observation-space robustness via Generalised Bayes}\label{app:observation_space_robustness_GB}
Robustness in the observation space has been extensively studied through Generalised Bayes (GB) inference,
particularly in Bayesian online changepoint detection and Kalman filtering.
In these settings, the likelihood is replaced by loss functions or divergences to obtain bounded-influence updates
\citep{karlgaard2015nonlinear, hooker2014bayesian, knoblauch2018doubly, boustati2020generalised, matsubara2022robust, altamirano2023robust}.

While effective, GB methods have typically been developed outside hierarchical state-space settings and therefore have seen limited integration with models such as HDP-based switching systems. 
Recent work on weighted-likelihood updates for Kalman filters demonstrates that robust GB-style updates can be incorporated into SSMs while preserving conjugacy and computational efficiency \citep{pmlr-v235-duran-martin24a}. 
This result motivates extending GB-style robustness to more general hierarchical SSMs.

\paragraph{Mixture Kalman filters}
One such method to approximate Student-t noise models with KFs is to use mixture KFs \citep{chen2000mixture}.
In mixture KFs, the filtering distribution is approximated as a finite mixture of Gaussian components, where each component corresponds to a trajectory of a latent indicator process and is propagated via a KF conditional on that trajectory.
The same structure arises naturally in LG models, where marginalising over discrete latent variables yields an exponentially growing Gaussian mixture representation of the posterior.
From this perspective, HMMs with LG emissions, such as our \batched under LG assumptions, can be interpreted as defining a (potentially infinite) mixture Kalman filter, in which each latent state sequence induces a distinct Kalman filter component.
Coupled with the PL approximation, the online \batched algorithm is effectively a mixture KF with trial distributions weighted by the LG per state scaled by the particle weight.

\paragraph{Other tempered likelihoods}
The use of tempered or weighted likelihoods in robust Bayesian inference is closely connected to classical robust estimation procedures such as iteratively reweighted least squares (IRLS) and M-estimation. 
In IRLS, parameter updates take the form of weighted least squares, where each observation is assigned a weight $w_t$ that depends on its residual $r_t = y_t - \hat{y}_t$. 
For example, Huber’s loss induces weights of the form $w_i = \psi(r_i)/r_i$, where $\psi(\cdot)$ is a bounded influence function, yielding robustness to outliers. 
This perspective aligns directly with generalised Bayesian updates using a tempered likelihood, where the standard likelihood contribution is replaced by $P(y_t \mid \theta)^{w_t}$, effectively downweighting observations with large residuals.
Under this correspondence, the implied weight satisfies the boundedness conditions listed under \eqref{form:epsilon_assumptions} as it is quadratic in the central region and constant in the tails, hence, the robustness guarantees of Theorem~\ref{th:robustness} carry over directly.

A key connection is that IRLS procedures can often be interpreted as maximum likelihood estimation under implicit heavy-tailed observation models. 
In particular, the Student-$t$ distribution admits a scale-mixture-of-normals representation,
\[
y_t \mid \lambda_t \sim \mathcal{N}(\mu, \sigma^2 / \lambda_t), \quad 
\lambda_t \sim \text{Gamma}\!\left(\frac{\nu}{2}, \frac{\nu}{2}\right),
\]
under which the marginal distribution of $y_t$ is Student-$t$. 
Conditional on $\lambda_t$, inference reduces to weighted least squares with weights $w_t = \Expect[\lambda_t \mid y_t]$, yielding an IRLS scheme. 
This establishes a well-known equivalence between IRLS, robust M-estimation, and parametric heavy-tailed models.

From this viewpoint, tempered likelihood methods can be interpreted as implicitly defining a robust observation model through their weighting function. For instance, a weighting scheme of the form
\[
w_t = \frac{1}{1 + r_t^2/c^2} , \qquad c^2 \in \Real,
\]
resembles the posterior expectation of latent precisions in student-$t$ models, suggesting an implicit correspondence with a heavy-tailed distribution. 
In our setting, the \wolf update plays exactly this role by replacing the Gaussian likelihood with a weighted likelihood, using an IMQ weight based on the Mahalanobis residual; under the LG assumption, this preserves conjugacy and yields closed-form recursive updates. 
At the level of a single state’s emission model, IRLS/Huber/Student-t noise all correspond to some form of residual-dependent reweighting, and \wolf is best viewed as the generalised-Bayes analogue of this paradigm.
Our online multiplicative weighting framework can further be interpreted as a single-pass, prequential analogue of IRLS.
Rather than iterating to convergence at each time step, we compute residuals from the prior predictive, evaluate the IMQ weight once, and perform a single re-weighted Kalman update.
This avoids the inner loop of IRLS entirely, which is essential for real-time prequential inference where revisiting past observations is not permitted.
The choice of IMQ over Huber-derived weights in this context is motivated by smoothness. 
Unlike the piecewise-defined Huber weight, which has a discontinuous derivative at its threshold and requires case-tracking at each step, the IMQ weight is everywhere smooth and preserves the closed-form structure of the conjugate Kalman update.

The key difference is that \batched is not only about downweighting outliers in the emission model. 
The main failure mode of an online iHMM is that a single outlier can also corrupt the latent-state posterior, inducing spurious regime creation or excessive switching. 
This is precisely why observation-space robustness alone is insufficient in our model (see Theorem~\ref{th:wolf_not_enough}). 
Our batched state-space mechanism addresses that second problem by delaying switching decisions and enforcing intra-batch persistence through the degenerate sticky HDP prior, which bounds the state-space influence of a single anomalous point .
Classical Student-t or IRLS-style methods do not have an analogue of this state-posterior regularisation; they are robust to residuals, but they do not prevent an outlier from changing the latent segmentation structure.


\section{Background on iHMMs}
In this section, we provide a more detailed review of the online iHMMs and present the algorithms in full. 
We recite, in detail, the online iHMM implemented via particle learning (PL) from \cite{rodriguez2011line}, then present our batched modifications to the classical online iHMM algorithm. 
A proof of algorithmic correctness is provided for our batched algorithm. 

\subsection{Dirichlet process and HDP construction}
\label{app:dp-hdp}

In this appendix, we provide the full probabilistic construction underlying the iHMM data-generating process. 

Let $(\Omega, \mathcal{F}, \mathbb{P})$ be a probability space and let $H$ be a probability measure on the measurable space $(\Real^m, \mathcal{B}(\Real^m))$, where $\Real^m$ denotes the parameter space. The probability space supports a collection of independent beta-distributed random variables
\[
v_i \sim \Beta(1,\gamma), \qquad \gamma > 0,
\]
and random vectors $\vtheta_i$ for $i \in \mathbb{N}$ such that, for any $A \in \mathcal{B}(\Real^m)$,
\[
\mathbb{P}(\{\omega \in \Omega : \vtheta_i(\omega) \in A\}) = H(A).
\]

Define $\xi_1 = v_1$ and, for $i \ge 2$,
\begin{equation}\label{eq:xi_def}
\xi_i = v_i \prod_{j=1}^{i-1} (1 - v_j).
\end{equation}
The resulting random vector $\vxi = (\xi_1, \xi_2, \ldots)$ follows a stick-breaking distribution, denoted $\vxi \sim \mathrm{SB}(\gamma)$.

Using this construction, we define a Dirichlet process (DP)
\[
G : \Omega \times \mathcal{B}(\Real^m) \to [0,1], \qquad G \sim \DP(\gamma, H),
\]
which admits the almost-sure representation
\[
G(\omega, A) = \sum_{i=1}^{\infty} \xi_i(\omega)\,\delta_{\vtheta_i(\omega)}(A),
\]
for $\omega \in \Omega$ and $A \in \mathcal{B}(\Real^m)$, where $\delta_{\vtheta}(A) = \indic_A(\vtheta)$ denotes the Dirac measure. 
The indicator $ \indic_A(\vtheta) $ is one if $\vtheta \in A$ and zero otherwise.
By construction, for each $\omega \in \Omega$, $G(\omega)$ is a probability measure on $\mathcal{B}(\Real^m)$.

Let $t \in \mathbb{N}$ and fix a finite partition
\begin{align}\label{form:xi}
    \Xi_t = (A_1,\ldots,A_{t+1})
\end{align}
of $\Real^m$ such that $A_i = \{\vtheta_i(\omega)\}$ for $i \le t$ and
\[
A_{t+1} = \Real^m \setminus \bigcup_{\ell=1}^{t} A_\ell.
\]
\begin{remark}
    At time $t$, the iHMM has $t$ active states, corresponding to the parameters specified by $A_1,..., A_t$. 
    The remaining partition $A_{t+1}$, the complement to the union of the active states, encodes all other uninstantiated states. 
    Altogether, the iHMM has $t+1$ choices as to which state is selected to produce $s_{t+1}$, hence the $t+1$-cardinality of $\Xi_{t}$.
\end{remark}
From the defining property of the Dirichlet process, the random vector
\[
\vbeta_t(\omega) = \big(G(\omega,A_1),\ldots,G(\omega,A_{t+1})\big)
\]
follows the Dirichlet distribution
\begin{equation}\label{form:beta_def}
\vbeta_t \sim \Dir\big(\gamma H(A_1),\ldots,\gamma H(A_{t+1})\big);
\end{equation}
see \citet{ferguson1973bayesian}. 
Writing $\beta_\ell := G(A_\ell)$, the realised vector
\[
\vbeta_t(\omega) = (\beta_1(\omega),\ldots,\beta_{t+1}(\omega))
\]
lies in the $t$-dimensional simplex
\[
\Delta^t = \left\{ \boldsymbol{\mathfrak{b}} \in [0,1]^{t+1} : \sum_{i=1}^{t+1} \mathfrak{b}_i = 1 \right\}.
\]
In the iHMM context, $\vbeta_t$ defines a finite approximation to the global prior over a countably infinite collection of latent states.

We next consider a collection of conditionally independent \emph{child} Dirichlet processes $G_1,\ldots,G_t$, each with precision parameter $\alpha > 0$ and base measure $G(\omega)$, such that
\[
G_\ell \sim \DP(\alpha, G(\omega)), \qquad \ell = 1,\ldots,t.
\]
For each $\ell$ and $\omega_\ell \in \Omega$, the random measure $G_\ell(\omega_\ell)$ is a probability measure on $\mathcal{B}(\Real^m)$. Using the same partition $\Xi$, define the random vectors
\[
\vpi_{\ell, t}(\omega_\ell) =
(\pi_{\ell,1,t}(\omega_\ell),
\ldots,
\pi_{\ell,t+1,t}(\omega_\ell)),
\qquad
\pi_{\ell,j,t}(\omega_\ell) = G_\ell(\omega_\ell,A_j),
\]
which satisfy
\[
\vpi_{\ell, t} \sim \Dir(\alpha \vbeta_t(\omega)).
\]
The vector $\vpi_{\ell, t}$ defines the state-specific transition probabilities for state $\ell$ in the iHMM at time $t$.

Hereafter, we omit explicit reference to the underlying outcome $\omega \in \Omega$ and write $X$ instead of $X(\omega)$ for all random variables $X$. For notational simplicity, we write $P(X(\omega))$ for the probability mass or density evaluated at the realisation $X(\omega)$, depending on whether $X$ has discrete or continuous support.

Given $\vpi_{\ell, t}$ for each $\ell$, the latent state sequence $(s_t)_{t \ge 1}$ evolves according to
\[
P(s_t = k \mid s_{t-1} = \ell) = \pi_{\ell,k,t-1}.
\]
Each state $\ell$ corresponds to the singleton partition $A_\ell = \{\vtheta_\ell\}$ and is associated with an observation parameter $\vtheta_\ell \in \Real^m$. Let
\[
\vtheta_{1:t} = \{\vtheta_1,\vtheta_2,\ldots,\vtheta_t\} \subseteq \Real^{t \times m}
\]
denote the collection of state parameters.

Given the latent state $s_t$, its associated parameter $\vtheta_{s_t}$, and known covariates $\vx_t$, the observation $\vy_t$ is generated according to the linear-Gaussian model
\[
P(\vy_t \mid \vtheta_{1:t}, s_t, \vx_t)
= \mathcal{N}\!\left(\vy_t \mid f(\vx_t)\,\vtheta_{s_t}, \vR_t\right),
\]
where $f : \mathcal{X} \to \Real^{d \times m}$ is a known feature transformation and $\vR_t \in \Real^{d \times d}$ is a known observation noise covariance.

\subsection{Inference on iHMMs}\label{sec:ihmm_inference}
We briefly mention the Bayesian inference scheme which the online iHMM relies on in the update step. 
The online iHMM is scalable because it relies on a series of conjugate prior-likelihood pairs and propagates information from a bottom-up fashion in the iHMM hierarchy. 
Recall, the sequential inference for iHMMs pertains to computing the joint posterior
\begin{align}
    P(\vtheta_{1:t}, s_{1:t} \mid \data_{1:t}) = P(\vtheta_{1:t} \mid s_{1:t}, \data_{1:t}) \, P(s_{1:t} \mid \data_{1:t})
\end{align}
where we refer to $P(\vtheta_{1:t} \mid s_{1:t}, \data_{1:t})$ as the emission posterior, and $P(s_{1:t} \mid \data_{1:t})$ the path posterior.
One can conceptualise the iHMM as separating the HDP and LG model, first sampling the HDP, then the LG model conditioned on which state is active from the previous inference step.

Again, we reintroduce the double subscript notation $\ell, t$ to distinguish between the value of a regime-specific estimate of state $\ell \in \{1, \ldots, t\}$ conditioned on $\data_{1:t}$. 

\paragraph{Sequential inference in LG models.}
With the LG model, conditioned on $s_{1:t}$, we are interested in the sequential inference of $P(\vtheta_{1:t} \mid s_{1:t}, \data_{1:t})$.
Again, factorisation into sufficient statistics is possible with
\begin{align}
    P(\vtheta_{1:t} \mid s_{1:t}, \data_{1:t})
    = \prod_{k=1}^t P(\vtheta_k \mid \lgp_{k, t}).
\end{align}
Now consider the decomposition
\begin{align*}
    P&(\vtheta_{1:t} \mid s_{1:t}, \data_{1:t}) \\ 
    &\propto 
        P(\vtheta_t \mid \vtheta_{1:t-1}, s_{1:t}, \data_{1:t}) \,
        P(\vtheta_{1:t-1} \mid s_{1:t}, \data_{1:t}) \, \\
    &\propto 
        P(\vy_t \mid \vtheta_{1:t}, s_{1:t}, \data_{1:t-1}, \vx_t) \,
        P(\vtheta_t \mid \vtheta_{1:t-1}, s_{1:t}, \data_{1:t-1}, \vx_t)
        P(\vtheta_{1:t-1} \mid s_{1:t}, \data_{1:t}) \, \\
    &=  P(\vy_t \mid \vtheta_{s_t}, \vx_t) \,
        P(\vtheta_{s_t}) \,
        \prod_{k=1}^t P(\vtheta_k \mid \lgp_{t-1, k}) \,,
\end{align*}
where the simplification in the second proportionality to $P(\vtheta_{s_t})$ is the prior of the parameter.

The update rules remain consistent to that of the LG models,
given the sufficient statistics, 
\begin{equation}\label{eq:app-kf-update}
    \begin{aligned}
        \emat_t &\gets  f(\vx_t)\\
        \vK_t &\gets \vSigma_{s_t, t-1}\,\emat_t^\intercal\,\vS_{s_t, t}^{-1},\\
        \vS_{s_{t}, t} &\gets \emat_t \vSigma_{s_{t}, t-1} \emat_t^\top + \vR_t.\\
        \vmu_{s_t, t} &\gets \vmu_{s_t, t-1} + \vK_t(\vy_t - \hat{\vy}_{s_t, t}),\\
        \vSigma_{s_t, t} &\gets \vSigma_{s_t, t-1} - \vK_t\,\vS_t\,\vK_t^\intercal.
    \end{aligned}
\end{equation}
For all $k \neq s_t$, the sufficient statistics remain the same, that is $\lgp_{k, t-1} = \lgp_{k, t}$.

Next, the induced posterior predictive of the LG model is given by
\begin{align}\label{form:obs_cond_likelihood}
    P(\vy_t \mid \lgp_{t-1}, s_t, \vx_t) 
    &= \int_{\Real^m} P(\vy_t \mid \vtheta_t, \vx_t) \, P(\vtheta \mid \lgp_{t-1}, s_t, \vx_t) \, d\vtheta \\
    &= \Norm(\vy_t \mid \hat{\vy}_{s_t, t}, \vS_{s_t, t}),
\end{align}
with
\begin{equation}
    \begin{aligned}
        \hat{\vy}_{s_t, t} &= \emat_t \vmu_{s_{t}, t-1},\\
        \vS_{s_{t}, t} &= \emat_t \vSigma_{s_{t}, t-1} \emat_t^\top + \vR_t.
    \end{aligned}
\end{equation}
and $f(\vx_t) = \emat_t$.

Furthermore, a $b$-step ahead prediction is Gaussian with mean and variance
\begin{equation}\label{form:multistep_pred}
    \begin{aligned}
        \pred_{s_t, t+b | t} &= \emat_{t+b} \vmu_{s_t, t} \\
        \vS_{s_t, t+b|t} &= 
            \emat_{t+b} \vSigma_{s_t, t} \emat_{t+b}^\top + \sum_{b'=1}^b \vR_{t+b'}  \,.
    \end{aligned}
\end{equation}

\paragraph{Sequential inference in HDPs --- the path posterior.}
Next, the path posterior \eqref{form:state_post} takes the form
\begin{equation}\label{form:cond_density}
    \begin{aligned}
        P(s_{1:t} \mid \data_{1:t}) &= P(s_{1:t} \mid \data_{1:t-1}, \data_t)  \\
            &\propto P(\vy_t \mid s_{1:t}, \data_{1:t-1}, \vx_t) \,
                P(s_{1:t} \mid \data_{1:t-1}) \\
            &= P(\vy_t \mid s_{1:t}, \data_{1:t-1}, \vx_t) \,
                P(s_t \mid s_{1:t-1}, \data_{1:t-1}) \,
                P(s_{1:t-1} \mid \data_{1:t-1}) \\
            &= P(\vy_t \mid s_t, \lgp_{t-1}, \vx_t) \,
                P(s_t \mid s_{t-1}, \dpp_{t-1}) \,
                P(s_{1:t-1} \mid \data_{1:t-1}) \,,
    \end{aligned}
\end{equation}
where the last equality indicates the sufficient statistics required to calculate the respective probabilities.
In particular,
$\dpp_t$ is constructed from $s_{1:t}$
and
$\lgp_t$ is constructed from $s_{1:t}$ and $\data_{1:t}$.

The first probability is the conditional probability of $\vy_t$ given it is assigned to state $s_t$ and is in \eqref{form:obs_cond_likelihood}.
The second probability follows from Multinomial-Dirichlet conjugacy. Consider the transition probabilities
\begin{equation*}
    \begin{aligned}
        P(s_{1:t} \mid s_{1:t-1}, \dpp_{t-1}) &= P(s_t \mid s_{t-1}, \dpp_{t-1}) \\
        &= \int_{\Delta^{t-1}} 
            P(s_t \mid s_{t-1}, \vpi_{s_{t-1}}, \dpp_{t-1}) 
            P(\vpi_{s_{t-1}} \mid s_{t-1}, \dpp_{t-1}) \, d\vpi_{s_{t-1}} \\
        &= \int_{\Delta^{t-1}} 
            \pi_{s_{t-1}, s_t} 
            P(\vpi_{s_{t-1}} \mid \hat{\alpha}_{t-1}, \hat{\vbeta}_{t-1}, \textbf{N}_{t-1}) \,d\vpi_{s_{t-1}} 
    \end{aligned}
\end{equation*}
with $\bpi_{s_{t-1}}$ marginalised out and $\Delta^{t-1} = \{\bpi : \bpi \in [0, 1]^{t}, \sum_{\ell=1}^{t} \bpi_{s_{t-1}\ell} = 1 \}$ is the $(t-1)$ dimensional simplex. 
Firstly, note that $\bpi_{s_{t-1}}$ takes values in $\Delta^{t-1}$ with density following a Dirichlet distribution parametrised by $\hat{\alpha}_{t-1}$ and $\hat{\vbeta}_{t-1}$. 
Consequently, the integral is defined over the natural measure of the simplex $\Delta^{t-1}$. 
Secondly, this integral is the expectation of $\pi_{s_{t-1},s_t}$. 
We now demonstrate that $\vpi_{s_{t-1}}$ follows a Dirichlet distribution, given the multinomial observations recorded by $\textbf{N}_{t-1}$. 
Due to Markovian dependencies, only transitions from $s_{t-1}$ are relevant; denote $[n_{s_{t-1}, \ell, t-1}]_{\ell}$ as the $s_{t-1}$-th row in $\textbf{N}_{t-1}$. 
By Dirichlet-multinomial conjugacy, 
\begin{equation}
    \begin{aligned}
        P(\vpi_{s_{t-1}} \mid \hat{\alpha}_{t-1}, \hat{\vbeta}_{t-1}, [n_{s_{t-1}, \ell, t-1}]_{\ell}) 
            &\propto P([n_{s_{t-1}, \ell, t-1}]_{\ell} \mid \bpi_{s_{t-1}} )\, P(\bpi_{s_{t-1}} \mid \hat{\alpha}_{t-1}, \hat{\vbeta}_{t-1}) \\
            &= \Mult([n_{s_{t-1}, \ell, t-1}]_{\ell} \mid \bpi_{s_{t-1}}) \Dir(\bpi_{s_{t-1}} \mid \hat{\alpha}_{t-1}\hat{\vbeta}_{t-1}) \\
            &= \frac{
                    \Gamma(\sum_{\ell=1}^t \hat{\alpha}_{t-1}\hat{\vbeta}_{\ell, t-1})
                }{
                    \sum_{\ell=1}^t \Gamma(\hat{\alpha}_{t-1} \hat{\vbeta}_{\ell, t-1})
                }
                \prod_{\ell=1}^t \bpi_{s_{t-1}, \ell}^{n_{s_{t-1}, \ell, t-1} + \hat{\alpha}_{t-1}\hat{\beta}_{\ell, t-1} - 1} \\
            &\propto \Dir(\hat{\alpha}_{t-1}\hat{\beta}_{1, t-1} + n_{s_{t-1}, 1, t-1}, ..., \hat{\alpha}_{t-1}\hat{\beta}_{t, t-1} + n_{s_{t-1}, t, t-1})
    \end{aligned}
\end{equation}
and conclude the posterior distribution of $\bpi_{s_{t-1}}$ \citep{beal2001infinite}. 
Finally, the transition probabilities are given by
\begin{align}\label{form:transition_prob}
    P(s_t \mid s_{t-1}, \dpp_{t-1}) 
        = \Expect[\pi_{s_{t-1}, s_t}]
        = \frac{
            \hat{\alpha}_{t-1}\hat{\beta}_{s_t, t-1} + n_{s_{t-1}, s_t, t-1}
        }{
            \alpha_{t-1} + n_{s_{t-1}, t-1}
        }\, .
\end{align} 
In practice, if we had sufficient compute, we sample $s_{1:t}$ by enumerating over all admissible paths $s_{1:t}$, normalise their weights in \eqref{form:online_second_prob}.
However, since the domain for $s_{1:t}$ grows exponentially with $t$, we sample the incremental assignment $\hat{s}_t$ given a fixed $\hat{s}_{1:t-1}$.

Once we have sampled $\hat{s}_t$ we can proceed to update the sufficient statistics from $\dpp_{t-1}$ to $\dpp_{t}$.
The process begins by incrementing $\vN_{t-1}$ to $\vN_t$ such that
\begin{align}\label{form:app_increment_N}
    [\vN_t]_{i,j} = \begin{cases}
        [\vN_{t-1}]_{i, j} + 1 & i = \hat{s}_{t-1},\, j = \hat{s}_t\\
        [\vN_{t-1}]_{i, j} & \text{otherwise}
    \end{cases}
\end{align}
and the running number of active states $L_t$ in the sample path $\hat{s}_{1:t}$.
If $\hat{s}_t$ is a new state, we need to update the dimensions of $\hat{\vbeta}_{t-1}$ from $\Real^{L_{t-1}+1}$ to $\Real^{L_{t}+1}$. 
To increase the dimension of $\hat{\vbeta}_{t-1}$, we first update the implied partition $\Xi_{t-1} = (A_1,..., A_{t-1}, A_t)$ to $\Xi_t = (A_1,..., A_t, A_{t-1})$ on $\Real^m$, where $A_{t} = \{\vtheta_t\}$ is maps to the singleton partition containing the parameters of the newly instantiated state.
Following the stick-breaking representation, we $v \sim \Beta(1, \hat{\gamma}_{t-1})$ and expand the new representation to
\begin{align}
    \hat{\vbeta}_{t-1} \gets \bigg(\hat{\beta}_{1, t-1}, ..., \hat{\beta}_{L_{t-1}, t-1}, v\hat{\beta}_{L_{t}, t-1}, (1-v)v\hat{\beta}_{L_{t}, t-1} \bigg) \in \Real^{L_t+1}.
\end{align}
Note that updating the dimensions of $\hat{\vbeta}_{t-1}$ does not alter its distribution and is not a posterior update.
If $\hat{s}_t$ is not a new state, then the expansion in the dimensions of $\hat{\vbeta}_{t-1}$ is not necessary.

Then, we consider a collection of random variables $m_{\ell, k, t}$ for $\ell, k \in \{1,...,L_t+1\}$, such that their probability law is defined by 
\begin{equation} \label{form:m_law}
    \begin{aligned}
        P(m_{\ell,k,t} &= m \mid \textbf{N}_{t}, \hat{\vbeta}_{t-1}, \hat{\alpha}_{t-1}) = \frac{\Gamma(\hat{\alpha}_{t-1}\hat{\beta}_{k, t-1})}{\Gamma(\hat{\alpha}_{t-1}\hat{\beta}_{k, t-1} + n_{\ell ,k,t})} |S(n_{\ell ,k,t}, m)| (\hat{\alpha}_{t-1}\hat{\beta}_{k, t-1})^{m}
    \end{aligned}
\end{equation}
for $m \in \{0, ..., n_{\ell k}\}$, where $|S(n, m)|$ is the unsigned Stirling number of the first kind \citep{antoniak1974mixtures}.
We sample $m_{\ell, k, t}$ by calculating the normalised probabilities of \eqref{form:m_law} for each $m$ and draw $\hat{m}_{\ell, k, t}$ as the sample of a multinomial distribution.
We collect all such samples into a matrix $\vM_t \in \mathbb{N}^{(L_t+1) \times (L_t+1)}$ such that $[\vM_t]_{\ell, k} = \hat{m_{\ell,k,t}}$.
Note, the sampling of $\vM_t$ is independent to the previous iteration's $\vM_{t-1}$, and is only dependent on $\textbf{N}_t, \hat{\vbeta}_{t-1}$, and $\hat{\alpha}_{t-1}$.
The purpose of sampling such matrix $\vM_t$ is to allow direct closed-form updates to $\hat{\alpha_t}, \hat{\gamma}_t$, and $\hat{\vbeta}_t$, given $\vN_t$ and $\hat{\alpha}_{t-1}, \hat{\gamma}_{t-1}$, and $\hat{\vbeta}_{t-1}$.
 
Given $\textbf{M}_t$, we sample for $\ell \in \{1,..., L_t\}$
\begin{align}
    u_\ell \mid \hat{\alpha}_{t-1}, \textbf{N}_t &\sim \Beta(\hat{\alpha}_{t-1}+1, n_{\ell,.,t}) \label{form:sample_u}\\
    v_\ell \mid \hat{\alpha}_{t-1}, \textbf{N}_t &\sim \Bern\left(\frac{n_{\ell,.,t}}{n_{\ell,.,t} + \hat{\alpha}_{t-1}}\right) \label{form:sample_v} \\
    \varphi \mid \textbf{M}_t, \hat{\gamma}_{t-1} &\sim \Beta(\hat{\gamma}_{t-1} + 1, m_{..t})\label{form:sample_phi} \\
    \alpha_t \mid \textbf{u}, \textbf{v}, \textbf{M}_t &\sim \Gam(
    a_{\alpha} + m_{.,.,t} - \sum_{\ell =1}^{L_t}v_{\ell}, \, b_{\alpha} - \sum_{\ell =1}^{L_t} \log u_{\ell}
    ) \label{form:sample_alpha}\\
    \gamma_t \mid \varphi &\sim \varepsilon\,\Gam(a_{\gamma} + L_t, \, b_{\gamma} - \log\varphi) + (1-\varepsilon)\,\Gam(a_{\gamma} + L_t - 1, \, b_{\gamma} - \log\varphi) \label{form:sample_gamma}\\
    \vbeta_t \mid \textbf{M}_t &\sim \Dir(m_{.,1,t}\, ,..., \, m_{.,L_t, t},\, \hat{\gamma}_t) \label{form:sample_beta}
\end{align}
where $$\frac{\varepsilon}{1 - \varepsilon} = \frac{a_{\gamma} + L_t - 1}{m_{.,.,t}(b_{\gamma} - \log \varphi)}.$$

The derivation for sampling $\alpha_t, \gamma_t$ can be found in the appendix of \cite{teh2006teh} (see (A.4-A.6)). 
Taken together, we update bookkeeping variables $\vN_t, \vM_t, L_t$, then we draw samples $\hat{\alpha}_t, \hat{\gamma}_t, \hat{\vbeta}_t$ from their posterior distributions to update the sufficient statistics
\begin{align}
    \dpp_{t} = (\vN_t, \vM_t, L_t, \hat{\alpha}_t, \hat{\gamma}_t, \hat{\vbeta}_t).
\end{align}
from $\dpp_{t-1}$.

\subsection{Algorithms on Online iHMMs}\label{sec:algos}

The online iHMM is implemented via PL, the algorithm for the \vanilla online iHMM of \cite{rodriguez2011line} is recited in Algorithm \ref{alg:online_ihmm_full}. 
The algorithm including \wolf and our batched implementation is provided in Algorithm \ref{alg:online_iHMM_batched}.

\paragraph{Initialisation}
Initialise the LG parameters with 
$\lgp_0^{(i)} = \{ (\vmu_{j, 0}^{(i)}, \vSigma^{(i)}_{j, 0}\}_{j=1}^{\texttt{MAX\_STATES}}$ where
\begin{align*}
    \vmu^{(i)}_{j, 0} \sim \Norm(\vmu_0, \vI_m), \qquad \vSigma^{(i)}_{j, 0} = \sigma_0^2 \vI_m
\end{align*}
for some hyperparameter $\vmu_0 \in \Real^m$ and $\sigma_0 > 0$.
Initialise the structural parameters
$\dpp_0^{(i)} = (\hat{\alpha}_0^{(i)}, \hat{\vbeta}_0^{(i)}, \hat{\gamma}_0^{(i)}, L_0^{(i)}, \vN_0^{(i)})$ where
\begin{align*}
    \hat{\alpha}_0^{(i)} &\sim \Gamma(1, 1), \\
    \hat{\gamma}_0^{(i)} &\sim \Gamma(1, 1), \\
    \hat{\vbeta}_0^{(i)} &\sim {\rm SB}(\hat{\gamma}_0^{(i)}), \\
    L_0^{(i)} &= 1, \\
    \vN_0^{(i)} &= [0]_{j,k}.
\end{align*}
We initialise $\hat{s}_0^{(i)} = 1$ as the initial state.

\begin{algorithm}[H]
    \caption{\texttt{update\_counts}}
    \label{alg:update_counts}
    \begin{algorithmic}[1]
        \STATE {\bfseries Input:} $\vN_{t-1}^{(i)}, \hat{s}_t^{(i)}, \hat{s}_{t-1}^{(i)}, \hat{\vbeta}_{t-1}^{(i)}$. 
        \STATE Increment $\vN_{t}$ from $\vN_{t-1}$ from \eqref{form:app_increment_N}
        \STATE Sample $\textbf{M}_{t}^{(i)} = \{m_{\ell, k, t}^{(i)}\}$ from \eqref{form:m_law}
        \STATE Update max state $L^{(i)}_t \gets \max\{\hat{s}^{(i)}_{t}, L^{(i)}_{t-1}\}$
        \IF{$L^{(i)}_t > L^{(i)}_{t-1}$ (new state)}
            \STATE Update dimension of $\hat{\vbeta}_{t-1}^{(i)}$ by sampling $v \sim \Beta(1, \hat{\gamma}_{t-1}^{(i)})$
            \STATE $\hat{\vbeta}_{t-1}^{(i)} \leftarrow \left( \hat{\beta}_1 , \dots, \hat{\beta}_{L_{t-1}^{(i)}}, v \hat{\beta}_{L_{t}^{(i)}}, (1-v) \hat{\beta}_{L_{t-1}^{(i)}} \right)$
        \ENDIF
        \STATE \textbf{Return} $\vN_t^{(i)}, \vM_t^{(i)}, L^{(i)}_t, \hat{\vbeta}_{t-1}^{(i)}$
    \end{algorithmic}
\end{algorithm}

\begin{algorithm}[H]
    \caption{\texttt{update\_struct}}
    \label{alg:update_struct}
    \begin{algorithmic}[1]
        \STATE {\bfseries Input:} $\vy_t$, $\dpp_{t-1}^{(i)}$, $\hat{s}_t^{(i)}$, $\vN_t^{(i)}, \vM_t^{(i)}, L^{(i)}_t$. 
        
        \STATE Sample $u^{(i)}_{\ell}$ and $v^{(i)}_{\ell}$ auxiliary parameters for $l=1,\dots, L^{(i)}_{t}$ following \eqref{form:sample_u} and \eqref{form:sample_v} respectively.
        \STATE Update $\hat{\alpha}^{(i)}_{t}$ by sampling \eqref{form:sample_alpha} given $u^{(i)}_{\ell}$ and $v^{(i)}_{\ell}$.
        \STATE Sample auxiliary parameter $\varphi^{(i)}$ following \eqref{form:sample_phi}
        \STATE Update $\hat{\gamma}_t^{(i)}$ by sampling \eqref{form:sample_gamma} given $\varphi^{(i)}$.
        \STATE Update $\hat{\vbeta}_{t}^{(i)}$ following \eqref{form:sample_beta}.
        \STATE \textbf{Return} $\dpp_{t}^{(i)} = (\vN_t^{(i)}, \vM_t^{(i)}, L_t^{(i)}, \hat{\alpha}_t^{(i)}, \hat{\gamma}_t^{(i)}, \hat{\vbeta}_t^{(i)})$
    \end{algorithmic}
\end{algorithm}

\begin{algorithm}[H]
    \caption{\texttt{lg\_update}}
    \label{alg:lg_update}
    \begin{algorithmic}[1]
        \STATE {\bfseries Input:} $\data_t$, $\lgp_{t-1}^{(i)}, \hat{s}_t^{(i)}$. 
        \STATE $\emat_t \gets  f(\vx_t)$ 
        \STATE $\vK_t \gets \vSigma_{\hat{s}_t^{(i)}, t-1}\,\emat_t^\intercal\,\vS_{\hat{s}_t^{(i)}, t}^{-1}$
        \STATE $\vS_{\hat{s}_t^{(i)}, t} \gets \emat_t \vSigma_{\hat{s}_t^{(i)}, t-1} \emat_t^\top + \vR_t$
        \STATE $\vmu_{\hat{s}_t^{(i)}, t} \gets \vmu_{\hat{s}_t^{(i)}, t-1} + \vK_t(\vy_t - \hat{\vy}_{\hat{s}_t^{(i)}, t})$
        \STATE $\vSigma_{\hat{s}_t^{(i)}, t} \gets \vSigma_{\hat{s}_t^{(i)}, t-1} - \vK_t\,\vS_t\,\vK_t^\intercal$
        \STATE \textbf{Return} $\lgp_{t}^{(i)}$
    \end{algorithmic}
\end{algorithm}

\begin{algorithm}[H]
    \caption{\textcolor{snsgreen}{\texttt{WoLF\_update}}}
    \label{alg:wolf_update}
    \begin{algorithmic}[1]
        \STATE {\bfseries Input:} $\data_t$, $\dpp_{t-1}^{(i)}, \lgp_{t-1}^{(i)}, \hat{s}_t^{(i)}$, $w^{(i)}_{\hat{s}_t^{(i)}, t|t-1}$. 
        
        \STATE $\emat_t \gets  f(\vx_t)$ 
        \IF{$n_{.,\hat{s}_t^{(i)},t-1} = 0$}
            \STATE $\vS_{\hat{s}_t^{(i)}, t} \gets \emat_t \vSigma_{\hat{s}_t^{(i)}, t-1} \emat_t^\top + \vR_t$
        \ELSE
            \STATE $\vS_{\hat{s}_t^{(i)}, t} \gets \emat_t \vSigma_{\hat{s}_t^{(i)}, t-1} \emat_t^\top + \vR_t / (w^{(i)}_{\hat{s}_t^{(i)}, t|t-1})^2$
        \ENDIF
        \STATE $\vK_t \gets \vSigma_{\hat{s}_t^{(i)}, t-1}\,\emat_t^\intercal\,\vS_{\hat{s}_t^{(i)}, t}^{-1}$
        \STATE $\vmu_{\hat{s}_t^{(i)}, t} \gets \vmu_{\hat{s}_t^{(i)}, t-1} + \vK_t(\vy_t - \hat{\vy}_{\hat{s}_t^{(i)}, t})$
        \STATE $\vSigma_{\hat{s}_t^{(i)}, t} \gets \vSigma_{\hat{s}_t^{(i)}, t-1} - \vK_t\,\vS_t\,\vK_t^\intercal$
        \STATE \textbf{Return} $\lgp_{t}^{(i)}$
    \end{algorithmic}
\end{algorithm}

\begin{algorithm}[H]
    \caption{Online iHMM via Particle Learning (\vanilla)}
    \label{alg:online_ihmm_full}
    \begin{algorithmic}[1]
        \STATE {\bfseries Input:} $N$ number of particles to use, $\tau_{\text{ESS}}$ minimum ESS before resampling, $(\lgp_0^{(i)}, \dpp_0^{(i)}, \hat{s}_0^{(i)})_{i=1}^N$. 
        \FOR{$i \gets 1$ to $N$}
            \STATE Assign equal weight $\omega_0^{(i)} \gets 1/N$
        \ENDFOR
        \FOR{$t \gets 0$ to $T-1$}
            \STATE Compute candidate weight of each particle as ratio of likelihoods with respect to incoming observation:
            \begin{align*}
                \omega_t^{(i)} = \omega_{t-1}^{(i)} \times \frac{
                      P(\obs_t \mid \hat{s}_{t-1}^{(i)}, \vx_t, \dpp_{t-1}^{(i)}, \lgp_{t-1}^{(i)})
                  }{
                      \sum_{j=1}^{N} P(\obs_t \mid \hat{s}_{t-1}^{(i)}, \vx_t, \dpp_{t-1}^{(j)}, \lgp_{t-1}^{(j)})
                  }
            \end{align*}
            where $P(\obs_t \mid \vx_t, \dpp_{t-1}^{(i)}, \lgp_{t-1}^{(i)}) = \sum_{\ell =1}^{L_{t-1}^{(i)}} \nu_{\ell, t}^{(i)}$ and $
                \nu_{\ell, t}^{(i)} = 
                    \underbrace{P(\obs_t \mid s^{(i)}_t, \vx_t, \lgp_{t-1}^{(i)})}_{\text{see \eqref{form:obs_cond_likelihood}}} 
                    \underbrace{P(s^{(i)}_t \mid \hat{s}_{t-1}^{(i)}, \dpp_{t-1}^{(i)})}_{\text{see \eqref{form:online_second_prob}}}$.
            \IF{$\mathrm{ESS} \le \tau_{\text{ESS}}$ (see \eqref{form:ess})}
                \STATE Sample $N$ particles with replacement wrt to weights $(\omega_t^{(i)}, \hat{s}_{t-1}^{(i)}, \lgp_{t-1}^{(i)}, \dpp_{t-1}^{(i)}) \sim \sum_{i=1}^{N} \omega_t^{(i)} \delta_{(\hat{s}_{t-1}^{(i)}, \lgp_{t-1}^{(i)}, \dpp_{t-1}^{(i)})}$ 
                \STATE Reset weights to $w_t^{(i)} \gets 1/N$ for all $i$
            \ENDIF
            \STATE Sample next state $\hat{s}_{t}^{(i)} \mid \hat{s}_{t-1}^{(i)} \sim \Mult(\{1,\dots, L^{(i)}_{t-1}+1\}, \{\nu_{\ell, t}^{(i)}\}_{\ell =1}^{L^{(i)}_{t-1}+1})$
            \STATE $\vN_t^{(i)}, \vM_t^{(i)}, L^{(i)}_t, \hat{\vbeta}_{t-1}^{(i)} \gets \texttt{update\_counts}(\vN_{t-1}^{(i)}, \hat{s}_t^{(i)}, \hat{s}_{t-1}^{(i)}, \hat{\vbeta}_{t-1}^{(i)})$
            \STATE $\dpp_{t+b}^{(i)} \gets \texttt{update\_struct}(\dpp_{t+b-1}^{(i)}, \vy_{t+b}, w^{(i)}_{\hat{s}_{t+b}, t+b|t}, \vN_t^{(i)}, \vM_t^{(i)}, L^{(i)}_t)$
            \STATE $\lgp_{t+b}^{(i)}\gets \texttt{lg\_update}(\lgp_{t+b-1}^{(i)},\,\vy_{t+b},\,w^{(i)}_{\hat{s}_{t+b},t+b\mid t})$
        \ENDFOR
    \end{algorithmic}
\end{algorithm}

\begin{algorithm}[H]
    \caption{Batched Online iHMM with WoLF (\batched)}
    \label{alg:online_iHMM_batched}
    \begin{algorithmic}[1]
        \STATE {\bfseries Input:} $N$ number of particles, $B > 0$ batch size, $\tau_{\text{ESS}}$ minimum ESS beforr resampling, \texttt{MAX\_STATES} max number of states to keep, $\tau_N \in (0, \texttt{MAX\_STATES}]$, $(\lgp_0^{(i)}, \dpp_0^{(i)}, \hat{s}_0^{(i)})_{i=1}^N$.
        \FOR{$i \gets 1$ to $N$}
            \STATE Assign equal weight $\omega_0^{(i)} \gets 1/N$
        \ENDFOR
        \STATE Set $t \gets 0$
        \WHILE{$t < T$}
            \STATE Compute candidate weight per particle:
            \begin{align*}
                \omega_{t+B}^{(i)} \propto \omega_t^{(i)}\,
                \frac{P(\vy_{t+1:t+B} \mid \hat{s}_t^{(i)}, \dpp_{t}^{(i)}, \lgp_{t}^{(i)}, \vx_{t+1:t+B})}{
                    \sum_{j=1}^N P(\vy_{t+1:t+B} \mid \hat{s}_t^{(i)}, \dpp_{t}^{(j)}, \lgp_{t}^{(j)}, \vx_{t+1:t+B})
                }
            \end{align*}
            where $P(\vy_{t+1:t+B} \mid \dpp_{t}^{(i)}, \lgp_{t}^{(i)}, \vx_{1:t+B}) = \sum_{\ell=1}^{L_t^{(i)}} \nu_{\ell,t}^{(i)}$ and
            \begin{align*}
                \textcolor{snsred}{
                \log \nu_{\ell,t}^{(i)} =
                \log P(s_{t+1}^{(i)} = \ell \mid \hat{s}_{t}^{(i)}, \dpp_t^{(i)})
                + \sum_{b=1}^{B} \big(w_{\ell,\,t+b \mid t}^{(i)}\big)^2
                  \log P(\obs_{t+b} \mid \lgp^{(i)}_{\ell, t}, \vx_{t+b})
                }.
            \end{align*}
            \IF{$\mathrm{ESS} \le \tau_{\text{ESS}}$ (see \eqref{form:ess})}
                \STATE Sample $N$ particles with replacement $$(\omega_{t+B}^{(i)}, \{w_{\ell, t+1:t+b}^{(i)}\}_{\ell}, \hat{s}_{t}^{(i)}, \lgp_{t}^{(i)}, \dpp_{t}^{(i)}) \sim \sum_{i=1}^{N} \omega_{t+B}^{(i)}\, \delta_{(\{w_{\ell, t+1:t+b}^{(i)}\}_{\ell}, \hat{s}_{t}^{(i)}, \lgp_{t}^{(i)}, \dpp_{t}^{(i)})}$$
                \STATE Reset $w_{t+B}^{(i)} \gets 1/N$ for all $i$
            \ENDIF
            \STATE Sample $\hat{s}_{t+1}^{(i)} \mid \hat{s}_t^{(i)} \sim \Mult\!\big(\{1,\dots,L_t^{(i)}+1\},\, \{\nu_{\ell,t}^{(i)} / \sum_{\ell=1}^{L_t^{(i)}+1} \nu_{\ell,t}^{(i)}\}_{\ell=1}^{L_t^{(i)}+1}\big)$ 
            \STATE $\textcolor{snsred}{\vN_{t+b}^{(i)}}, \vM_t^{(i)}, L^{(i)}_t, \hat{\vbeta}_{t-1}^{(i)} \gets \texttt{update\_counts}(\vN_{t-1}^{(i)}, \hat{s}_t^{(i)}, \hat{s}_{t-1}^{(i)}, \hat{\vbeta}_{t-1}^{(i)})$
            \STATE \textcolor{snsred}{$\hat{\vbeta}^{(i)}_{t-1}, \vN_{t+b}^{(i)} \gets \texttt{prune}(\texttt{MAX\_STATE}, \tau_N, \vN_{t+b}^{(i)}, \hat{\vbeta}_{t-1}^{(i)}, \lgp_{t-1}^{(i)})$}
            \FOR{$b \gets 1$ to $B$}
                \STATE \textcolor{snsred}{Set $\hat{s}_{t+b}^{(i)} \gets \hat{s}_{t+1}^{(i)}$}
                \STATE \textcolor{snsgreen}{$\lgp_{t+b}^{(i)} \gets \texttt{WoLF\_update}(\data_t, \lgp_{t+b-1}^{(i)}, \dpp_{t-1}^{(i)}, \hat{s}_{t+b}^{(i)}, w^{(i)}_{\hat{s}_{t+b}^{(i)}, t+b|t})$}
                \STATE \textcolor{snsred}{Sample $\vM^{(i)}_{t+b}$ given $\vN^{(i)}_{t+B}$ following \eqref{form:m_law} and zero diagonals $m_{j,j,t+b}^{(i)} \gets 0$}
                \STATE \textcolor{snsred}{$\dpp_{t+b}^{(i)} \gets \texttt{update\_struct}(\dpp_{t+b-1}^{(i)}, \vy_{t+b}, w^{(i)}_{\hat{s}_{t+b}, t+b|t}, \vN_{t+B}^{(i)}, \vM_{t+b}^{(i)}, L^{(i)}_{t+b})$}
            \ENDFOR
            \STATE Set $t \gets t + B$
        \ENDWHILE
    \end{algorithmic}
\end{algorithm}

\begin{algorithm}[H]
    \caption{\texttt{prune} (Beam search)}
    \label{alg:prune_hard}
    \begin{algorithmic}[1]
        \STATE {\bfseries Input:} $\texttt{MAX\_STATES}$, popularity threshold $\tau_N$, $\vN_t$, $\vbeta_t$, $\lgp_{t-1}$, timestamps $\texttt{last\_upd}$
        \IF{$s_t = \texttt{MAX\_STATES}$}
            \STATE Candidates $\mathcal{C} \gets \{\tau_N$ states with smallest number of visits $\textbf{N}_{\ell,.,t} = \sum_{k=1}^{L_t} n_{\ell, k, t} \}$
            \STATE $\ell^\dagger \gets \arg\min_{\ell\in\mathcal{C}} \texttt{last\_upd}[\ell]$
            \STATE Remove $\ell^\dagger$-th row and column from $\textbf{N}_t$
            \STATE Remove entry $\bbeta_{\ell^\dagger}$ 
            \STATE Normalize $\bbeta_t$ $\gets$ $\bbeta_t/\sum_{j \neq \ell^\dagger} \beta_{j, t}$
            \STATE Remove emission parameters $\lgp_t \gets \lgp_t \, \backslash \{\bmu_{\ell^\dagger, t}, \vSigma_{\ell^\dagger, t} \}$
        \ENDIF
        \STATE \textbf{Return} $\vbeta_t, \lgp_t$
    \end{algorithmic}
\end{algorithm}


\subsection{Hyperparameters}\label{app:hyperparams}
All iHMM variables are optimised using \href{https://github.com/bayesian-optimization/BayesianOptimization}{\texttt{bayesian-optimization}} on a training partition. 
This is not an exhaustive list and the number of hyperparameters may increase depending on the task.
\begin{table}[h]
    \centering
    \begin{tabular}{cccc}
    \toprule
    Parameter & Model & \\
    \hline
    $\tau_{\text{ESS}}$  & all & \texttt{BayesOpt} \\ 
    $c^2$ & \wolf, \batched & \texttt{BayesOpt} \\
    $B$ & \batched & \texttt{BayesOpt} \\
    \bottomrule
    \end{tabular}
    \caption{Summary of hyperparameters across iHMMs.}
    \label{tab:hyperparams}
\end{table}

\section{Proofs of Theoretical Results}\label{app:theory}

\subsection{Proof of Proposition \ref{prop:batch_sampling_speedup}}\label{proof:batch_sampling_speedup}
We modify the sticky HDP (SHDP) prior from \cite{fox2008hdp} and introduce a time-dependent degenerate sticky prior.
In particular, we take
\begin{align}\label{sticky_prior}
    G_\ell \sim \text{DP}\bigg(\hat{\alpha}_t + \kappa_t, \frac{\hat{\alpha}_t \hat{\vbeta}_t + \kappa_t\delta_{\vtheta_\ell}}{\hat{\alpha}_t + \kappa_t}\bigg),
\end{align}
where \begin{align}
    \kappa_t = \begin{cases}
        0, & t \equiv 1 \pmod{B},\\
        \infty, & o/w
    \end{cases}
\end{align}
determines whether the iHMM is forced to self-transition or can perform state inference normally.
At the limit where $\kappa_t \rightarrow \infty$, the random measure follows $\DP(\infty, \delta_{\vtheta_\ell})$, then draws from such distributions $G_\ell = \delta_{\vtheta_\ell}$ almost surely. 
This modification enforces a fixed-batch sizes, i.e., consider the same partition $\Xi_t$ in Section \ref{sec:ihmm}, the transition probabilities is Dirichlet distributed with
\begin{align}
    \vpi_{\ell, t} \mid \dpp_t \sim \Dir(\hat{\alpha}_t\,\hat{\vbeta}_t + \ve_{\ell, t}\,\kappa_t).
\end{align}
where $\ve_{\ell, t} \in \{0, 1\}^{t+1}$ is zero everywhere except for the $\ell$-th component.
Recall from \eqref{form:transition_prob}, the transition probabilities is marginalised over $\vpi_{\ell, t}$ for $s_{t} = \ell$, and at the limit of $\kappa_t \rightarrow \infty$
\begin{align}
    \lim\limits_{\kappa_t \rightarrow \infty} P(s_{t+1} \mid s_{t} = \ell, \dpp_{t}) = 
        \lim\limits_{\kappa_t \rightarrow \infty}
        \frac{
            \hat{\alpha}_{t}\hat{\beta}_{\ell, t} + \kappa_t \indic(s_{t+1}=\ell)+ n_{\ell, s_{t+1}, t}
        }{
            \hat{\alpha}_{t} + \kappa_t + n_{\ell,., t}
        }
        = \begin{cases}
            1, & s_{t+1} = \ell\,, \\
            0, & s_{t+1} \neq \ell\,.
        \end{cases}
\end{align}
which enforces $s_{t+1} = \ell$ for a fixed batch size due to deterministic scheduling of $\kappa_t$.

We now derive the posterior update rules for Algorithm \ref{alg:online_iHMM_batched} and prove that steps
13-20 are the correct conditional dependencies for sampling $\vM_t$ and subsequently updating the sufficient statistics $\dpp_t$ given its previous values $\dpp_{t-1}$ and $\hat{s}_{t}$. 

\begin{proposition}\label{prop:batch_sampling_speedup}
    Consider a batch of size $B$ spanning from times $t+1$ to $t+B$. 
    Let $\vN_t$ and $\vN_{t+B}$ denote the count matrices before and after observing the batch $\vy_{t:t+b}$.
    Then, under the self-transition bias $\kappa_t$, defined in \eqref{form:kappa_t},
    the transition matrix is only incremented once from $t$ to ${t+B}$.
    In contrast, the structural HDP parameters $\hat{\alpha}_t, \hat{\vbeta}_t, \hat{\gamma}_t$, and the auxiliary counts $\vM_t$ are resampled every intra-batch step conditional on the same $\vN_{t+B}$. 
    Furthermore, for all intra-batch times, the diagonal entries of $\vM_t$ are set to zero.
\end{proposition}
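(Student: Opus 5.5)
We prove the three claims in order: the count matrix, then the resampling of the structural parameters, then the zeroed diagonals. Each follows from the limiting transition law derived just above the statement, where $\kappa_{t+b}=\infty$ for $b=1,\dots,B-1$ forces $P(s_{t+b+1}=\ell\mid s_{t+b}=\ell,\dpp_{t+b})=1$.

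\emph{Step 1 (counts).} First we show that inside the batch the path is deterministic given $s_{t+1}$. The only random transition in the batch is the inter-batch one, $s_t\to s_{t+1}$, which is sampled once from \eqref{form:batched_posterior_pred}; the indicator product there encodes exactly this persistence. Every other transition is the degenerate self-transition $s_{t+b}=s_{t+1}$, so the path $\hat{s}_{t+1:t+B}$ is a measurable function of $\hat{s}_{t+1}$ alone. The genuine, informative update of $\vN$ is therefore the single increment \eqref{form:app_increment_N} at $(\hat{s}_t,\hat{s}_{t+1})$. The forced self-loops carry no information about $\vpi_\ell$, because under a $\Dir(\cdot+\kappa\ve_\ell)$ prior with $\kappa\to\infty$ the likelihood of an observed self-transition is identically one. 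Hence $\vN_{t+B}$ is obtained from $\vN_t$ in one step, and it is known at the start of the batch. This is why all intra-batch quantities may condition on the same $\vN_{t+B}$.

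\emph{Step 2 (structural parameters).} Next we check that the conditional structure of the Teh et al.\ auxiliary-variable sampler \eqref{form:m_law}--\eqref{form:sample_beta} is unchanged apart from its inputs. The Antoniak law for $m_{\ell,k}$ depends only on $(n_{\ell,k},\hat\alpha,\hat\beta_k)$. Likewise, the updates for $u,v,\varphi,\alpha,\gamma,\vbeta$ depend only on $(\vN,\vM,L)$ and the previous hyperparameters. Since $\vN$ is frozen at $\vN_{t+B}$ across the batch, each of the $B$ intra-batch steps is one Gibbs sweep targeting the same conditional posterior of $(\hat\alpha,\hat\vbeta,\hat\gamma)$ given $\vN_{t+B}$. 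Successive steps just chain $\dpp_{t+b-1}\to\dpp_{t+b}$, which justifies steps 13--20 of Algorithm~\ref{alg:online_iHMM_batched}.

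\emph{Step 3 (diagonal of $\vM$).} This is the main subtle point. We follow the sticky-HDP argument of \citet{fox2008hdp}. With $\kappa>0$, the table counts $m_{jj}$ split into tables drawn from $\hat\alpha\hat\vbeta$ and "override" tables drawn from $\kappa\delta_{\vtheta_j}$, with override probability $\rho=\kappa/(\hat\alpha+\kappa)$. Only the non-override tables $\bar m_{jj}$ enter the $\vbeta$ and $\gamma$ updates. As $\kappa\to\infty$ we have $\rho\to1$, so the binomial thinning $w_{j}\sim\mathrm{Bin}(m_{jj},\rho/(\rho+\hat\beta_j(1-\rho)))$ gives $\bar m_{jj}=m_{jj}-w_j\to0$ almost surely. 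At intra-batch times we therefore set $m_{j,j,t+b}=0$. At the inter-batch step $b=1$ we have $\kappa=0$, so $\rho=0$ and the standard law applies unchanged.

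The obstacle is making the limit $\kappa\to\infty$ rigorous inside the Antoniak and Dirichlet formulas, where the weights $\hat\alpha\hat\beta_\ell+\kappa$ diverge. We would handle it by computing the sticky-HDP conditionals at finite $\kappa$ and passing to the limit termwise in the normalised probabilities, all of which are ratios of bounded rational functions of $\kappa$.
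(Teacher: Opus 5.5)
Your route is essentially the paper's: you use the sticky-HDP override construction with $\kappa\to\infty$ to kill the diagonal of the adjusted auxiliary counts, a single boundary increment of the count matrix, and redraws of $(\hat{\alpha},\hat{\vbeta},\hat{\gamma},\vM)$ against a frozen count matrix. Your binomial thinning with probability $\rho/(\rho+\hat{\beta}_j(1-\rho))\to1$ is leaner than the paper's argument. It gives $\bar m_{jj}=0$ for any value of $m_{jj}$, so the paper's Gamma-ratio asymptotic showing $m_{jj}=n_{jj}$ almost surely becomes unnecessary.

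The gap is that Steps 1 and 3 discard the forced self-transitions twice. In Step 1 you exclude the intra-batch self-loops from $\vN$ as uninformative. So every diagonal entry of $\vN_{t+B}$ records a self-transition made at a batch boundary, where $\kappa=0$. The override thinning acts on tables seated by override transitions, and whether a transition is an override is set by the $\kappa$ in force when it was generated. For these boundary transitions the override probability is zero. Thinning them with $\rho(\kappa_{t+b})\to1$ at the resampling time therefore throws away genuinely HDP-driven tables. Step 3 is thus inconsistent with Step 1 rather than a consequence of the model you set up.

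The paper orders the argument the other way. It draws $\vM_t$ from the full matrix $\vN_t=\vN_t^{\mathrm{intra}}+\vN_t^{\mathrm{cross}}$, which contains the forced loops as in the sticky HDP. It then shows that the diagonal of $\bar{\vM}_t$ vanishes at intra-batch times whatever $n_{j,j,t}$ is. Only after that does it conclude that the diagonal matrix $\vN_t^{\mathrm{intra}}$ is irrelevant, so tracking $\vN^{\mathrm{cross}}$, i.e.\ one increment per batch, suffices. The paper also evaluates $\kappa$ at the resampling time, but it applies the thinning to a count matrix that actually contains the override transitions. On the other hand, its ordering only licenses dropping $\vN^{\mathrm{intra}}$ at steps where the diagonal is zeroed, not at $b=1$, which is exactly what your likelihood argument covers.

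You must choose one mechanism for the forced loops. Either follow the paper's order, or keep your Step 1 and present the zeroing of $m_{j,j,t+b}$ as an additional design choice rather than a derived fact.

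Separately, Step 2 overclaims. The diagonal is kept at $b=1$ and zeroed for $b\ge2$, so the updates of $\hat{\vbeta}$ (through the column sums $m_{\cdot,k}$) and of $\hat{\alpha},\hat{\gamma}$ (through $m_{\cdot,\cdot}$) have different targets at $b=1$ and $b\ge2$. The $B$ sweeps therefore do not target one fixed conditional posterior. The statement only needs that every step conditions on the same $\vN_{t+B}$, with $\vM$ redrawn because $\hat{\alpha},\hat{\vbeta}$ change between steps, which is how the paper argues it.
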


\begin{proof}
    Define the batch boundaries as $\tau_B = \{t: t \equiv 1 \pmod{B} \}$, such that $\forall t \in \tau_B$, $\kappa_t = 0$. Conversely, $\forall t \notin \tau_B, \kappa_t = \infty$. 
    
    In this proof, we justify the implementation shortcut in steps 13-20 in Algorithm~\ref{alg:online_iHMM_batched}  where (i) we only increment the transition matrix $\textbf{N}_t$ if $t \in \tau_B$ is a batch boundary
    and
    (ii) justify subsequent resampling steps for intra-batch updates. 
    
    In the original work of \citet{fox2008hdp}, an override variable is introduced to explicitly model when a transition is forced to remain in the current state due to the self-transition bias $\kappa_t$ (which is uninformative to the HDP in the Bayesian sense), separating this mechanism from the base transition distribution.
    In this spirit, we isolate the transition matrix into two parts, an intra-batch count which only increments when $\kappa_t = 0$ and when transitions are driven by the HDP, versus a cross-batch count which only increments when $\kappa_t = \infty$ when transitions are forced,
    \begin{align}
        \textbf{N}_t = \textbf{N}_t^{\text{intra}} + \textbf{N}_t^{\text{cross}}
    \end{align}
    such that 
    \begin{equation*}
        \begin{aligned}
            [\textbf{N}^{\text{intra}}_{t}]_{\ell k} = n_{\ell, k, t}^{\text{intra}} = \sum_{t' \notin \tau_B}\indic(s_{t'-1} = \ell)\indic(s_{t'}=k) \\ 
            [\textbf{N}^{\text{cross}}_{t}]_{\ell k} = n_{\ell, k, t}^{\text{cross}} = \sum_{t' \in \tau_B}\indic(s_{t'-1} = \ell)\indic(s_{t'}=k)
        \end{aligned}
    \end{equation*}
    and increment the $\vN_t^{\rm cross}$ if $t \in \tau_B$ and $\vN_t^{\rm intra}$ otherwise. 
    Note, $\vN^{\text{intra}}_{t}$ is a diagonal matrix as it records intra-batch transitions only.
    
    During inference, we first sample $\vM_t$ given $\vN_t$, then follow \citet{fox2008hdp} and adjust the auxiliary counts in $\vM_t$ to $\bar{\vM}_t$ by subtracting the number of overriden transitions $o_{j,i,t}$ due to the self-transition bias.
    In other words, $o_{j,i,t}$ tracks the (expected) number of transitions that should be excluded as it is uninformative to the HDP Bayesian hierarchy.
    The prior distribution of the override variable is Bernoulli,
    \begin{align}
        o_{j,i,t} \sim \Bern\bigg( \frac{\kappa_t \indic(i=j)}{\alpha_{t-1} + \kappa_t\indic(i=j)} \bigg).
    \end{align} 
    
    As introduced in \citet{fox2007developing}, we interpret $o_{j,i,t} = 1$ as the HDP being overriden by the bias (causing a self-transition) and is therefore uninformative to the HDP prior, versus $o_{j, i, t} = 0$ indicating the cases where the transition is induced by the HDP and should be included in the matrix of auxiliary counts $\vM_t$.

    The idea is that updates to the posterior distribution should only consider the cases where $o_{j,i,t} = 0$.
    More precisely, we seek the elements of the adjusted auxiliary counts.
    Let $\bar{\vM}_t$ be the adjusted $\vM_t$ matrix with the self-transition effects isolated, such that
    $[\bar{\vM}_t]_{jk} = \bar{m}_{j, k, t}$, and
    \begin{equation}
        \begin{aligned}
            \bar{m}_{j,k,t} = \begin{cases}
                m_{j,k,t}, & j\neq k \\
                m_{j,j, t} - \sum_{i=1}^{m_{j,j, t}} o_{j,i,t}, & j=k
            \end{cases}
        \end{aligned}
    \end{equation}
    where $[\vM_t]_{jk} = m_{j,k,t}$ has distribution $\vM_t \sim {\rm Antoniak}(\textbf{N}_{t}, \hat{\alpha}_{t-1}, \hat{\vbeta}_{t-1})$
    as defined in \citet{antoniak1974mixtures}.

    We focus on intra-batch times, as the override construction is only required when the self-transition bias is active. 
    At inter-batch times $t \in \tau_B$, we set $\kappa_t = 0$, which reduces the transition prior to the HDP without stickiness, and hence no overridden transitions need to be accounted for.
    Given the degenerate SHDP prior, 
    recall from \eqref{form:m_law}, to perform closed-form updates to $\hat{\alpha}_t, \hat{\gamma}_t, \hat{\vbeta}_t$ given $\vN_t, \hat{\alpha}_{t-1}, \hat{\gamma}_{t-1}, \hat{\vbeta}_{t-1}$, we sample from a series of auxiliary random variables $m_{j,k,t} \in \{0, ..., n_{j, k,t}\}$ to obtain $[\vM_t]_{j,k}$.
    Since $t \notin \tau_B$, which has $\kappa_t = \infty$ in the limiting sense, the limiting distribution of \eqref{form:m_law} becomes
    \begin{equation}
        \begin{aligned}
            P(m_{j,k,t} &\mid n_{j,k,t}, \hat{\vbeta}_{t-1}, \hat{\alpha}_{t-1}) \\
            &= 
                \lim_{\kappa_t \rightarrow \infty} \frac{\Gamma(\hat{\alpha}_{t-1}\hat{\beta}_{k, t-1} + \kappa_t\indic(j=k))}{\Gamma(\hat{\alpha}_{t-1}\hat{\beta}_{k, t-1} + \kappa_t\indic(j=k) + n_{j,k,t})} \, |S(n_{j,k,t}, m_{j,k,t})| \, (\hat{\alpha}_{t-1}\hat{\beta}_{k, t-1} + \kappa_t\indic(j=k))^{m_{j,k,t}}
        \end{aligned}
    \end{equation}
    which gets dominated by the exponential term raised to $m_{j, k, t}$.
    Importantly, the asymptotic behaviour between the ratio of two Gammas is well documented in \citet[\href{https://dlmf.nist.gov/5.11.E12}{Equation (5.11.12)}]{nistdlmf}. 
    We adopt the asymptotic form given in \citet{tricomi1951asymptotic} and study the case where $z \rightarrow \infty$ as $\kappa_t \rightarrow \infty$.
    Let $z = \hat{\alpha}_{t-1}\hat{\beta}_{j, t-1} + \kappa_t$, and consider the case $j=k$ and $\kappa_t \rightarrow \infty$,
    \begin{equation}
        \begin{aligned}
            P(m_{j,j,t} &\mid n_{j,j,t}, \hat{\vbeta}_{t-1}, \hat{\alpha}_{t-1}) \\
            &= 
                \lim_{\kappa_t \rightarrow \infty} \frac{\Gamma(\hat{\alpha}_{t-1}\hat{\beta}_{j, t-1} + \kappa_t)}{\Gamma(\hat{\alpha}_{t-1}\hat{\beta}_{j, t-1} + \kappa_t + n_{j,j,t})} \, |S(n_{j,j, t-1}, m_{j,j,t})| \times (\hat{\alpha}_{t-1}\hat{\beta}_{j, t-1} + \kappa_t)^{m_{j,j,t}} \\ 
            &= 
                \lim_{z \rightarrow \infty} \frac{\Gamma(z)}{\Gamma(z + n_{j,j,t})} \, |S(n_{j,j,t}, m_{j,j,t})| \times z^{m_{j,j,t}}.
        \end{aligned}
    \end{equation}
    Specifically,
    \begin{equation}
        \begin{aligned}
            \lim_{z \rightarrow \infty} \frac{\Gamma(z) \, z^{m_{j,j,t}}}{\Gamma(z + n_{j, j, t})} 
                &=
                z^{m_{j,j,t}-n_{j,j, t}} \bigg(
                    1 - \frac{n_{j,j,t} (n_{j,j,t} - 1)}{2z} + O(z^{-2})
                \bigg).
        \end{aligned}
    \end{equation}
    Since $m_{j,j,t} \leq n_{j,j,t}$, it is clear that the right hand side resolves to 1 only when $m_{j,j,t} = n_{j,j,t}$.
    If $0 \leq m_{j,j,t} < n_{j,j,t}$ and $z \rightarrow \infty$, then $z^{m_{j,j,t}-n_{j,j,t}} \rightarrow 0$ and the brackets go to 1.
    Hence $m_{j,j, t} = n_{j,j,t}$ almost surely and the diagonals of $\vM_t$ and $\vN_t$ are almost surely equal.

    We now show that, for intra-batch resampling, the diagonal entries of the adjusted auxiliary count matrix $\bar{\vM}_t$ are zero as a result of the determinisitic batch scheduling. 
    For our case of $\kappa_t$, the override's value is deterministic. 
    Specifically, as $\kappa_t \rightarrow \infty$ for $t \notin \tau_B$, the Bernoulli override $o_{j,j,t} \rightarrow 1$. 
    Similarly, for $\kappa_t = 0$ in intra-batch times $t \in \tau_B$, the override is automatically $o_{j,i,t} = 0$. 
    Thus, the diagonal entries of $\vM_t$ reduce to $0$:
    \begin{equation}
        \begin{aligned}
            \bar{m}_{j,j, t} = m_{j,j, t} - \sum_{j=1}^{m_{j,j,t}} 1 = 0 \qquad \forall t \notin \tau_B.
        \end{aligned}
    \end{equation}
    In contrast, the off-diagonals are unaffected as the bias $\kappa_t$ is only applicable for self-transitions, the possible outcomes that $m_{j,k,t}$ can takes values in remain $\{0,..., n_{j,k,t} \} = \{0,..., n_{j,k,t}^{\text{cross}}\}$, since intra-batch off-diagonals are 0. 
    Therefore, it suffices to only keep track of $\textbf{N}_t^{\text{cross}}$, the shortcut being only tracking increments at batch boundaries.
    
    Now consider the Bayesian updates for the remaining structural parameters
    \begin{align*}
        P(\vbeta_t, \gamma_t, \alpha_t, \vM_t \mid \dpp_{t-1}, s_t)
        &\propto
            P(\vbeta_t \mid \textbf{M}_t, \hat{\gamma}_t, \kappa_t) \,
            P(\alpha_t \mid \textbf{M}_t, L_t,  \kappa_t) \, \\
        &\quad \quad
            \times P(\gamma_t \mid \textbf{M}_t, L_t, \hat{\gamma}_{t-1}, \kappa_t)\,
            P(\textbf{M}_t \mid \dpp_{t-1}, \vN_t).\,
    \end{align*}
    Although the count matrix itself is not incremented during intra-batch resampling, the distribution of $\vM_t \mid \hat{\alpha}_{t-1}, \hat{\vbeta}_{t-1}$ differs from that of $\vM_{t-1} \mid \hat{\alpha}_{t-2}, \hat{\vbeta}_{t-2}$ because the hyperparameters evolve over time (i.e., $\hat{\alpha}_{t-1} \neq \hat{\alpha}_{t-2}$ or $\hat{\vbeta}_{t-1} \neq \hat{\vbeta}_{t-2}$).

    In the online inference procedure, we first sample $\vM_t$ and subsequently update the hyperparameters $\hat{\alpha}_t$ and $\hat{\vbeta}_t$. 
    Consequently, within a batch of size $B$, $\vM_t$ must still be resampled as the hyperparameters change, even though no new transition counts are introduced. 
    The off-diagonal entries of $\vM_t$ therefore remain stochastic through this resampling step.

    The sampling updates for $\hat{\alpha}_t$ and $\hat{\vbeta}_t$ follow \citet{fox2008hdp} and remain unchanged in our implementation, provided that the transition counts $\vN_t$ and adjusted auxiliary counts $\vM_t$ are correctly resampled.
\end{proof}

\subsection{Correctness of Algorithm \ref{alg:prune_hard}}
This section shows that hard deleting a state by dropping its corresponding values in $\vbeta_t$ and $\vN_t$ is theoretically sound such that the implied distribution post-pruning remains a Dirichlet distribution. 

\begin{proposition}\label{prop:prune_hard_dirichlet}
    Let $\bpi \sim \Dir(\hat{\alpha}_t \hat{\vbeta}_t)$ correspond to the partition $\Xi_t$ on $\Real^d$ (defined in Section \ref{app:dp-hdp}). 
    Removing a pruned state $l^\dagger \leq t$ by removing $\pi_{\ell^\dagger}$ remains a Dirichlet distribution after re-normalisation.
\end{proposition}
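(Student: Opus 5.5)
The plan is to prove this with the gamma representation of the Dirichlet distribution, which is the cleanest way to see that removing one coordinate and renormalising gives another Dirichlet. Write the concentration vector as $\valpha = \hat{\alpha}_t\hat{\vbeta}_t = (a_1,\ldots,a_{t+1})$ with every $a_j>0$. Take independent $G_j \sim \Gam(a_j,1)$ and set $\pi_j = G_j/\sum_k G_k$. This gives $\bpi \sim \Dir(\valpha)$, a standard fact that also underlies \eqref{form:beta_def}.

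Step one is to compute the post-pruning vector in this representation. Removing coordinate $\ell^\dagger$ and renormalising gives
\[
\tilde{\pi}_j = \frac{\pi_j}{\sum_{k\neq \ell^\dagger}\pi_k} = \frac{G_j}{\sum_{k\neq\ell^\dagger} G_k}, \qquad j\neq \ell^\dagger .
\]
The common factor $\sum_k G_k$ cancels. So $\tilde{\bpi}$ is a normalised vector of independent gammas $\{G_j\}_{j\neq\ell^\dagger}$, which gives $\tilde{\bpi}\sim\Dir\big((a_j)_{j\neq\ell^\dagger}\big)$.

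Step two is to note that this is exactly the distribution Algorithm~\ref{alg:prune_hard} implements. It drops $\hat{\beta}_{\ell^\dagger,t}$ and renormalises $\hat{\vbeta}_t$ to $\tilde{\vbeta}_t$. The new vector then has concentration $\tilde{\alpha}\tilde{\vbeta}_t$, where $\tilde{\alpha} = \hat{\alpha}_t(1-\hat{\beta}_{\ell^\dagger,t})$, and this is precisely $(a_j)_{j\neq\ell^\dagger}$. The pruned prior is therefore again of the HDP form $\Dir(\tilde{\alpha}\tilde{\vbeta}_t)$, with a rescaled concentration.

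Next I would extend the argument to the posterior. The same argument applies to $\Dir(\hat{\alpha}_t\hat{\vbeta}_t + \vn_{\ell,\cdot})$, because deleting row and column $\ell^\dagger$ of $\vN_t$ removes the matching count. Conjugacy \eqref{form:transition_prob} is therefore preserved. I would also observe the conditional version: given $\pi_{\ell^\dagger}$, the remaining renormalised coordinates are independent of $\pi_{\ell^\dagger}$. This is the neutrality property of the Dirichlet, and it again follows from independence of $\sum_{k\neq\ell^\dagger}G_k$ and the ratios via Lukacs' theorem. So conditioning on the pruned state's mass does not distort the remaining distribution.

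I expect the main obstacle to be interpretive rather than computational. The algorithm renormalises $\hat{\vbeta}_t$ but does not explicitly rescale $\hat{\alpha}_t$. To make the claim exactly consistent with the implementation, I would state the result as ``Dirichlet with parameter $(a_j)_{j\neq\ell^\dagger}$''. I would then remark that reusing $\hat{\alpha}_t$ corresponds to a subsequent update of the concentration, which is resampled anyway via \eqref{form:sample_alpha}. A secondary point is the partition view: merging $A_{\ell^\dagger}$ into $A_{t+1}$ would be aggregation, which gives a different Dirichlet. I would remark that pruning is restriction to the complement $\Real^m\setminus A_{\ell^\dagger}$, which is conditioning the DP on a set. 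By the DP restriction property, this again yields a DP, so the HDP structure is retained.
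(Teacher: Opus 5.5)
Your proposal is correct and follows the paper's route. Both arguments identify the renormalised sub-vector $(1-\pi_{\ell^\dagger})^{-1}(\pi_j)_{j\neq\ell^\dagger}$ as $\Dir\big((\hat{\alpha}_t\hat{\beta}_{j,t})_{j\neq\ell^\dagger}\big)$; the substantive difference is that you derive this ``standard property of the Dirichlet'' from the gamma representation rather than citing it. Your further observation adds something the paper lacks: renormalising $\hat{\vbeta}_t$ while keeping $\hat{\alpha}_t$ changes the total concentration from $\hat{\alpha}_t(1-\hat{\beta}_{\ell^\dagger,t})$ to $\hat{\alpha}_t$, which sharpens the paper's closing remark that treats this renormalisation as merely numerical.
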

\begin{proof}
    The proof will largely follow equations (22) and (23) in \cite{teh2006teh}. Given the partition $\Xi_t$, we have
    \begin{align}
        \bpi = (\pi_1, \pi_2, ..., \pi_t, \pi_{t+1}) \sim \Dir\left(
            \hat{\alpha}_t\hat{\beta}_1, \hat{\alpha}_t\hat{\beta}_2 ,..., \hat{\alpha}_t\hat{\beta}_t, \hat{\alpha}_t(1 - \sum_{i=1}^t \hat{\beta}_i)
        \right)
    \end{align}
    under the stick-breaking formulation. 
    Under the standard properties of the Dirichlet distribution, removing $l^\dagger$ gives
    \begin{align}
        \frac{1}{1 - \pi_{\ell^\dagger}} \bigg(\pi_1, ..., \pi_{\ell^\dagger-1}, \pi_{\ell^\dagger}, ..., \pi_{t+1} \bigg) \sim \Dir\bigg( \hat{\alpha}_t\hat{\beta}_1, ...,  \hat{\alpha}_t\hat{\beta}_{\ell^\dagger-1},  \hat{\alpha}_t\hat{\beta}_{\ell^\dagger+1}, ...,  \hat{\alpha}_t\hat{\beta}_t, \hat{\alpha}_t(1 - \sum_{i=1}^t \hat{\beta}_i) \bigg).
    \end{align}
    From the right hand side, we simply removed the contributing weight of $\hat{\beta}_{\ell^\dagger}$. 
\end{proof}
Technically, re-normalisation is not needed for $\hat{\vbeta}_t$ post-prune, but we do so in the implementation for numerical purposes.

\subsection{Decomposition of joint PIF}\label{app:cond_KL}
A useful result we used to decompose the joint PIF in \eqref{form:cond_KL} is the chain rule of KLDs \citep{Braverman2011}.

\begin{proposition}\label{prop:cond_kld}
    Let $P(X, Y)$ and $Q(X, Y)$ be two joint distributions on $(X, Y)$, with corresponding marginals $P(X), Q(X)$, and conditionals $P(Y\mid X), Q(Y \mid X)$. 
    If $X$ has discrete support, then 
    \begin{align}
        \KL\big(P(X,Y)\,\|\,Q(X,Y)\big)
            = \KL\big(P(X)\,\|\,Q(X)\big) + \sum_{X} P(X) \bigg[
                \KL\big(P(Y\mid X)\,\|\,Q(Y\mid X)\big)
            \bigg].
    \end{align}
\end{proposition}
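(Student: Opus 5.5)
Write the joint KLD as an expectation of a log-ratio, factor both joints, and split the logarithm. By definition,
\[
\KL\big(P(X,Y)\,\|\,Q(X,Y)\big)=\sum_{X}\int P(X,Y)\log\frac{P(X,Y)}{Q(X,Y)}\,dY,
\]
where the sum runs over the discrete support of $X$ and the integral is taken with respect to the dominating measure for $Y$ (Lebesgue measure in our application, where $Y=\vtheta_t$). Substitute the factorisations $P(X,Y)=P(X)P(Y\mid X)$ and $Q(X,Y)=Q(X)Q(Y\mid X)$; both are valid because $X$ is discrete with positive mass on its support. The log-ratio then becomes $\log\frac{P(X)}{Q(X)}+\log\frac{P(Y\mid X)}{Q(Y\mid X)}$.

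The first term does not depend on $Y$. Integrating $P(Y\mid X)$ over $Y$ gives one, so this term reduces to $\sum_X P(X)\log\frac{P(X)}{Q(X)}=\KL(P(X)\,\|\,Q(X))$.

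For the second term, write $P(X,Y)=P(X)P(Y\mid X)$ and pull $P(X)$ out of the inner integral. What remains is $\sum_X P(X)\int P(Y\mid X)\log\frac{P(Y\mid X)}{Q(Y\mid X)}\,dY$, and the inner integral is by definition $\KL(P(Y\mid X)\,\|\,Q(Y\mid X))$. Adding the two pieces gives the claim, and applying it with $X=s_{1:t}$ and $Y=\vtheta_t$ recovers \eqref{form:cond_KL}.

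The main obstacle is justifying the split of the expectation into two separate sums and integrals. This is done as follows.
\begin{itemize}
\item \emph{Absolute continuity.} If $P(X,Y)\not\ll Q(X,Y)$, the left side is $+\infty$, and then either the marginal KL or some conditional KL with $P(X)>0$ is also infinite, so the identity holds in $[0,\infty]$. Hence we may assume $P\ll Q$, which gives $P(X)\ll Q(X)$ and $P(\cdot\mid X)\ll Q(\cdot\mid X)$ for $P$-almost every $X$.
\item \emph{Conditional term.} Each conditional KL is nonnegative, so by Tonelli's theorem the sum over $X$ of the inner integrals can be interchanged without any integrability condition.
\item \emph{Marginal term.} The marginal KL is likewise nonnegative. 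If it is infinite, both sides are infinite. If it is finite, the decomposition is an identity of extended reals with no $\infty-\infty$ ambiguity.
\end{itemize}
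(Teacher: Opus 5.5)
Your proposal is correct and takes essentially the same route as the paper: factor both joints as marginal times conditional, split the log-ratio, and identify the first piece as $\KL(P(X)\,\|\,Q(X))$ and the second as the $P(X)$-weighted average of the conditional KLs. Your measure-theoretic care goes beyond the paper, which manipulates the expectations formally. The one loose point is that the splitting and the sum--integral interchange should be justified by integrability of the negative parts, not by Tonelli, because the pointwise integrand $P(Y\mid X)\log\frac{P(Y\mid X)}{Q(Y\mid X)}$ is not nonnegative; the inequality $a\log(a/b)\ge a-b$ shows these negative parts integrate to at most $1$.
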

\begin{proof}
    \begin{equation}
        \begin{aligned}
        \KL\big(P(X,Y)\,\|\,Q(X,Y)\big)
        &=\mathbb{E}_{P(X,Y)}\!\left[\log\frac{P(X,Y)}{Q(X,Y)}\right] \\
        &=\mathbb{E}_{P(X,Y)}\!\left[\log\frac{P(X)P(Y\mid X)}{Q(X)Q(Y\mid X)}\right] \\
        &=\mathbb{E}_{P(X,Y)}\!\left[\log\frac{P(X)}{Q(X)}\right]
        +\mathbb{E}_{P(X,Y)}\!\left[\log\frac{P(Y\mid X)}{Q(Y\mid X)}\right] \\
        &=\mathbb{E}_{P(X)}\!\left[\log\frac{P(X)}{Q(X)}\right]
        +\mathbb{E}_{P(X)}\!\Big[
            \mathbb{E}_{P(Y\mid X)}\!\big[\log\frac{P(Y\mid X)}{Q(Y\mid X)}\big]
        \Big] \\
        &=\KL\big(P(X)\,\|\,Q(X)\big)
        +\mathbb{E}_{P(X)}\!\left[
            \KL\big(P(Y\mid X)\,\|\,Q(Y\mid X)\big)
        \right].
        \end{aligned}
    \end{equation}
    In particular, the expectation term can be made explicit if if $X$ has diescrete support, such that
    \begin{align*}
        \mathbb{E}_{P(X)}\!\left[
            \KL\big(P(Y\mid X)\,\|\,Q(Y\mid X)\big)
        \right] = \sum_{X} P(X) \bigg[
                \KL\big(P(Y\mid X)\,\|\,Q(Y\mid X)\big)
            \bigg].
    \end{align*}
\end{proof}
Substitute $X$ with $s_{1:t}$ and $Y$ with $\vtheta_t$ to obtain \eqref{form:cond_KL}.

We define the PIF using KLD not merely for tractability, but because it is the natural measure of perturbation for Bayesian inference. 
Bayesian updating can be characterised as a KL projection, and KL directly quantifies the degradation in log-predictive performance induced by contamination. 
Moreover, KL is the only standard divergence that (i) admits a chain-rule decomposition over latent-variable models, which is essential for separating observation- and state-space robustness, and (ii) provides a meaningful notion of unbounded influence, unlike bounded divergences such as total variation. 
While alternative definitions, such as the $\epsilon$-contamination model  \citep{huber1981robust}, are possible, they are functional-specific and do not extend naturally to full posterior distributions or hierarchical settings. 

\subsection{Proof of Theorem \ref{th:wolf_not_enough}}\label{proof:wolf_not_enough}

We first show that for an arbitrarily large (contaminated) observation $\obs_t^c$,
the iHMM path posterior predictive at time $t$ concentrates around the new state $t$.
Then, we use this result to show that the joint PIF is unbounded.

Let $\cI_t$ be the set of states that are inactive (have not been updated), more precisely
\begin{align}
    \cI_t = \{ \ell \le t : n_{., \ell, t} = 0 \}.
\end{align}

\begin{lemma}\label{lem:degen_state_dist}
    Consider the LG model \eqref{form:lg_model} and a path $s_{1:t-1}$.
    Then, the assignment $s_{1:t} = (s_{1:t-1}, \ell)$ is almost surely for some $\ell \in \cI_t$ as $\Vert \obs_t^c \Vert_2^2 \rightarrow +\infty$.
    That is, the iHMM assigns the auxiliary variable to a new state given an arbitrarily large outlier.
\end{lemma}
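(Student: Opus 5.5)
The plan is to compare, for a fixed prefix $s_{1:t-1}$, the unnormalised path weights from \eqref{form:cond_density},
\[
\nu_\ell(\obs_t^c) = P(\obs_t^c \mid s_t=\ell, \lgp_{t-1}, \vx_t)\,P(s_t=\ell \mid s_{t-1}, \dpp_{t-1}),
\]
and show that for every active state $k \notin \cI_t$ and some inactive $\ell \in \cI_t$ the ratio $\nu_k/\nu_\ell \to 0$ as $\normtwo{\obs_t^c}\to\infty$. Normalising over the finitely many candidates $\{1,\dots,L_{t-1}+1\}$ then gives $P(s_t \in \cI_t \mid \data_{1:t}^c, s_{1:t-1}) \to 1$, which is the almost-sure statement. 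The transition factors from \eqref{form:online_second_prob} do not depend on $\obs_t^c$ and are strictly positive, since $\hat\alpha_{t-1}\hat\beta_{\ell,t-1}>0$. They only contribute bounded constants to the log-ratio, so everything reduces to comparing Gaussian predictive densities $\Norm(\obs_t^c \mid \pred_{\ell}, \vS_{\ell})$.

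For an inactive state, the predictive is the prior predictive, with covariance $\vS_{\rm new} = \emat_t \vSigma_{(0)}\emat_t^\top + \vR_t$ and mean $\emat_t\vmu_{\ell,0}$. For an active state $k$, the posterior covariance satisfies $\vSigma_{k,t-1} \preceq \vSigma_{(0)}$, because each Kalman/WoLF update in \eqref{eq:kf-update} subtracts the PSD term $\vK\vS\vK^\top$. Hence $\vS_k \preceq \vS_{\rm new}$. In the WoLF case the predictive used in the path posterior is still the LG predictive \eqref{form:obs_cond_likelihood}; the weight only enters the parameter update.

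The log-ratio is
\[
\log\frac{\nu_k}{\nu_\ell} = -\tfrac12\,(\obs^c-\pred_k)^\top\vS_k^{-1}(\obs^c-\pred_k) + \tfrac12\,(\obs^c-\pred_\ell)^\top\vS_{\rm new}^{-1}(\obs^c-\pred_\ell) + O(1).
\]
It tends to $-\infty$ along every direction in which the quadratic form $\vS_k^{-1}-\vS_{\rm new}^{-1}$ is positive definite. The linear cross-terms are $O(\normtwo{\obs^c})$ and are dominated.

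The main obstacle is that $\vS_k \preceq \vS_{\rm new}$ is only semidefinite. If $\emat_t$ is rank-deficient, or the state has received no information along some direction, then $\vS_k^{-1}-\vS_{\rm new}^{-1}$ can have a null space. Along that null space the quadratic terms cancel and the sign depends on the linear terms.

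To handle this, I would first show strict ordering, $\vS_k \prec \vS_{\rm new}$, whenever $\emat_t\vSigma_{(0)}\emat_t^\top$ is nondegenerate and state $k$ has been updated at least once with a nonzero gain. Then $\lambda_{\max}(\vS_{\rm new}^{1/2}\vS_k^{-1}\vS_{\rm new}^{1/2})>1$ uniformly, and the difference quadratic form is bounded below by $c\normtwo{\obs^c}^2$ for some $c>0$. I would state this nondegeneracy as the standing assumption under which the lemma holds, for instance via $\vSigma_{(0)}\succ 0$ together with $\vR_t$ finite and $\emat_t$ of full row rank along the outlier direction. Any residual directions where the quadratic forms coincide would be noted as a measure-zero or degenerate case excluded by that assumption.
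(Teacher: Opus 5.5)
Your skeleton is the paper's: pairwise log-ratios of the unnormalised weights $\nu$, transition factors contributing only $\obs_t^c$-free constants, the quadratic term dominating the linear one, and finally $\sum_{k\notin\cI_t}\nu_k/\nu_\ell\to0$ for one fixed $\ell\in\cI_t$. You are right that strict ordering of the predictive covariances is the crux. The paper simply asserts it, via $\vSigma_{(0)}-\vSigma_{(1)}=\vK_{(1)}\vS_{(1)}\vK_{(1)}^\top\succ0$ and the claimed equivalence $\vS_k\succ\vS_\ell\iff\vSigma_k\succ\vSigma_\ell$. Neither holds in general when $m>d$.

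The gap is in your fix. Nondegeneracy of $\emat_t\vSigma_{(0)}\emat_t^\top$ plus one past update with nonzero gain does \emph{not} give $\vS_k\prec\vS_{\mathrm{new}}$ when features vary with time. The covariance reduction of state $k$ lies along the past directions $\vSigma\emat_j^\top$, and $\emat_t$ need not see them. Concretely, take $m=2$, $d=1$, $\vSigma_{(0)}=\vI_2$, $\vR_t=1$, and one past update of $k$ with $\emat_1=(1,0)$. This gives $\vSigma_{k,t-1}=\mathrm{diag}(1/2,1)$; a WoLF weight changes only the first entry. With $\emat_t=(0,1)$ you get $\emat_t\vSigma_{(0)}\emat_t^\top=1$ and $\vS_k=\vS_{\mathrm{new}}=2$, so
\[
\log\frac{\nu_k}{\nu_\ell}=\tfrac12\,(\hat y_k-\hat y_\ell)\,y_t^c+O(1).
\]
If $\hat y_k$ exceeds every inactive $\hat y_\ell$, the active state takes all the mass as $y_t^c\to+\infty$. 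That is a whole half-line of outliers, not a measure-zero set of directions. Your standing assumption ($\vSigma_{(0)}\succ0$, finite $\vR_t$, full-row-rank $\emat_t$) does not exclude it.

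The hypothesis you actually need is $\emat_t(\vSigma_{(0)}-\vSigma_{k,t-1})\emat_t^\top\succ0$ for every active $k$. Then $\vS_k^{-1}-\vS_{\mathrm{new}}^{-1}\succ0$, and its smallest eigenvalue gives your $c\normtwo{\obs_t^c}^2$ lower bound. Note that the condition you need is $\lambda_{\min}(\vS_{\mathrm{new}}^{1/2}\vS_k^{-1}\vS_{\mathrm{new}}^{1/2})>1$, not $\lambda_{\max}$.

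This hypothesis holds in two natural cases:
\begin{itemize}
\item A time-invariant, full-row-rank $\emat$ once $k$ has been updated, since then $\emat(\vSigma_{(0)}-\vSigma_{k,t-1})\emat^\top\succeq\emat\vSigma_{(0)}\emat^\top\vS_{(1)}^{-1}\emat\vSigma_{(0)}\emat^\top\succ0$. This covers the i.i.d.\ and well-log settings.
\item Past features of $k$ spanning $\Real^m$, so that $\vSigma_{k,t-1}\prec\vSigma_{(0)}$, together with a full-row-rank $\emat_t$.
\end{itemize}
With that hypothesis in place, the rest of your argument goes through as written.
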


\begin{proof}
    Let $\data_t^c = (\vx_t, \vy_t^c)$ and $\data_{1:t}^c = (\data_{1:t-1}, \data_t^c)$.
    Recall that
    \begin{equation}
        \begin{aligned}
            P(s_{1:t} \mid \data_{1:t})
            &\propto  P(\vy_t \mid \data_{1:t-1}, s_{1:t}, \vx_t) P(s_{t} \mid s_{1:t-1}) \,
                    P(s_{1:t-1} \mid \data_{1:t-1}).
        \end{aligned}
    \end{equation}
    Define
    \begin{equation}
        \begin{aligned}
            \nu_{k,t}
            &= P(s_t=k \mid s_{1:t-1})\,P(\vy_t^c \mid \vx_t, \lgp_{t-1}, s_t=k),\\
            \zeta_{k, \ell} 
            &\triangleq \log(\tfrac{\nu_{k, t}}{\nu_{\ell,t}}) \\
            \Delta_{k, \ell} 
            &= \log P(\vy_t^c \mid s_t = k, \lgp_{t-1}, \vx_t) 
                - \log P(\vy_t^c \mid s_t = \ell, \lgp_{t-1}, \vx_t),
        \end{aligned}
    \end{equation}
    and,
    \begin{equation}
        \zeta_{k, \ell}  = \Delta_{k, \ell} + P(s_t=k \mid s_{1:t-1}) - P(s_t=\ell \mid s_{1:t-1}).
    \end{equation}
    Fixing and conditioning on $s_{1:t-1}$ decomposes the path posterior to
    \begin{equation}
        P(s_{1:t} \mid \data_{1:t})=
        P(s_{1:t-1} \mid \data_{1:t-1}) P(s_t \mid s_{1:t-1}, \data_t^c),
     \end{equation}
     where
    \begin{equation}
        \begin{aligned}
            P(s_{t}=t \mid s_{1:t-1}, \data_{1:t}^c)
            &= \frac{P(s_t=t, s_{1:t-1} \mid \data_{1:t}^c )}{P(s_{1:t-1}, \data_{1:t}^c)}\\
            &= \frac{P(s_t=t, s_{1:t-1} \mid \data_{1:t}^c )}{\sum_{\ell=1}^t P(s_t=\ell, s_{1:t-1} \mid \data_{1:t}^c)}\\
            &= \frac
            {P(s_{1:t-1} \mid \data_{1:t-1}) P(s_t \mid s_{1:t-1}) p(\vy_t^c \mid \vx_t, s_{1:t-1}, s_t) }
            {  \sum_\ell P(s_{1:t-1} \mid \data_{1:t-1}) P(s_t=\ell \mid s_{1:t-1}) P(\vy_t^c \mid \vx_t, s_{1:t-1}, s_t=\ell) }\\
            &= \frac
            { P(s_t=t \mid s_{1:t-1}) P(\vy_t^c \mid \vx_t, s_{1:t-1}, s_t=t) }
            {  \sum_\ell  P(s_t=\ell \mid s_{1:t-1}) P(\vy_t^c \mid \vx_t, s_{1:t-1}, s_t=\ell) }\\
            &= \frac { \nu_{t,t} } {\sum_{\ell=1}^t \nu_{\ell, t}}\\
            &= \frac {1} {\sum_{\ell=1}^{t} \exp(\zeta_{\ell,t})}.
        \end{aligned}
    \end{equation}
    Recall from \eqref{eq:app-kf-update}, that the posterior predictive of the LG model is given by 
    $P(\vy_t \mid s_{t}, \lgp_{t-1}, \vx_t) = \Norm(\obs_t \mid \pred_{s_t, t|t-1}, \vS_{s_t, t|t-1})$.
    Here, the log-density of a \textit{d}-dimensional Gaussian $\Norm(\obs \mid \vmu, \vS)$ is given by
    \begin{align*}
        \log \Norm(\obs \mid \vmu, \vS) &= \frac{-1}{2} \bigg[
            (\obs - \vmu)^\top \vS^{-1} (\obs - \vmu) 
            + \log \det \vS 
            + d\log(2\pi)
        \bigg] \\
        &= \frac{-1}{2} \bigg[
        \obs^\top \vS^{-1}\obs - 2\vmu^\top \vS^{-1}\obs + \vmu^\top \vS^{-1}\vmu 
            + \log \det \vS 
            + d\log(2\pi)
        \bigg].
    \end{align*}
    
    Hence, the difference in log-marginal likelihoods comprises of four differences. 
    \begin{align}
        2\,\Delta_{k, \ell}
            &=
                (\obs_t^c)^\top S^{-1}_{\ell, t} (\obs_t^c) 
                - (\obs_t^c)^\top S^{-1}_{k, t} (\obs_t^c) 
                \label{form:diff1} \\ 
            &\quad +
                2\hat{\obs}_{k, t|t-1}^\top \vS^{-1}_{k, t} \obs_t^c 
                - 2\hat{\obs}_{\ell, t|t-1}^\top \vS^{-1}_{\ell, t} \obs_t^c 
                \label{form:diff2} \\
            &\quad + 
                \hat{\obs}_{\ell, t|t-1}^\top \vS_{\ell, t}^{-1} \hat{\obs}_{\ell, t|t-1} 
                - \hat{\obs}_{k, t|t-1}^\top \vS_{k, t}^{-1} \hat{\obs}_{k, t|t-1} 
                \label{form:diff3} \\
            &\quad + 
                \log \det \vS_{\ell, t} - \log \det \vS_{k, t}.
                \label{form:diff4}
    \end{align}
    and $\ell, k \in \{1,..., t\}$.

    Now consider the case where $k \in \cI_t$ and $\ell \notin \cI_t$.
    Clearly (\ref{form:diff3}) and (\ref{form:diff4}) are independent of $\obs_t^c$.
    Next, \eqref{form:diff1} $+$ \eqref{form:diff2} $\in O(\Vert \obs_t^c \Vert_2^2)$, so that $\Delta_{k, \ell} \rightarrow \infty$ as $\|\vy_t\|_2^2 \to \infty$. 
    For the unbounded outlier $\normtwo{\vy_t^c}^2 \rightarrow \infty$,
    \begin{align*}
            \lim\limits_{\Vert \obs_t^c \Vert_2^2 \rightarrow \infty} (\obs_t^c)^\top (\vS^{-1}_{\ell, t} - \vS^{-1}_{k, t}) (\obs_t^c) \rightarrow \infty \iff \vS^{-1}_{\ell, t} \succ \vS^{-1}_{k, t} \,.
    \end{align*}
    Since, $\vA \succ \vB \iff \vB^{-1} \succ \vA^{-1}$, consider 
    \begin{equation}\label{form:psd_cov}
        \begin{aligned}
            \vS_{k, t} \succ \vS_{\ell, t}
                &\iff  \emat_t \vSigma_{k, t-1} \emat_t^\top + \vR_t 
                \succ \emat_t \vSigma_{\ell, t-1} \emat_t^\top + \vR_t \\
                &\iff \vSigma_{k, t-1} \succ \vSigma_{\ell, t-1}. 
        \end{aligned} 
    \end{equation}
    Therefore it suffices to show $\vSigma_{k, t-1} \succ \vSigma_{\ell, t-1}$.
    From the LG algorithm, we know $\vSigma_{\ell, t-1}$ is the posterior covariance after updating $n_{\cdot, \ell, t-1}$ times. 
    Since, the LG posterior covariance is independent of the observations $\obs_{\ell, 1:t-1}$, we can index them by the number of times they are updated, 
    \begin{equation*}
        \begin{aligned}
            \vSigma_{\ell, t-1} &= \vSigma_{(n_{\cdot, \ell, t-1})}, \quad \forall \ell \in A_t, \\
            \vSigma_{k, t} &= \vSigma_{(n_{\cdot, k, t-1})} = \vSigma_{(0)},  \quad \forall k \notin A_t.
        \end{aligned}
    \end{equation*}

    The difference between $\vSigma_{(0)} - \vSigma_{(1)}$ is
    \begin{equation*}
        \begin{aligned}
            \vSigma_{(0)} - \vSigma_{(1)} &= 
                \vK_{(1)}\vS_{(1)}\vK_{(1)}^\top \, ,
        \end{aligned}
    \end{equation*}
    where we know $\vS_{(1)} = \emat_1 \vSigma_{(0)}\emat_1^\top + \vR_1 \succ 0$ by construction and conclude $\vSigma_{(0)} - \vSigma_{(1)} \succ 0$. 
    By transitivity, $\vSigma_{(0)} \succ \vSigma_{n}$ for all $n > 0$. 
    Therefore, (\ref{form:psd_cov}) holds and we conclude that as $\Vert \obs_t^c \Vert_2^2 \rightarrow \infty$, $\Delta_{k, l} \rightarrow \infty$, and $\log \frac{v_{k, t}}{\nu_{\ell, t}} \rightarrow \infty$. 

    Conversely, if $\ell \in \cI_t$, and $\vS_{k, t} = \vS_{\ell, t} = \vS_{(0)}$, \eqref{form:diff1} goes to zero $\Delta_{k, \ell} \in O(\Vert \vy_t^c \Vert)$.
    Then the second difference simplifies to
    \begin{equation}
        \begin{aligned}
            \eqref{form:diff2} &= 2(\pred_{k, t|t-1} - \pred_{\ell, t|t-1})^\top \vS_{(0)}^{-1} \vy_t^c.
        \end{aligned}
    \end{equation}
    Consider the decomposition $\vy_t^c = \Vert \vy_t^c \Vert_2 \tilde{\vy}_t^c$ where $\tilde{\vy}_t^c$ is a unit vector in the direction of $\vy_t^c$.
    Now consider the case where
    \begin{equation}
        \begin{aligned}
            2(\pred_{k, t|t-1} - \pred_{\ell, t|t-1})^\top \vS_{(0)}^{-1} \tilde{\vy}_t^c = 0 \\
            \iff 2(\pred_{k, t|t-1} - \pred_{\ell, t|t-1})^\top \vS_{(0)}^{-1} \vy_t^c = 0
        \end{aligned}
    \end{equation}
    then, either $\pred_{k, t|t-1} = \pred_{\ell, t|t-1}$, and $\Delta_{k, \ell} = 0$, assigning equal probability to both states $k$ and $\ell$, or, $2\Delta_{k, \ell} = \eqref{form:diff3} \in O(1)$ assigning positive probability to both states independent of $\vy_t^c$.
    Alternatively, without loss of generality, consider the case where
    \begin{align}
        2(\pred_{k, t|t-1} - \pred_{\ell, t|t-1})^\top \vS_{(0)}^{-1} \tilde{\vy}_t^c > 0.
    \end{align}
    Then, as $\Vert \vy_t^c \Vert_2 \rightarrow \infty$, 
    \begin{equation}
        \begin{aligned}
            2\Delta_{k, \ell} = 2\Vert \vy_t^c \Vert_2(\pred_{k, t|t-1} - \pred_{\ell, t|t-1})^\top \vS_{(0)}^{-1} \tilde{\vy}_t^c + O(1) \rightarrow \infty,
        \end{aligned}
    \end{equation}
    and state $k$ is infinitely more likely to be assigned to than $\ell$.
    
    
    Finally, recall
    \begin{align}\label{form:add_up_to_one_probs}
        P(s_t = \ell \mid s_{t-1}, \data_{t}^c, \lgp_{t-1}, \dpp_{t-1}) 
        = \frac{\nu_{\ell, t}}{\sum_{k' \notin \cI_t} \nu_{k', t} + \sum_{k \in \cI_t} \nu_{k, t}} 
        = \frac{1}{
        \sum_{k' \notin \cI_t} \exp(\zeta_{k', \ell}) 
        + \sum_{k \in \cI_t} \exp(\zeta_{k, \ell})
        }.
    \end{align}
    Now consider
    \begin{align}
        P(s_t \in \cI_t \mid s_{t-1}, \data_{t}^c, \lgp_{t-1}, \dpp_{t-1}) 
            = \frac{\sum_{k \in \cI_t} \nu_{k, t}}{\sum_{k' \notin \cI_t} \nu_{k', t} + \sum_{k \in \cI_t} \nu_{k, t}} 
            = \bigg[
                \frac{\sum_{k' \notin \cI_t} \nu_{k', t}}{\sum_{k \in \cI_t} \nu_{k, t}}
                + 1
            \bigg]^{-1}.
    \end{align}
    It suffices to show that at the limit $\normtwo{\obs_t^c}^2 \rightarrow \infty$, 
    \begin{align}
        \frac{\sum_{k' \notin \cI_t} \nu_{k', t}}{\sum_{k \in \cI_t} \nu_{k, t}} \rightarrow 0.
    \end{align}
    Since there exists at least one $k \in \cI_t$, observe that
    \begin{align}
        \frac{\sum_{k' \notin \cI_t} \nu_{k', t}}{\sum_{k \in \cI_t} \nu_{k, t}} 
            \leq \sum_{k' \notin \cI_t} \frac{\nu_{k', t}}{\nu_{k, t}} 
            = \sum_{k' \notin \cI_t} \exp(\zeta_{k', k})
            = \sum_{k' \notin \cI_t} \exp(-\zeta_{k, k'})
            \rightarrow 0,
    \end{align}
    where the first inequality is due to the non-negativity of $\nu_{k, t}$.
    
    
    Hence, 
    $P(s_t \in \cI_t \mid s_{t-1}, \data_t^c, \lgp_{t-1}, \dpp_{t-1}) \rightarrow 1$ as required.
\end{proof}

\begin{lemma}\label{lem:unbounded_pif_state}
    The iHMM has unbounded $\PIF_{s_t}$. 
\end{lemma}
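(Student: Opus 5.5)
We show that $\PIF_{s_t}(\vy_t^c,\vy_{1:t})=\KL\big(P(s_{1:t}\mid\data_{1:t})\,\|\,P(s_{1:t}\mid\data_{1:t}^c)\big)\to\infty$ as $\normtwo{\vy_t^c}\to\infty$. This rules out any uniform bound. The key device is Lemma~\ref{lem:degen_state_dist}: the contaminated posterior puts vanishing mass on every path whose terminal state is active. The uncontaminated posterior, by contrast, puts a fixed positive mass on such paths.

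\textbf{Factorisation.} Both posteriors share the factor $P(s_{1:t-1}\mid\data_{1:t-1})$. Only the conditional of $s_t$ given $s_{1:t-1}$ changes. By the chain rule (Proposition~\ref{prop:cond_kld}) with $X=s_{1:t-1}$ and $Y=s_t$, the marginal term vanishes, leaving
\[
\PIF_{s_t}=\sum_{s_{1:t-1}}P(s_{1:t-1}\mid\data_{1:t-1})\,\KL\big(P(s_t\mid s_{1:t-1},\data_{1:t})\,\|\,P(s_t\mid s_{1:t-1},\data_{1:t}^c)\big).
\]
All terms are nonnegative, so it suffices to fix one path $s_{1:t-1}$ of positive posterior mass and show that its inner KL diverges.

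\textbf{Lower bound on the inner KL.} Fix an active state $\ell\notin\cI_t$. One always exists, since $s_1=1$ has been visited whenever $t\ge2$. Write $p_\ell=P(s_t=\ell\mid s_{1:t-1},\data_{1:t})$ and $q_\ell$ for the contaminated analogue. In the clean posterior, $p_\ell>0$ and does not depend on $\vy_t^c$, because it is a ratio of strictly positive Gaussian densities and transition probabilities. The data-processing inequality applied to the coarsening $\{s_t=\ell\}$ versus its complement gives
\[
\KL\ge p_\ell\log\frac{p_\ell}{q_\ell}+(1-p_\ell)\log\frac{1-p_\ell}{1-q_\ell}\ge p_\ell\log\frac{p_\ell}{q_\ell}-\log 2 .
\]
Lemma~\ref{lem:degen_state_dist} gives $q_\ell\le P(s_t\notin\cI_t\mid\cdots)\to0$, so the bound diverges. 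Multiplying by $P(s_{1:t-1}\mid\data_{1:t-1})>0$ then gives $\PIF_{s_t}\to\infty$.

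\textbf{Where the difficulty lies.} The main obstacle is making sure the lemma applies under the WoLF setting of Theorem~\ref{th:wolf_not_enough}, not only to the plain LG predictive. There are two cases.

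If the path posterior uses the unweighted predictive, nothing changes.

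If the path posterior uses WoLF-weighted predictives, consider $w^2\log\Norm$ for an active state. With the IMQ weight $w^2\sim c^2/\|\vy^c\|^2$, this term stays bounded above and below as $\normtwo{\vy_t^c}\to\infty$. This is exactly what \eqref{form:epsilon_assumptions} controls: $w^2\|\vy\|^2$ is bounded. A new state, by the $\sum_\ell n_{\ell,s_t,t}=0$ branch of \eqref{form:wolf_update}, keeps the unweighted Gaussian. Its log-likelihood then tends to $-\infty$, which would reverse the conclusion.

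I would therefore take the stated setting literally: WoLF modifies only the parameter update, conditional on the path, while the path posterior \eqref{form:state_post} keeps the unweighted predictive. Under that reading the covariance ordering $\vSigma_{(0)}\succ\vSigma_{(n)}$ from the lemma's proof drives the divergence, and the argument above goes through. Checking that $q_\ell\to0$ holds uniformly over directions of $\vy_t^c$ also matters. The lemma's proof covers this for active states, since the quadratic term \eqref{form:diff1} dominates in every direction.
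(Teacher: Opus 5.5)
Your argument is correct and is essentially the paper's: positivity of the uncontaminated conditional of $s_t$, combined with Lemma~\ref{lem:degen_state_dist} sending the contaminated mass on active states to zero. Your binary data-processing bound $\KL\ge p_\ell\log(p_\ell/q_\ell)-\log 2$ makes rigorous a step the paper only asserts, namely that a few divergent log-ratio terms force the whole KL sum to diverge.

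One slip: the chain-rule marginal term does not vanish. The marginal $P(s_{1:t-1}\mid\data_{1:t})$ depends on $\vy_t$ through normalisation; the paper makes the same identification. That term is a nonnegative KL, though, so your display still holds as a lower bound with the fixed positive weights $P(s_{1:t-1}\mid\data_{1:t})$, and that lower bound is all you use.

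Your WoLF case split is also unnecessary. If only active states' predictives were down-weighted, the new state's contaminated mass would tend to zero instead of the active states', and the same coarsening bound, applied to the new state, still diverges. Likewise, unboundedness needs only one sequence $\vy_t^c$ along which some $q_\ell\to 0$, so uniformity over directions is not required.
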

\begin{proof}
    We now show that the PIF is also unbounded in the state-space $\PIF_{s_t}$. Since the KL divergence is between two discrete distributions, the PIF takes the form
    \begin{align}\label{form:state-space-PIF_lemmab2}
        \PIF_{s_{t}}(\obs_t^c, \obs_{1:t}) 
            = \sum_{s_{1:t}} P(s_t \mid s_{t-1}, \data_t, \lgp_{t-1}, \dpp_{t-1}) 
            \log \frac{
                P(s_t \mid s_{t-1}, \data_t, \lgp_{t-1}, \dpp_{t-1})}{P(s_t \mid s_{t-1}, \data_t^c, \lgp_{t-1}, \dpp_{t-1})}
    \end{align}
    and sums over all possible trajectories $s_{1:t}$.
    
    In the following, we show PIF unboundedness by first showing that the uncontaminated log-posterior assigns non-zero mass to all possible paths. 
    We then contrast this by noting that the contaminated case only assigns zero mass to some state path $\vpsi_t^\dagger$ in the limit of $\normtwo{\vy_t^c} \rightarrow \infty$ as stated by Lemma \ref{lem:degen_state_dist} where we know $P(s_t \mid s_{t-1}, \data_t^c, \lgp_{t-1}, \dpp_{t-1})$ is non-zero only if $s_t \in \cI_t$. 
    
    Firstly, we present the argument that all possible trajectories have positive mass when an outlier is not present. 
    More precisely, we first show that HDP prior $P(s_{1:t} \mid s_{1:t-1}) > 0$ assigns non-zero weights to all components of the transition probabilities $\vpi_{s_{t-1}, t} \in \Delta^{t-1}$, assuming well-behaved marginal likelihood $P(\vy_t \mid s_t, \lgp_{t-1}, \vx_t) \in (0, 1)$ for non-outlier $\vy_t$.
    Recall the recursive sampling of subsequent state estimate $\hat{s_t}$ is sampled from the posterior distribution
    $$
        P(s_t \mid \vpi_{s_{t-1}, t}) \sim \Mult(\{1,..., t\} \mid \bpi_{s_{t-1}, t}).
    $$ 
    The iHMM inference process samples the HDP for the next state using the previous iteration's $\dpp_{t-1}$. 
    If $\vbeta_{t-1}$ is a vector of non-zero values, and all of its components are less than one, then by definition of Dirichlet distributions, the posterior transition probabilities $\vpi_{s_{t-1}, t} \sim \Dir(\alpha\vbeta_{t-1} + [n_{s_{t-1}, \ell\, t-1}]_{\ell})$ is also a random vector that only takes values in vectors without zeros in any of its components. 
    Recall the partition $\Xi_{t-1} = (A_1,..., A_{t})$ on the parameter space defined in Section \ref{sec:ihmm}.
    Since $\vxi = (\xi_1, \xi_2,...) \sim {\rm SB}(\gamma_{t-1})$, therefore we know for any $i$, $\xi_i > 0$ almost surely. 
    Furthermore, by the stick breaking construction, we know
    \begin{align*}
        \beta_{\ell} = G(A_\ell) = \sum_{i=1}^\infty \xi_i \delta_{\vtheta_i} (A_\ell) > 0,
    \end{align*}
    as $A_\ell = \{\vtheta_{\ell}\}$ by construction.
    Therefore, the vector $\vbeta_{t-1}$ is non-zero for any of its components.
    Thus, it follows that $\vbeta_{t-1}$ is also non-zero for any of its components, such that
    $$
        P(s_t \mid s_{t-1}, \dpp_{t-1}) = \Expect[\vpi_{s_{t-1}, t}] \propto \hat{\alpha}_{t-1} \hat{\vbeta}_{t-1} + [n_{s_{t-1}, \ell, t-1}]_\ell
    $$ is the marginalised HDP prior. 
    Finally, we conclude the argument by applying the non-zero property of $\vbeta_{t}$ and $\vpi_{t}$ for all $t$ and conclude any valid trajectory is assigned a non-zero mass under an uncontaminated log-posterior, i.e.
    \begin{align}
        P(s_t \mid s_{t-1}, \data_t, \lgp_{t-1}, \dpp_{t-1}) > 0
    \end{align}
    for all $s_{t} \in \{1, 2, ..., t\}$ and specifically $P(s_t = t \mid s_{t-1}, \data_t, \lgp_{t-1}, \dpp_{t-1})$.
    
    In contrast, Lemma \ref{lem:degen_state_dist} shows $P(s_t = \ell \mid s_{t-1}, \data_t^c, \lgp_{t-1}, \dpp_{t-1}) \rightarrow 0$ for $\ell \notin \cI_t$ when $\normtwo{\vy_t^c}^2 \rightarrow \infty$. 
    The divergence in \eqref{form:state-space-PIF_lemmab2} is therefore unbounded at all paths $s_{1:t}$ such that $s_t \notin \cI_t$.
\end{proof}
Hence, even if $\PIF_{\vtheta_t}$ is bounded using the weighted-observation likelihood filter (WoLF) from \cite{pmlr-v235-duran-martin24a}, the joint PIF explodes due to an unbounded $\PIF_{s_t}$. 

\begin{lemma}\label{lem:unbounded_obs_state}
    $\PIF_{\vtheta_t}$ is unbounded when $\Vert \vy_t^c \Vert_2^2 \rightarrow \infty$.
\end{lemma}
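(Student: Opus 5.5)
The plan is to look at the posterior of $\vtheta_t$ along the paths on which the contaminated posterior concentrates, namely those with $s_t \in \cI_t$ by Lemma~\ref{lem:degen_state_dist}, and to exploit the fact that WoLF does not downweight the first update of a fresh state. In \eqref{form:wolf_update}, and in Algorithm~\ref{alg:wolf_update}, the case $n_{\cdot, s_t, t-1} = 0$ uses the unweighted innovation covariance $\vS_{s_t,t} = \emat_t \vSigma_{(0)} \emat_t^\top + \vR_t$. So for a path $s_{1:t}$ with $s_t = k \in \cI_t$, the update of $\lgp_{k,t}$ is exactly the Kalman update \eqref{eq:app-kf-update} with prior $(\vmu_{k,t-1}, \vSigma_{(0)})$, whatever the value of $\vy_t^c$. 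Along such a path the weight function gives no protection, even though it satisfies \eqref{form:epsilon_assumptions}.

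First, I fix such a path and write both posteriors explicitly. The covariance update $\vSigma_{k,t} = \vSigma_{(0)} - \vK_t \vS_{k,t} \vK_t^\top$ does not depend on the observation. Hence the clean and contaminated posteriors are $\Norm(\vmu_{k,t}, \vSigma_{k,t})$ and $\Norm(\vmu^c_{k,t}, \vSigma_{k,t})$, with the same covariance and with
\[
\vmu^c_{k,t} - \vmu_{k,t} = \vK_t(\vy_t^c - \vy_t), \qquad \vK_t = \vSigma_{(0)}\emat_t^\top \vS_{k,t}^{-1}.
\]
Second, I apply the closed form of the Gaussian KL divergence. Because the covariances are equal, the trace and log-determinant terms cancel, leaving
\[
\PIF_{\vtheta_t}(\vy_t^c, \vy_{1:t}) = \tfrac12 (\vy_t^c - \vy_t)^\top \vK_t^\top \vSigma_{k,t}^{-1} \vK_t (\vy_t^c - \vy_t).
\]
This is a quadratic form in $\vy_t^c$ and it is positive semidefinite. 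Here $\vSigma_{k,t} \succ 0$, because it is the posterior of a Gaussian prior $\vSigma_{(0)}\succ 0$ with finite $\vR_t$.

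Third, I show that this quadratic form is not identically zero. Since $\vSigma_{(0)}\succ 0$ and $\vS_{k,t}^{-1}\succ 0$, we have $\vK_t \neq 0$ whenever $\emat_t \neq 0$. Then there is a direction $\vu$ with $\vK_t \vu \neq 0$. Taking $\vy_t^c = \vy_t + r\vu$ with $r \to \infty$ makes the KL grow like $r^2$, so the supremum over $\vy_t^c$ is infinite. The degenerate case $\emat_t = 0$ has to be excluded, since then the data carry no information about $\vtheta$; I would state this explicitly as a nondegeneracy assumption.

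The main obstacle is connecting this per-path divergence to the quantity in the statement, which is the observation PIF as it appears in the decomposition \eqref{form:cond_KL}. The per-path PIF is also unbounded for an existing state if the plain LG update is used, but not if WoLF is used. I would therefore argue via the contaminated path posterior. By Lemma~\ref{lem:degen_state_dist}, as $\normtwo{\vy_t^c}\to\infty$ the contaminated mass moves onto $s_t \in \cI_t$, and by the positivity argument in Lemma~\ref{lem:unbounded_pif_state} those paths carry strictly positive mass under the clean posterior. Consequently the weighted sum $\sum_{s_{1:t}} P(s_{1:t}\mid\data_{1:t})\PIF_{\vtheta_{s_t}}$ contains a term with positive weight and a KL divergence of order $\normtwo{\vy_t^c}^2$, and this gives unboundedness.
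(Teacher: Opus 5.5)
Your proposal is correct and follows essentially the paper's route. Fresh states ($n_{\cdot,s_t,t-1}=0$) get the unweighted WoLF branch, so along those paths the update is a plain Kalman update with unbounded observation PIF; the paper cites Lemma C.1 of Duran-Martin et al.\ for this, whereas you compute the equal-covariance Gaussian KL directly and make the needed nondegeneracy $\emat_t\neq 0$ explicit. One remark: the weights in \eqref{form:cond_KL} are clean posterior probabilities, so positivity of clean mass on fresh-state paths already suffices, and the concentration result of Lemma~\ref{lem:degen_state_dist} that both you and the paper invoke is not actually needed.
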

\begin{proof}
    From Lemma~\ref{lem:unbounded_pif_state}, we know as $\Vert \vy_t^c \Vert_2^2 \rightarrow \infty$, $s_t \in \cI_t$ almost surely.
    From \eqref{form:wolf_update}, posterior updates for $s_t \in \cI_t$ follow the standard KF posterior
    \begin{align}
        \vS_{s_t} \gets \vF_t \vSigma_{s_t} \vF_t^\top + \vR_t
    \end{align}
    and is therefore unbounded by \citet[Lemma C.1]{pmlr-v235-duran-martin24a}.
\end{proof}

Therefore, the proof for Theorem \ref{th:wolf_not_enough} follows from Lemmas \ref{lem:unbounded_pif_state} to \ref{lem:unbounded_obs_state}, where we have an additive decomposition of two unbounded PIFs.

\subsection{Proof of Theorem \ref{th:robustness}}\label{proof:robustness}

Let $\data_{s:t}^c = \{(\vy_u^c, \vx_u)\}_{u=s}^t$ denote a contaminated batch. 
We first extend the definition of PIFs to accommodate multiple datapoints and batched posteriors. 
For a batch size of $B$, the state PIF $\PIF_{s_t}$ and observations PIF $\PIF_{\vtheta_t}$ is defined by the KLD between the batched posteriors that are conditional on an uncontaminated batch of data $\data_{1:t+b}$ and a contaminated batch $\data_{1:t}, \data^c_{t+1:t+b}$.
More precisely,
\begin{align}
    \PIF_{s_{t+B}}^B(\vy_{t+1:t+B}^c, \vy_{1:t}) 
        &= {\rm KL} (P(s_{1:t+B} \mid \data_{1:t+B}) \mid\mid P(s_{1:t+B} \mid \data_{1:t}, \data_{t+1:t+B}^c), \\
    \PIF_{\vtheta_{t+B}}^B(\vy_{t+1:t+B}^c, \vy_{1:t}) 
        &= {\rm KL} (P(\vtheta_{t+B} \mid \data_{1:t+B}) \mid\mid P(\vtheta_{t+B} \mid \data_{1:t}. \data_{t+1:t+B}^c).
\end{align}


Theorem \ref{th:robustness} consists of two subclaims, which we establish separately as lemmas.

The first subclaim concerns the boundedness of the state-space PIF under the batched posterior defined in \eqref{form:batched_posterior_pred}, i.e.,
\begin{align}
    \PIF_{s_{t+B}}^B(\vy_{t+1:t+B}^c, \vy_{1:t}) 
        &= {\rm KL} (\nu(s_{1:t+B} \mid \data_{1:t+B}) \mid\mid \nu(s_{1:t+B} \mid \data_{1:t}, \data_{t+1:t+B}^c) 
        < \infty.
\end{align}

The second subclaim addresses the boundedness of the observation-space PIF under batched posterior updates, including the case where the batch consists entirely of contaminated observations. 
Specifically, we show by induction that the observation-space PIF remains bounded throughout the batch, provided that the observation-space PIF of each successive prior is bounded prior to the sequence of updates.
\begin{align}
    \PIF_{\vtheta_{t+B}}^B(\vy_{t+1:t+B}^c, \vy_{1:t}) 
        &= {\rm KL} (P(\vtheta_{t+B} \mid \data_{1:t+B}) \mid\mid P(\vtheta_{t+B} \mid \data_{1:t}. \data_{t+1:t+B}^c)
        < \infty,
\end{align}
where $P(\vtheta_{t+B} \mid ...)$ follows the WoLF posterior (whose sufficient statistics are updated following Algorithm~\ref{alg:wolf_update}).

With these two boundedness results in place, the proof of Theorem \ref{th:robustness} follows directly: since the joint PIF 
\begin{align}
    \PIF_{s_{t+B}, \vtheta_{t+B}}^B(\vy_{t+1:t+B}^c, \vy_{1:t}) 
        &= \PIF_{s_{t+B}}^B(\vy_{t+1:t+B}^c, \vy_{1:t}) 
        + \sum_{s_{1:t+B}} P(s_{1:t+B} \mid \data_{1:t+B})\,
        \PIF_{\vtheta_{t+B}}^B(\vy_{t+1:t+B}^c, \vy_{1:t}), 
\end{align}
can be written as the sum of the state-space and observation-space PIFs by Proposition~\ref{prop:cond_kld}, its boundedness is an immediate consequence.

Finally, as a side note, we know from Lemma C.1. of \cite{pmlr-v235-duran-martin24a} that $\PIF_{\vtheta_t}$ is bounded under the introduction of a weighting function $W(\cdot, \cdot)$ following the same conditions as those presented in Theorem \ref{th:robustness} for one posterior update. 
A small caveat to note that in \cite{pmlr-v235-duran-martin24a}, the second-order boundedness assumption is $\sup_{\obs \in \Real^d} W(\obs, \pred)^2 \normtwo{\obs} < \infty$. 
Instead, we write 
\begin{equation}
    \begin{aligned}
        \sup_{\obs \in \Real^d} W(\obs, \pred) &\leq C_1 < \infty \\
        \sup_{\obs \in \Real^d} W(\obs, \pred)^2 \normtwo{\obs}^2 &\leq C_2 < \infty \,,
    \end{aligned}
\end{equation}
therefore, the condition
\begin{align}\label{form:wolf_epsilon_cond}
    \sup_{\obs \in \Real^d} W(\obs, \pred)^2 \normtwo{\obs} \leq \max\{C_1^2, C_2\} = C_3 < \infty
\end{align}
is implied by considering $\normtwo{\obs} \geq 1$ and $\normtwo{\obs} < 1$ separately.

\begin{lemma}\label{lem:bounded_pif_state}
    The batched iHMM with generalised posteriors presented in (\ref{form:batched_posterior_pred}) for the auxiliary variable has bounded $\PIF_{s_t}$ for any weighting function $W: \Real^d \times \Real^d \rightarrow \Real$ satisfying (\ref{form:epsilon_assumptions}), i.e.,
    \begin{align}
        \PIF_{s_{t+B}}^B(\vy_{t+1:t+B}^c, \vy_{1:t}) < \infty.
    \end{align}
\end{lemma}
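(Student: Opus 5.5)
The plan is to write the batched state posterior \eqref{form:batched_posterior_pred} as a finite mixture over admissible extensions and show that every log-ratio between contaminated and uncontaminated scores is bounded uniformly in $\vy^c_{t+1:t+B}$. Because of the degenerate sticky prior \eqref{form:degen_sticky_HDP}--\eqref{form:kappa_t}, an admissible extension of a fixed $s_{1:t}$ is determined by the single label $\ell = s_{t+1} = \dots = s_{t+B} \in \{1,\dots,L_t+1\}$. The path posterior therefore factorises as $P(s_{1:t}\mid\data_{1:t})\,\nu(\ell \mid s_{1:t},\cdot)$, with
\begin{equation*}
\nu(\ell \mid s_{1:t},\data) \propto P(s_{t+1}=\ell\mid s_t,\dpp_t)\exp\Big(\sum_{b=1}^B w^2_{\ell,t+b\mid t}\log \Norm(\vy_{t+b}\mid \pred_{\ell,t+b\mid t},\vS_{\ell,t+b\mid t})\Big).
\end{equation*}
The first factor does not depend on the contaminated batch. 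By the chain rule (Proposition~\ref{prop:cond_kld}), the batched $\PIF^B_{s_{t+B}}$ then reduces to $\sum_{s_{1:t}}P(s_{1:t}\mid\data_{1:t})\,\KL(\nu(\cdot\mid s_{1:t},\data)\,\|\,\nu(\cdot\mid s_{1:t},\data^c))$. It is thus enough to bound each of the finitely many conditional KLs over the finite label set.

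The central estimate is that each weighted log-likelihood term is bounded above and below uniformly in $\vy^c$. I would expand $\log\Norm$ as in the proof of Lemma~\ref{lem:degen_state_dist}. This gives a quadratic form $(\vy-\pred)^\top\vS^{-1}(\vy-\pred)$, plus the constant $\log\det\vS + d\log 2\pi$, where $\vS$ is fixed given the state's sufficient statistics at time $t$ and is positive definite because $\vR_t\succ0$. Multiplying by $W^2$ and using \eqref{form:epsilon_assumptions} (in the strengthened form \eqref{form:wolf_epsilon_cond}), I would get $W^2\,\|\vy-\pred\|^2_{\vS^{-1}} \le \lambda_{\max}(\vS^{-1})\,W^2(2\|\vy\|^2+2\|\pred\|^2)\le C$, and $W^2|\log\det\vS|\le C_1^2|\log\det\vS|$. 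For the IMQ weight this is transparent, since $W^2\|\vy-\pred\|^2_{\vR}\le c^2$. Hence there is a constant $M<\infty$, independent of the contamination, with $|\log\nu_\ell(\data^c)-\log\nu_\ell(\data)|\le 2BM$ for every $\ell$ before normalisation. I would also treat newly instantiated states ($\ell\in\cI_t$), whose prior covariance is $\vSigma_{(0)}$: they satisfy the same bound, so the mechanism behind Lemma~\ref{lem:degen_state_dist} cannot operate. The preference for a new state there came from the unweighted quadratic difference growing like $\|\vy^c\|^2$, and that growth is now killed by $W^2$.

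With all unnormalised log-scores perturbed by at most $2BM$, the normalising constants differ by a factor in $[e^{-2BM},e^{2BM}]$. So $|\log \nu(\ell\mid\data)-\log\nu(\ell\mid\data^c)|\le 4BM$ for every $\ell$, and each conditional KL is at most $4BM$. Averaging over $s_{1:t}$ gives $\PIF^B_{s_{t+B}}\le 4BM<\infty$. The transition probabilities are positive, as argued in Lemma~\ref{lem:unbounded_pif_state}, so no support mismatch arises.

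I expect the main obstacle to be a subtlety in \eqref{form:batched_posterior_pred}: the predictive $P(\vy_{t+b}\mid\data_{1:t+b-1},\dots)$ for $b\ge2$ conditions on earlier contaminated points within the batch, so $\pred$ and $\vS$ could in principle drift with $\vy^c$. Algorithm~\ref{alg:online_iHMM_batched} sidesteps this because it uses the $b$-step-ahead predictive \eqref{form:multistep_pred} built from $\lgp_{\ell,t}$, which does not depend on the batch. I would state that reading explicitly. If the sequential reading is required, I would fall back on the bounded WoLF influence (Lemma C.1 of \citet{pmlr-v235-duran-martin24a}), applied inductively, to show that the updated means move by a bounded amount. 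The constant $M$ would then enlarge but stay finite.
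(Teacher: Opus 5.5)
Your argument is correct and rests on the same core estimate as the paper's proof, but it reaches the KL bound by a different final step.

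\textbf{What is shared.} Both arguments restrict to the constant extensions $(s_{1:t},\ell,\dots,\ell)$ forced by the degenerate sticky prior. Both then show that each weighted term $w^2_{\ell,t+b\mid t}\log\Norm(\vy_{t+b}\mid\pred_{\ell,t+b\mid t},\vS_{\ell,t+b\mid t})$ is bounded uniformly in the contamination. The paper also uses the $b$-step-ahead predictive built from $\lgp_t$ in \eqref{form:multistep_pred}. So the fallback in your last paragraph is not needed.

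\textbf{Where the routes differ.} The paper compares \emph{two labels within the contaminated score}. It bounds $\log\nu^c_{\ell,t+B}-\log\nu^c_{k,t+B}$ independently of $\vy^c_{t+1:t+B}$. From this it concludes that every admissible path keeps contaminated mass bounded away from zero, so the KL is finite (implicitly via $\KL(P\|Q)\le-\log\min Q$).

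You instead compare \emph{the same label under clean and contaminated data} and push the bound through the normaliser. This gives a pointwise bound on the log density ratio and an explicit uniform KL bound $4BM$. Your bookkeeping is also a little lighter in two places:
\begin{itemize}
\item The inequality $\|\vy-\pred\|^2\le 2\|\vy\|^2+2\|\pred\|^2$ lets you skip the separate first-order term and the derived condition \eqref{form:wolf_epsilon_cond}.
\item You avoid the paper's normalisation step, which is written as if only two labels compete. With $K$ admissible labels, its lower bound should read $1/(1+(K-1)e^{C_{t+B}})$.
\end{itemize}
The paper's form, in turn, directly negates the degeneracy of Lemma~\ref{lem:degen_state_dist}: no admissible regime loses its mass.

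\textbf{One caveat on your chain-rule step.} It assumes the batch leaves the $s_{1:t}$-marginal equal to $P(s_{1:t}\mid\data_{1:t})$. The score may instead be normalised jointly over $(s_{1:t},\ell)$, as the particle reweighting in Algorithm~\ref{alg:online_iHMM_batched} effectively does. In that case the marginal shifts with $\vy^c$, and the marginal KL term is no longer zero. This does no harm. Your perturbation bound applies verbatim to the joint unnormalised scores once you take $M$ as a maximum over the finitely many paths $s_{1:t}$. The same $4BM$-type bound on the joint KL follows.
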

\begin{proof}
    It suffices to show the generalised posterior 
    $$P(s_{t+1:t+B} \mid s_t, \dpp_t, \lgp_t, \data_{1:t}, \data_{t+1:t+B}^{c})$$ 
    with a contaminated batch $\data_{t+1:t+B}^c$
    and weights $w^2_{\ell, t+b|t} = W(\obs_{t+b}, \pred_{\ell, t+b|t})^2$ for $b \in [1, B]$ (with respect to the $\ell$-th state)
    assigns a non-zero mass. 
    
    More precisely, our formulation of the degenerate SHDP prior, in which transition probabilities are given by
    \begin{align}
        P(s_{t+1} \mid s_{t} = \ell, \dpp_{t}) = 
        \frac{
            \hat{\alpha}_{t}\hat{\beta}_{\ell, t} + \kappa_t \indic(s_{t+1}=\ell)+ n_{\ell, s_{t+1}, t}
        }{
            \hat{\alpha}_{t} + \kappa_t + n_{\ell,., t}
        }
    \end{align}
    shows that a non-zero mass (in an uncontaminated posterior) is only assigned to paths of the form
    $$ s_{1:t+B} \in \{\vpsi: \vpsi = (s_{1:t}, \ell, \ldots, \ell), \ell \leq \lceil t/B \rceil \}$$ 
    since the transition from $s_t$ to $s_{t+1}$ determines the assignments for $s_{t+2:t+B}$ (see Section~\ref{proof:batch_sampling_speedup}). 
    
    We employ a similar tactic to Lemma \ref{lem:degen_state_dist} and consider the log-difference between the unnormliased posteriors. 
    Let $\nu_{\ell, t+B}^c$ denote the contaminated and $\nu_{\ell, t+B}$ the uncontaminated (unnormalised) weight for assigning the batch of states with size $B$, $s_{t+1}$ to $s_{t+B}$, to some state $\ell \in \{1,..., \lceil t/B \rceil\}$,
    \begin{align*}
        \log \nu_{\ell, t+B}^c &= \log \nu(s_{1:t+B}; \data_{1:t}, \data_{t+1:t+B}^{c})\\
        &= \bigg[
            \sum_{b=1}^{B} w^2_{\ell, t+b|t} \log P(\vy^c_{t+b} \mid \vx_{t+b}, \lgp_{t}, s_{t+b})
        \bigg] \\
        &\qquad + \log \sum_{s_{1:t}} P(s_{1:t} \mid \data_{1:t})P(s_{t+1} \mid s_t, \dpp_t) \prod_{b=2}^{B}\indic(s_{t+b-1} = s_{t+b}) \,.
    \end{align*}
    Again, we separate the contaminated weight into the summation of marginal likelihoods (inside square brackets) that are dependent on the contaminated observations and transition probabilities that are independent of contaminations. 
    Denote 
    \begin{equation}
        \begin{aligned}
            \Delta_{\ell, k, t+b}^c
                &= w^2_{\ell, t+b|t} 
                    \log P(\vy^c_{t+b} \mid \vx_{t+b}, \lgp_{t}, s_{t+b} = \ell) \\
                &\qquad - w^2_{k, t+b|t} \log P(\vy^c_{t+b} \mid \vx_{t+b}, \lgp_{t}, s_{t+b} = k)
        \end{aligned}
    \end{equation}
    as the log-difference between the lookahead marginals of $\vy_{t+b}$ between state $\ell$ and $k$, i.e. the contribution that depends on the outlier. 
    Then the difference in unnormalised weights have the form
    \begin{align}\label{form:batched_log_posterior_diff}
        \log \nu_{\ell, t+B}^c - \log \nu_{k, t+B}^c = \sum_{b=1}^{B} \Delta_{\ell, k, t+b}^c + C_{\ell k} \quad \quad\ell, k \leq \lceil t/B \rceil
    \end{align}
    where $C_{\ell k}$ is a finite constant independent of $\data^c_{t+1:t+B}$ (representing the likelihood contributed from the transition probabilities). 
    Consider, for any $b \in [1, B]$, 
    \begin{align*}
        \Delta_{\ell, k, t+b}^c = \frac{-w^2_{\ell, t+b|t}}{2}\bigg[
            (\obs_{t+b}^c - \pred_{\ell, t+b|t})^\top 
            \vS_{\ell, t+b|t}^{-1} 
            (\obs_{t+b}^c &- \pred_{\ell, t+b|t}) 
            + \log \det \vS_{\ell, t+b|t} 
            + d\log (2\pi)
        \bigg] \\
        &+ \\
        \frac{w^2_{k, t+b|t}}{2}\bigg[
            (\obs_{t+b}^c - \pred_{k, t+b|t})^\top 
            \vS_{k, t+b|t}^{-1} 
            (\obs_{t+b}^c &- \pred_{k, t+b|t}) 
            + \log \det \vS_{k, t+b|t} 
            + d\log (2\pi)
        \bigg]
    \end{align*}
    where $\vS_{k, t+b|t}$ is defined in \eqref{form:multistep_pred}.
    As before, the difference in log-marginals comprises four terms
    \begin{align}
        2\Delta_{\ell, k, t+b}^c &= 
            w^2_{k, t+b|t} (\vy^c_{t+b})^\top \vS_{k, t+b|t}^{-1} (\obs_{t+b}^c) 
            - w^2_{\ell, t+b|t} (\vy^c_{t+b})^\top \vS_{\ell, t+b|t}^{-1} (\obs_{t+b}^c) \label{form:cdiff1} \\
        &\quad + 
            2 w^2_{\ell, t+b|t} \pred_{\ell, t+b|t}^\top \vS_{\ell, t+b|t}^{-1} \obs_{t+b}^c 
            - 2 w^2_{k, t+b|t} \pred_{k, t+b|t}^\top \vS_{k, t+b|t}^{-1} \obs_{t+b}^c \label{form:cdiff2}\\
        &\quad + 
            w^2_{k, t+b|t} \pred_{k, t+b|t}^\top \vS_{k, t+b|t}^{-1} \pred_{k, t+b|t} 
            - w^2_{\ell, t+b|t} \pred_{\ell, t+b|t}^\top \vS_{\ell, t+b|t}^{-1} \pred_{\ell, t+b|t}  \label{form:cdiff3} \\
        &\quad +
            w^2_{k, t+b|t} \log \det \vS_{k, t+b|t} - w^2_{\ell, t+b|t} \log \det \vS_{\ell, t+b|t} + \frac{d\log(2\pi)}{2}(w^2_{k, t+b|t} - w^2_{\ell, t+b|t}) \, , \label{form:cdiff4}
    \end{align}
    with (\ref{form:cdiff3}) and (\ref{form:cdiff4}) independent to $\obs_{t+b}^c$. 
    Hence, we focus our attention on \eqref{form:cdiff1} and \eqref{form:cdiff2}.

    Since for any value $x, y$, we have $|x + y| \leq |x| + |y|$, the difference in log-marginals can be written as 
    \begin{align*}
        |\Delta_{\ell, k, t+b}^c| \leq \frac{1}{2}\bigg(
            |\eqref{form:cdiff1}| + |\eqref{form:cdiff2}| + |\eqref{form:cdiff3}| + |\eqref{form:cdiff4}|
        \bigg)
    \end{align*}
    where we will attempt to bound the absolute log-differences by prove boundedness for each term within this bound.

    Now define $\lambda_{\max}: \Real^{d \times d} \rightarrow \Real$ as the function that maps any square matrix to its largest eigenvalue.
    We first show that the second-order term \eqref{form:cdiff1} is bounded. 
    \begin{align*}
        | \eqref{form:cdiff1}| &\leq 
        |w^2_{k, t+b|t}| 
            |(\vy^c_{t+b})^\top \vS_{k, t+b|t}^{-1} (\obs_{t+b}^c)| 
        + |w^2_{\ell, t+b|t}| 
            |(\vy^c_{t+b})^\top \vS_{\ell, t+b|t}^{-1} (\obs_{t+b}^c)| \\
        &\leq 
            |w^2_{k, t+b|t}|
            \lambda_{\max}(\vS_{k, t+b|t}^{-1}) 
            \Vert \obs_{t+b}^c \Vert_2^2 
        +   |w^2_{\ell, t+b|t}| 
            \lambda_{\max}(\vS_{\ell, t+b|t}^{-1}) 
            \Vert \obs_{t+b}^c \Vert_2^2 \\
        &\leq (\lambda_{\max}(\vS_{k, t+b|t}^{-1}) + \lambda_{\max}(\vS_{\ell, t+b|t}^{-1})) C_2 < \infty 
    \end{align*}
    where the last inequality comes directly from the assumption of $\sup_{\obs_{t+b} \in \Real^d}w^2_{k, t+b|t} \Vert \obs_{t+b}\Vert_2^2 < C_2 < \infty$ for some $ C_2 > 0$. 
    Furthermore, $\vS_{k, t+b|t}, \vS_{\ell, t+b|t} \succ 0$ and real-valued, hence $0 < \lambda_{\max}(\vS_{k, t+b|t}^{-1}),\, \lambda_{\max}(\vS_{\ell, t+b|t}^{-1}) < \infty$. 
    
    Moving onto the first-order term \eqref{form:cdiff2}, we have
    \begin{align*}
        |(\ref{form:cdiff2})| &\leq 
        2 |w^2_{\ell, t+b|t}| |\pred_{\ell, t+b|t}^\top \vS_{\ell, t+b|t}^{-1} \obs_{t+b}^c | 
            + 2 |w^2_{k, t+b|t}| |\pred_{k, t+b|t}^\top \vS_{k, t+b|t}^{-1} \obs_{t+b}^c|\\
        &\leq
        2 |w^2_{\ell, t+b|t}| \Vert \vS_{\ell, t+b|t}^{-\top} \pred_{\ell, t+b|t} \Vert_2 \Vert \obs_{t+b}^c \Vert_2 
            +  2 |w^2_{k, t+b|t}| \Vert \vS_{k, t+b|t}^{-\top} \pred_{k, t+b|t} \Vert_2 \Vert \obs_{t+b}^c \Vert_2 \\
        &\leq 
        2C_3 
            (\Vert \vS_{\ell, t+b|t}^{-\top} \pred_{\ell, t+b|t}
            \Vert_2 + 
            \Vert \vS_{k, t+b|t}^{-\top} \pred_{k, t+b|t} \Vert_2) < \infty
    \end{align*}
    where we used $|\vx^\top \vy| \leq \Vert \vx \Vert_2 \Vert \vy \Vert_2$ in the second inequality, and the last inequality comes from the assumption in (\ref{form:wolf_epsilon_cond}).
    Combining the two bounds, we have $| \Delta_{\ell, k, t+b}^c | < \infty$, and therefore
    \begin{align*}
        |\eqref{form:batched_log_posterior_diff}| = |\log \nu_{\ell, t+B}^c - \log \nu_{k, t+B}^c| \leq \sum_{b=1}^{B} |\Delta_{\ell, j, t+b}^c| + |C_{\ell k}| \leq C_{\ell,k,t+B}< \infty
    \end{align*}
    since all terms in the sum are bounded. 
    Let $C_{t+B} = \max_{\ell, \ell'} C_{\ell, \ell', t+B} < \infty$, be an upper bound over the log-difference in the contaminated unnormalised weights between any two states. 
    The normalised contaminated distribution over the state trajectories is given by the normalisation
    \begin{align*}
        P(s_{t+1:t+B} \mid s_t, \lgp_t, \dpp_t, \data_{t+1:t+B}^c) 
            &= \frac{\nu^c_{\ell, t+B}}{\sum_{\ell'} \nu^c_{\ell', t+B}} 
            = \bigg[
                1 + \sum_{\ell' \neq l} \exp( \log \nu_{\ell', t+B}^c - \log \nu_{\ell, t+B}^c)
            \bigg]^{-1}.
    \end{align*}
    Since $\log \nu_{\ell, t+B}^c - \log \nu_{k, t+B}^c \in [-C_{t+B}, C_{t+B}]$, we can bound the contaminated probabilities
    \begin{align}
        0 
            < 1 + \exp(-C_{t+B}) \leq 1 + \exp(\log \nu_{\ell, t+B}^c - \log \nu_{k, t+B}^c) 
            \leq 1 + \exp(C_{t+B}) \\
        \implies 
        \frac{1}{1+\exp(C_{t+B})} 
            \leq P(s_{t+1:t+B} \mid s_t, \lgp_t, \dpp_t, \data_{t+1:t+B}^c) 
            \leq \frac{1}{1+\exp(-C_{t+B})} 
            < \infty
    \end{align}
    and conclude the contaminated posterior has non-zero mass over all possible path trajectories $s_{1:t+B}$. It therefore follows that the generalised $\PIF^B_{s_{t+B}}$ is finite.
\end{proof}

\begin{lemma}\label{lem:bounded_pif_obs}
    The iHMM with batched generalised posteriors in Algorithm \ref{psuedo:ihmm_batched} has a bounded $\PIF^B_{\vtheta_t}$ i.e.,
    \begin{align}
        \PIF_{\vtheta_{t+B}}^B(\vy_{t+1:t+B}^c, \vy_{1:t}) < \infty,
    \end{align}
    for any weighting function $W: \Real^d \times \Real^d \rightarrow \Real$ satisfying (\ref{form:epsilon_assumptions}).
\end{lemma}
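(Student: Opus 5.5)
The plan is to condition on a fixed admissible path and run an induction over the $B$ WoLF updates inside the batch. By Lemma~\ref{lem:bounded_pif_state} and the degenerate sticky prior, only paths of the form $s_{1:t+B} = (s_{1:t}, \ell, \ldots, \ell)$ carry mass. So it suffices to fix such a path and show that the Gaussian posteriors $\Norm(\vmu^{u}_{\ell,t+b}, \vSigma^{u}_{\ell,t+b})$ (uncontaminated) and $\Norm(\vmu^{c}_{\ell,t+b}, \vSigma^{c}_{\ell,t+b})$ (contaminated) have bounded KL for $b = 0, \ldots, B$. All other states $k \neq \ell$ are untouched within the batch, so their factors in \eqref{form:param_posterior} coincide and contribute zero KL.

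I will use the closed form of the Gaussian KL. It splits into a trace term $\Tr((\vSigma^c)^{-1}\vSigma^u)$, a log-determinant ratio, and a Mahalanobis term $(\vmu^c - \vmu^u)^\top (\vSigma^c)^{-1} (\vmu^c - \vmu^u)$. The induction hypothesis at step $b$ has two parts. (i) Both covariances lie in a fixed compact set of positive definite matrices. (ii) $\normtwo{\vmu^c_{\ell,t+b} - \vmu^u_{\ell,t+b}}$ is bounded uniformly in $\vy^c_{t+1:t+B}$. The base case $b = 0$ is trivial, since both posteriors equal $\lgp_{\ell,t}$.

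For the covariance part I rewrite the WoLF step \eqref{form:wolf_update} in information form: $\vSigma_{t+b}^{-1} = \vSigma_{t+b-1}^{-1} + w^2 \emat_{t+b}^\top \vR_{t+b}^{-1} \emat_{t+b}$. Since $0 \le w^2 \le C_1^2$, every contaminated covariance is sandwiched between $\vSigma_{\ell,t}$ (all weights zero) and the covariance obtained with all weights equal to $C_1^2$. The latter is positive definite and does not depend on $\vy^c$. This bounds the eigenvalues of $\vSigma^c$ above and below, which controls both the trace and the log-determinant terms.

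For the mean part I write the increment as $\vmu_{t+b} - \vmu_{t+b-1} = \vSigma_{t+b} \emat_{t+b}^\top \vR_{t+b}^{-1}\, w^2 (\vy_{t+b} - \pred)$. The factor $\vSigma_{t+b}\emat_{t+b}^\top\vR_{t+b}^{-1}$ is bounded by part (i). The factor $w^2 (\vy - \pred)$ is bounded by \eqref{form:wolf_epsilon_cond} together with $w^2 \le C_1^2$. Note that $\pred$ is itself bounded, because in \eqref{form:batched_posterior_pred} the weights are computed against $\pred_{\ell,t+b\mid t}$, which is fixed at the batch start. Summing at most $B$ bounded increments gives (ii). Plugging (i) and (ii) into the Gaussian KL and averaging over the path posterior, which sums to one, yields $\PIF^B_{\vtheta_{t+B}} < \infty$.

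The main obstacle is the branch of \eqref{form:wolf_update} for a freshly instantiated state, where the first update uses $\vR_t$ with no weight. This is exactly the mechanism behind Lemma~\ref{lem:unbounded_obs_state}. My first attempt will be to argue that in the batched posterior \eqref{form:batched_posterior_pred} every observation, including the first one assigned to a new state, enters through the weighted likelihood. Under that reading the information-form update above applies uniformly. Failing that, I would restrict the induction to the stated hypothesis that each successive prior has bounded PIF before its update, and treat the new-state first step by requiring the weighted branch there.
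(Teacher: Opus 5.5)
Your proposal is correct and is essentially the paper's argument. Both run an induction over the $B$ intra-batch WoLF updates along a fixed persistent path, bounding the trace, log-determinant and Mahalanobis terms of the Gaussian KL via the information-form covariance update and a recursion for $\vmu^c - \vmu$ whose increments are controlled by $\sup W^2 \le C_1^2$ and \eqref{form:wolf_epsilon_cond}. Your non-inductive two-sided covariance sandwich is a tidier replacement for the paper's propagation of trace and log-det bounds, and it does not depend on which way the KL is oriented. The unweighted new-state branch you flag is not addressed by the paper either: its proof simply applies the weighted update at every step, which is your reading (a).
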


\begin{proof}
    We aim to show the posterior distribution obtained after multiple (outlier) updates remain bounded in observation PIF. 
    Consider the batch from $t+1$ to $t+B$ of size $B$, such that $s_{t+1} = s_{t+2} = ... = s_{t+B}$. 
    Let $b \in [1, B]$ such that $\obs_{t+b}^c$ be the first contaminated sample within the batch, that is, we have the following observations in the batch
    \begin{align*}
        (\vy_t, ..., \vy_{t+b-1}, \vy_{t+b}^c, ..., \vy_{t+B})\,.
    \end{align*}
    Given these observations $(\data_{t+1:t+b-1}, \data^c_{t+b:t+B})$, we show that
    \begin{align}
        \PIF^B_{\vtheta_{t+B}}&(\vy^c_{t+b:t+B}, \obs_{1:t+b-1}) \nonumber \\
        &= 
        \text{KL}( 
            P(\vtheta_{t+B} \mid s_{t+B}, \lgp_{t-1}, \data_{t+1:t+B}) || 
            P(\vtheta_{t+B} \mid s_{t+B}, \lgp_{t-1}, \data_{t+1:t+b-1}, \data^c_{t+b:t+B})
        ) < \infty
    \end{align}
    via induction. Since the observations from $\obs_{1:t+b-1}$ are uncontaminated and identical on both sides, the PIF is equivalent to
    \begin{align}
        \PIF^B_{\vtheta_{t+B}}&(\vy^c_{t+b:t+B}, \obs_{1:t+b-1}) \nonumber \\
        &= 
        \text{KL}( 
            P(\vtheta_{t+B} \mid s_{t+B}, \lgp_{t+b-1}, \data_{t+b:t+B}) || 
            P(\vtheta_{t+B} \mid s_{t+B}, \lgp_{t+b-1}, \data^c_{t+b:t+B})
        ).
    \end{align}
    Our base case is the one-step WoLF posterior update, that is
    \begin{align}
        \text{KL}( 
            P(\vtheta_{t+b} \mid s_{t+b}, \lgp_{t+b-1}, \data_{t+b}) || 
            P(\vtheta_{t+b} \mid s_{t+b}, \lgp_{t+b-1}, \data^c_{t+b})
        ) < \infty
    \end{align}
    which we know is bounded from Lemma C.2 in \cite{pmlr-v235-duran-martin24a}. The inductive step for some $n \in [t+b+1, t+B-2]$ assumes
    \begin{align}
        &\text{KL}( 
            P(\vtheta_{n} \mid s_{n}, \lgp_{n-1}, \data_{n}) || 
            P(\vtheta_{n} \mid s_{n}, \lgp_{n-1}^c, \data^c_{n})
        ) \nonumber \\
        &= \text{KL}(\Norm(\vtheta \mid \vmu_n, \vSigma_{n}) \mid\mid \Norm(\vtheta \mid \vmu_n^c, \vSigma_{n}^c)) \nonumber \\
        &= \frac{1}{2}\bigg[
            \text{Tr}((\vSigma_{n})^{-1}\vSigma_{n}^c) - d 
            + (\vmu_n^c - \vmu_n)^\top (\vSigma_{n})^{-1} (\vmu_n^c - \vmu_n) 
            + \log \bigg(\frac{\det \vSigma_{n}}{\det \vSigma_{n}^c}\bigg)
        \bigg] < \infty \,, \label{form:pif_obs_inductive}
    \end{align}
    where $\dpp_n^c = (\vmu_{\ell, n}^c, \vSigma_{\ell, n}^c)_{\ell=1}^{L_n+1}$ has the contaminated posterior means from previous outliers and we abused notation with $\vmu_n$ to refer to $\vmu_{s_{t+B}, n}$ (the state is fixed throughout the batch so dependence on $s_{t+B}$ is implied,  $\forall n, s_n = s_{t+1}$).
    Recall the update rules to the covariance matrices $\vSigma_n$,
    \begin{equation}
        \begin{aligned}
            \vS_{n+1}^c 
                &= \vF_{n+1} \vSigma_{n} \vF_{n+1}^\top + \vR_{n+1} / w^2_{n+1}, 
             &\lim\limits_{\Vert \vy_t^c \Vert_2^2 \rightarrow \infty} \vS^c_{n+1} 
                = \infty \\
            \vK_{n+1}^c 
                &= \vSigma_{n}\vF_{n+1}^\top (\vS^c_{n+1})^{-1},
             &\lim\limits_{\Vert \vy_t^c \Vert_2^2 \rightarrow \infty} \vK_{n+1}^c 
                = 0 \\
            \vSigma_{n+1}^c
                &= \vSigma_{n} - \vK_{n+1}^c\vS_{n+1}^c(\vK^c_{n+1})^\top ,
            &\lim\limits_{\Vert \vy_t^c \Vert_2^2 \rightarrow \infty}\vSigma_{n+1}^c 
                = \vSigma_{n}.
        \end{aligned}
    \end{equation}
    In general, $\vSigma_{n+1}^c \neq \vSigma_{n+1}$, but it should be clear that
    \begin{align}
        \vSigma_{n} \succeq \vSigma_{n+1}, \vSigma_{n+1}^c \succeq 0.
    \end{align}
    
    In the following, we show that the posterior for $\vtheta_{n+1}$ also has a bounded PIF. 
    
    The inductive step assumes a finite KL divergence, this implies several useful assumptions under the context of a LG model. 
    We now consider the case for $n+1$ 
    \begin{align*}
        &\text{KL}( 
            P(\vtheta_{n+1} \mid s_{n+1}, \lgp_{n}, \data_{n+1}) || 
            P(\vtheta_{n+1} \mid s_{n+1}, \lgp_{n}, \data^c_{n+1})
        ) \\ 
        &= \frac{1}{2}\bigg[
            \underbrace{\text{Tr}((\vSigma_{n+1})^{-1}\vSigma_{n+1}^c)}_{(a)} - d
            + \underbrace{(\vmu_{n+1}^c - \vmu_{n+1})^\top (\vSigma_{n+1})^{-1} (\vmu_{n+1}^c - \vmu_{n+1})}_{(b)} 
            + \underbrace{\log \bigg(\frac{\det \vSigma_{n+1}}{\det \vSigma_{n+1}^c}\bigg)}_{(c)}
        \bigg]
    \end{align*}
    and show boundedness term by term. 
    Before bounding (a) and (c), we rewrite\footnote{In standard Kalman filters, the update rule for the precision is written as $
    \vSigma_{n+1|n}^{-1} + w^2_{n+1} \vH_{n+1}\vR^{-1}_{n+1} \vH_{n+1}$, but the LG assumption yields $\vSigma_{n+1|n} = \vSigma_{n}$.} the covariance update as
    \begin{equation}
        \begin{aligned}
            (\vSigma_{n+1})^{-1} &= \vSigma^{-1}_{n} + w_{n+1}^2 \vH_{n+1}^\top \vR_{n+1}^{-1} \vH_{n+1} \\
            (\vSigma_{n+1}^c)^{-1} &= (\vSigma^{c}_{n})^{-1} + (w_{n+1}^c)^2 \vH_{n+1}^\top \vR_{n+1}^{-1} \vH_{n+1}
        \end{aligned}
    \end{equation}
    by the Woodbury identity and write 
    \begin{align*}
        \vM_{n+1} = \vH_{n+1}^\top \vR_{n+1}^{-1} \vH_{n+1}.
    \end{align*}
    Consider (a), where
    \begin{equation}
        \begin{aligned}
            \Tr(\vSigma_{n+1}^{-1}\vSigma_{n+1}^c)
                =
                \Tr(\vSigma_n^{-1}\vSigma_{n+1}^c)
                +w_{n+1}^2
                \Tr(\vM_{n+1}\vSigma_{n+1}^c).
        \end{aligned}   
    \end{equation}
    The first term is bounded by the inductive hypothesis, using $\vSigma^c_n \succeq \vSigma_{n+1}^c$,
    \begin{align}
        \Tr(\vSigma_n^{-1}\vSigma_{n+1}^c) \leq \Tr(\vSigma_n^{-1}\vSigma_{n}^c) < \infty.
    \end{align}
    Using the same property, see that
    \begin{align}
        \Tr(\vM_{n+1}\vSigma_{n+1}^c) 
            \leq \Tr(\vM_{n+1}\vSigma_{n}^c) 
            \leq \lambda_{\max}(\vM_{n+1}) \Tr(\vSigma_{n}^c).
    \end{align}
    From the inductive step, we have the assumption 
    \begin{equation}\label{form:bound_trace_Sigma^c}
        \begin{aligned}
            \frac{\Tr(\vSigma_n^c)}{\lambda_{\max}(\vSigma_n)} \leq \Tr(\vSigma_n^{-1}\vSigma_n^c) \\
            \implies 
            \Tr(\vSigma_n^c) \leq \lambda_{\max}(\vSigma_n)\Tr(\vSigma_n^{-1}\vSigma_n^c) < \infty.
        \end{aligned}
    \end{equation}
    Hence, combining, we have
    \begin{align}
        \Tr(M_{n+1}\Sigma_{n+1}^c) \leq \lambda_{\max}(\vM_{n+1}) \lambda_{\max}(\vSigma_n)\Tr(\vSigma_n^{-1}\vSigma_n^c) < \infty,
    \end{align}
    and (a) is bounded.
    Now consider (c), which admits a similar manipulation on the posterior covariance estimate,
    \begin{equation}
        \begin{aligned}
            \log\frac{\det\vSigma_{n+1}}{\det\vSigma_{n+1}^c}
            &=
            \log \det \vSigma_{n+1} 
            - \log \det \vSigma_{n+1}^c \\
            &= 
            \log \det \vSigma_{n+1}
            - \log (\det (\vSigma_{n+1}^c)^{-1})^{-1} \\
            &= 
            \log \det \vSigma_{n+1}
            + \log \det (\vSigma_{n+1}^c)^{-1},
        \end{aligned}
    \end{equation}
    therefore it suffices to bound $\log (\det (\vSigma_{n+1}^c)^{-1})$.
    We make use of the inequality
    $\log\det (\vA+\vB) \leq \log \det (\vA) + \Tr(\vA^{-1}\vB)$, for any $\vA \succ 0$ and $\vB \succeq 0$ (see \citet[§3.1.5 on the concavity of $\log\det$]{boyd2004convex} and \citet{bhatia2009positive}),
    \begin{align}\label{form:bound_log_det_Sigma^c_inverse}
        \log \det (\vSigma_{n+1}^c)^{-1}
        &= \log \det \bigg(
            (\vSigma^{c}_{n})^{-1} 
            + (w_{n+1}^c)^2 \vH_{n+1}^\top \vR_{n+1}^{-1} \vH_{n+1}
        \bigg) \\
        &\leq \log \det (\vSigma^{c}_{n})^{-1} 
        + (w_{n+1}^c)^2 \Tr\bigg(
        \vSigma^c_n \vM_{n+1}
        \bigg)
    \end{align}
    since $\vSigma_n^c \succ 0$ by virtue of $\vSigma_{(0)} \succ 0$ and $\vM_{n+1} = \vH_{n+1}^\top \vR_{n+1}^{-1} \vH_{n+1} \succeq 0$ (it is possible for $\vH_{n+1} = 0$ as the input feature).
    From the inductive hypothesis, we know
    \begin{align}
        \log\frac{\det\vSigma_{n}}{\det\vSigma_{n}^c}
        = \log \det\vSigma_{n+1} 
        + \log \det (\vSigma_{n}^c)^{-1} < \infty \\
        \implies \log \det (\vSigma_{n}^c)^{-1} < \infty. 
    \end{align}
    Coupled with the bound in \eqref{form:bound_trace_Sigma^c}, we conclude \eqref{form:bound_log_det_Sigma^c_inverse} is bounded
    and therefore (c) is bounded.
    
    Finally, consider (b) and see that
    \begin{align}
        (b) 
        &= 
            (\vmu_{n+1}^c - \vmu_{n+1})^\top (\vSigma_{n+1})^{-1} (\vmu_{n+1}^c - \vmu_{n+1}) \\
        &\leq 
            \lambda_{\max}((\vSigma_{n+1})^{-1}) \Vert \vmu_{n+1}^c - \vmu_{n+1} \Vert_2^2 \\
        &= \lambda_{\max}(\vSigma_{n+1}^{-1}) \Vert \vmu_{n+1}^c - \vmu_{n+1} \Vert_2^2 \,,
    \end{align}
    and
    \begin{align}
        \vmu_{n+1}^c - \vmu_{n+1} 
        &= 
            (\vmu_n^c -\vmu_n)
            + (w_{n+1}^c)^2\vK_{n+1}^c (\obs_{n+1}^c - \pred^c_{n+1})
            - w_{n+1}^2 \vK_{n+1} (\obs_{n+1} - \pred_{n+1})\, .
    \end{align}
    It suffices to bound
    \begin{equation}
        \begin{aligned}
            \Vert \vmu_{n+1}^c - \vmu_{n+1} \Vert_2^2 &= \Vert 
                (\vmu_n^c -\vmu_n)
                + (w_{n+1}^c)^2 \vK_{n+1}^c (\obs_{n+1}^c - \pred^c_{n+1})
                - w_{n+1}^2 \vK_{n+1} (\obs_{n+1} - \pred_{n+1}) 
            \Vert_2^2 \\
            &\leq 2\bigg(
                \Vert (\vmu_n^c -\vmu_n) \Vert_2^2 
                + \Vert (w_{n+1}^c)^2 \vK_{n+1}^c \obs_{n+1}^c \Vert_2^2 
                + \Vert (w_{n+1}^c)^2 \vK_{n+1}^c \pred_{n+1}^c \Vert_2^2 \\
            &\quad
                + \Vert w_{n+1}^2 \vK_{n+1} \obs_{n+1} \Vert_2^2 
                + \Vert w_{n+1}^2 \vK_{n+1} \pred_{n+1} \Vert_2^2\bigg) . 
        \end{aligned}
    \end{equation}
    The last two terms are uncontaminated, therefore, we show that the first three terms are all bounded. 
    The inductive step implies 
    $$\lambda_{\max}(\vSigma_{n}^{-1}) \Vert \vmu_n^c - \vmu_n\Vert_2^2 < \infty \implies \Vert \vmu_n^c - \vmu_n\Vert_2^2 < \infty. $$ 
    Therefore, the second term is also bounded as
    \begin{align}
        |w_{n+1}^c|^2 \Vert \vK_{n+1}^c\vy_{n+1}^c \Vert_2^2 
        &\leq |w_{n+1}^c|^2 \Vert \vy_{n+1}^c \Vert_2^2 \, \Vert \vK_{n+1}^c \Vert_2^2 \,,
    \end{align} 
    and
    \begin{equation}
        \begin{aligned}
            \Vert \vK_{n+1}^c \Vert_2 
            &\leq 
            \Vert \vSigma_n \Vert_2 
            \Vert \vF^\top_{n+1} \Vert_2 
            \Vert (\vS^c_{n+1})^{-1} \Vert_2 \\
            &= \frac{\Vert \vSigma_n \Vert_2 
            \Vert \vF^\top_{n+1} \Vert_2}{\lambda_{
            \min
            }(\vS^c_{n+1})} \\
            &= \frac{\Vert \vSigma_n^c \Vert_2 
            \Vert \vF^\top_{n+1} \Vert_2}{\lambda_{
            \min
            }(\vR_{n+1} / w^2_{n+1})},
        \end{aligned}
    \end{equation}
    where we used the fact that $\vS_{n+1}^c - \vR_{n+1}/w^2_{n+1} = \vF_{n+1} \vSigma_n \vF_{n+1}^\top \succeq 0$.
    Combining, we have
    \begin{align}
        |w_{n+1}^c|^2 \Vert \vK_{n+1}^c\vy_{n+1}^c \Vert_2^2 
        &\leq 
        \frac{\Vert \vSigma_n^c \Vert_2^2 \, 
            \Vert \vF^\top_{n+1} \Vert_2^2 \,
            |w_{n+1}^c|^4 \,
            \Vert \vy_{n+1}^c \Vert_2^2}{\lambda_{
            \min
            }(\vR_{n+1})^2} \\
        &\leq 
        \frac{
            \Vert \vSigma_n^c \Vert_2^2 \,
            \Vert \vF^\top_{n+1} \Vert_2^2 \, 
            |w_{n+1}^c|^2 
            \,C_2}{\lambda_{
            \min
            }(\vR_{n+1})^2}.
    \end{align}
    Since $\Vert \vSigma_n^c \Vert_2 < \infty$ by the inductive hypothesis, the second term is bounded.
    Similarly, the third term is bounded as
    \begin{align}
        \Vert w_{n+1}^c\vK_{n+1}^c \pred_{n+1}^c \Vert_2^2 
            &= \Vert w_{n+1}^c\vK_{n+1}^c \vF_{n+1}\vmu_{n}^c \Vert_2^2 \\
            &\leq |w_{n+1}^c|^2 
            \Vert \vF_{n+1} \Vert_2^2  \,
            \Vert \vmu_n^c\Vert_2^2 
            \Vert \vK_{n+1}^c \Vert_2^2 \\
            &\leq 
            \frac{\Vert \vSigma_n^c \Vert_2^2 \,
            \Vert \vF_{n+1} \Vert_2^2  \,
            \Vert \vF^\top_{n+1} \Vert_2^2 \, |w_{n+1}^c|^4 \, \Vert \vmu_n^c\Vert_2^2}{\lambda_{
            \min
            }(\vR_{n+1})^2} \,,
    \end{align}
    where $\Vert \vmu_n^c \Vert_2^2 = \Vert (\vmu_n^c - \vmu_n) + \vmu_n\Vert_2^2 \leq 2\Vert \vmu_n^c - \vmu_n \Vert_2^2 + 2\Vert \vmu_n \Vert_2^2 < \infty$ from the inductive step. 
    Here, we conclude that term (b) is bounded. 
    
    By the rules of induction, we conclude that the KL divergence at step $t+B$ remains finite, and by extent the $\PIF^B_{\vtheta_{t+B}}(\vy^c_{t+b:t+B}, \obs_{1:t+b-1}) < \infty$.
\end{proof}

\textbf{Proof of Theorem \ref{th:robustness}}. Follows directly from Lemmas \ref{lem:bounded_pif_state} to \ref{lem:bounded_pif_obs}.

\section{Additional Experiments}\label{app:experiments}

\begin{table}[h]
    \centering
    \begin{tabular}{llll}
    \toprule
    Model & Synthetic & Electricity & OFI \\
    \hline
    \batched  & \textbf{46.1$\pm$0.00301} & \textbf{0.47$\pm$0.04}  & \textbf{0.616$\pm$0.082} \\
    \wolf     & 103.8$\pm 0.01155$        & 0.63$\pm$0.03     & 0.623$\pm$0.089 \\
    \vanilla  & 101.7$\pm 0.02636$        & 0.57$\pm$0.03     & 0.620$\pm$0.080\\
    \hline
    \beam     & 2.9 & 0.32 & 0.552 \\
    \bocd     & $123.12 \pm 0.01365$      & $0.80 \pm 0.11$   & 0.733\\  
    \bottomrule
    \end{tabular}
    \caption{Forecast performance in terms of one-step ahead RMSE (Mean $\pm$ stdev) over 100 runs. 
    The most precise online model is bolded. 
    The \bocd implementation for the OFI experiment is from \citet{tsaknaki2025online} and has no randomisation.
    \beam is only ran once.}
    \label{tab:app_forecast_results}
\end{table}

\subsection{Metrics}\label{sec:metrics}
\paragraph{Effective sample size}
Given the particle weights $\vomega = \{\omega^{(i)}\}_{i=1}^N$ such that
\begin{align}
    \sum_{i=1}^N \omega^{(i)} = 1,
\end{align}
the ESS is given by
\begin{align}\label{form:ess}
    {\rm ESS}(\vomega) = \frac{1}{\sum_{i=1}^N (\omega^{(i)})^2}.
\end{align}

\paragraph{Root Mean Squared Error (RMSE).} 
The root mean squared error measures the square root of the average squared difference between the predicted and true values. It penalises larger deviations more heavily and is defined as
\begin{equation}
    \mathrm{RMSE} = \sqrt{\frac{1}{T} \sum_{t=1}^{T} (\obs_t - \pred_t)^2},
\end{equation}
where \( \obs_t \) and \( \pred_t \) denote the true and predicted observations at time \( t \), respectively.

\paragraph{Mean Absolute Error (MAE).} 
The mean absolute error measures the average magnitude of the absolute differences between predicted and true values, treating all errors equally regardless of direction:
\begin{equation}
    \mathrm{MAE} = \frac{1}{T} \sum_{t=1}^{T} |\obs_t - \pred_t|.
\end{equation}

\paragraph{Detection Delay (DD).}
We define the DD as follows. 
For a set of predicted MAP states $(s_t)_{t \leq T}$, we calculate the changepoints such that
\begin{align}
    \mathcal{C} = \{t \geq 2 : s_t \neq s_{t-1}\}.
\end{align}
Let $\mathcal{C}^* = \{t : s_t^* \neq s_{t-1}^*\}$ be the set of true CPs.
The DD is then defined as
\begin{align}
    {\rm  DD}(\mathcal{C}, \mathcal{C}^*) = \frac{1}{|\mathcal{C}^*|} \sum_{t^* \in \mathcal{C}^*} \min_{t \in \mathcal{C}, t \geq t^*} (t - t^*) \,,
\end{align}
that is the time required from the true CP to an estimated CP.
Indeed, missing one single regime will lead to a large DD if the regime missed has a long run-length.
Intuitively, the detection delay is calculated as the average number of observations after a true CP for the HMM to recognise a change in regimes.

\paragraph{Regime classification errors.}
We calculate the recall and precision of the segmentation problem, treating correct state inference as TP (true positive), incorrect state inference as either FN (false negative) or FP (false positive).
For non-trivial state labelling, we employ the Algorithm~\ref{alg:state_matching} to obtain TP, FN, and FP values, given the predicted states, and a benchmark set of CPs.

\begin{algorithm}[H]
    \caption{State matching and labelling}
    \label{alg:state_matching}
    \begin{algorithmic}[1]
        \STATE {\bfseries Input:} true change-points $\mathcal{C}^\star = \{t^*_1,\dots,t^*_K\}$ (0-based, sorted), predicted state sequence $\hat{s}_{1:T}$, run threshold $L\in\mathbb{N}_{>0}$
        \STATE {\bfseries Output:} global counts $\mathrm{TP},\mathrm{FP},\mathrm{FN}$ and rates $\mathrm{TPR},\mathrm{PPV}$
        \vspace{3pt}
        \STATE Initialise $\mathrm{TP}\leftarrow 0,\ \mathrm{FP}\leftarrow 0,\ \mathrm{FN}\leftarrow 0$
        \STATE Define boundaries $B \leftarrow [0,t_1^*, t_2^*,\dots,t_K^*,T]$
        \STATE Track prev majority label $\ell_{\mathrm{maj}}^{\rm prev} = -1$
        \FOR{$i=0$ {\bfseries to} $|B|-2$}
            \STATE $a \leftarrow B[i]$, \ $b \leftarrow B[i+1]$ \quad (segment indices: $t\in[a,b-1]$)
            \STATE $S \leftarrow \hat{s}_{a:b-1}$ \quad (subsequence of predicted states in segment)
            \IF{$|S|=0$}
                \STATE \textbf{continue}
            \ENDIF
            \STATE Compute majority label $\ell_{\mathrm{maj}} \leftarrow \arg\max_{\ell}\#\{t\in[a,b-1]:\hat{s}_t=\ell\}$
            \STATE \texttt{failed\_detection} = $\ell_{\mathrm{maj}}^{\rm prev} = \ell_{\mathrm{maj}}$
            \IF{\texttt{failed\_detection}}
                \STATE $ \mathrm{FN} \leftarrow \mathrm{FN} + \#\{t\in[a,b-1]:\hat{s}_t=\ell_{\mathrm{maj}}\}$
            \ELSE
                \STATE $ \mathrm{TP} \leftarrow \mathrm{TP} + \#\{t\in[a,b-1]:\hat{s}_t=\ell_{\mathrm{maj}}\}$
            \ENDIF
            \STATE // Scan maximal consecutive runs of minority labels inside the segment
            \STATE $j \leftarrow 0$
            \WHILE{$j < |S|$}
                \IF{($S[j] = \ell_{\mathrm{maj}}$ and $\neg\texttt{failed\_detection}$) or ($S[j] \neq \ell_{\mathrm{maj}}$ and $\texttt{failed\_detection}$)}
                    \STATE $j \leftarrow j+1$
                    \STATE \textbf{continue}
                \ENDIF
                \STATE $k \leftarrow j$
                \WHILE{$k<|S|$ \textbf{and} $S[k]=S[j]$}
                    \STATE $k \leftarrow k+1$
                \ENDWHILE
                \STATE $r \leftarrow k-j$ \quad (run length)
                \IF{$r \ge L$}
                    \STATE $\mathrm{FP} \leftarrow \mathrm{FP} + r$
                \ELSE
                    \STATE $\mathrm{FN} \leftarrow \mathrm{FN} + r$
                \ENDIF
                \STATE $j \leftarrow k$
            \ENDWHILE
        \ENDFOR
        \vspace{4pt}
        \STATE $\mathrm{TPR} \leftarrow \dfrac{\mathrm{TP}}{\mathrm{TP} + \mathrm{FN}}\ \textbf{if}\ (\mathrm{TP}+\mathrm{FN})>0\ \textbf{else}\ 0$
        \STATE $\mathrm{PPV} \leftarrow \dfrac{\mathrm{TP}}{\mathrm{TP} + \mathrm{FP}}\ \textbf{if}\ (\mathrm{TP}+\mathrm{FP})>0\ \textbf{else}\ 0$
        \STATE \textbf{return} $\{\mathrm{TP},\mathrm{FP},\mathrm{FN},\mathrm{TPR},\mathrm{PPV}\}$
    \end{algorithmic}
\end{algorithm}

TPR (recall) and PPV (precision) are calculated as
\begin{align*}
    \text{TPR} &= \frac{\text{TP}}{\text{TP} + \text{FN}}, 
    &\text{PPV} = \frac{\text{TP}}{\text{TP} + \text{FP}}.
\end{align*} 

\subsection{Runtimes}
As the \batched model can skip sampling full trajectories given a batch size $B > 1$, the algorithm is computationally quicker than \vanilla and \wolf.
This advantage is reflected in the average run-time across the experiments in Table \ref{tab:runtimes}. 
\begin{table}[H]
    \centering
    \begin{tabular}{lllll}
    \toprule
    Dataset & \batched & \wolf & \vanilla \\
    \hline
    Synthetic       & \textbf{64.0$\pm$0.27}    & 66.1$\pm$0.35     & 66.3$\pm$0.20 \\
    Electricity     & \textbf{274.2$\pm$0.5}    & 351.1$\pm$1.7     & 351.9$\pm$1.4 \\
    LOB             & 95.2$\pm$0.6    & \textbf{94.3$\pm$1.8}     & 96.1$\pm$1.0 \\
    \bottomrule
    \end{tabular}
    \caption{
    Average runtime in seconds across 100 runs across experiments (Mean $\pm$ stdev). 
    Fastest model is highlighted. 
    }
    \label{tab:runtimes}
\end{table}

\subsection{Bounded PIFs from batching}
Figure~\ref{fig:pif_states} illustrates the empirical PIF under different batch sizes.
A batch size of zero recovers the \wolf model without state-space robustness.
We consider a sandbox setting with a two-state 1D observation model. 
State 0 corresponds to the observation distribution $\Norm(\mu_0, \sigma_0)$, and similarly for state 1.
Observations are shifted relative to the first state's mean, $\obs = \mu_0 + \text{offset}$, with offsets ranging from [-1.5, 1.5]. 
Batches of size $B$ stabilises the PIF by pooling observations with $\mu_0$, yielding bounded behaviour.
\begin{figure}[H]
    \centering
    \includegraphics[width=0.65
    \linewidth]{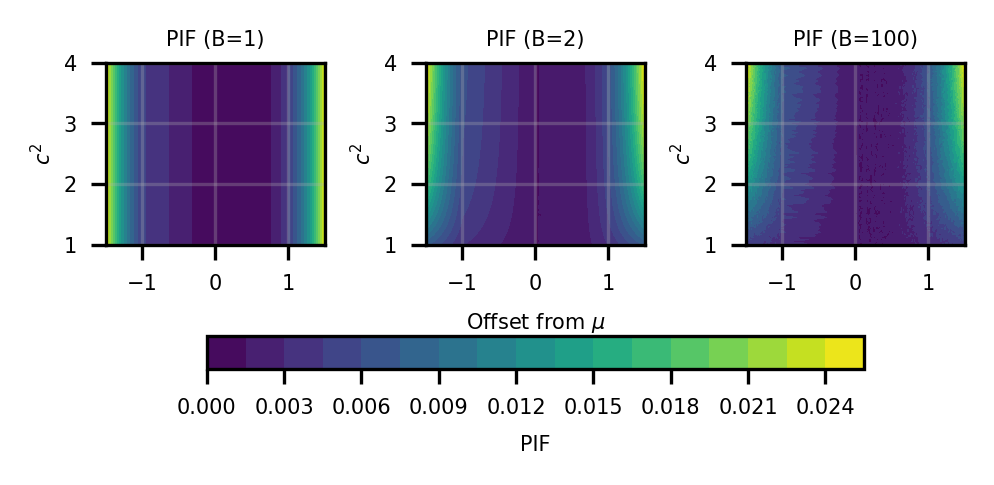}
    \caption{\textit{State-space PIF.} T
    he non-batched PIF grows unbounded and is irrespective of the IMQ weights. 
    The batched PIFs remains bounded even under large offsets in contrast. 
    $c^2$ is the IMQ scale.}
    \label{fig:pif_states}
\end{figure}

\subsection{Preference of regimes}\label{app:regime_pref}
We provide supplementary plots in this section to explore the preference of states in the online iHMM.
We construct a toy two-state model with known posterior parameters and study the difference in log-marginal likelihood given an outlier observation.
We assume state 0 have distribution $\Norm(0, \sigma_0)$ and state 1 have $\Norm(5, \sigma_1)$.
In both plots of \ref{fig:switch_states}, a lower value indicates reference to state 0.
The left hand plot shows that the system struggles to assign a state when model precision is low. 
In the right hand plot, we consider the scenario where an outlier is coincidentally explained very well by state 1.
Consider two possible state trajectories $s_{1:t}^{\text{degen}}=(0, ..., 0)$ and $s_{1:t}^{\text{switch}} = (1, 0, ... 0)$. 
We assume the outlier is the first observation of the batch for convenience, the location of the outlier within the batch is irrelevant.
As the outlier $\vy_{1}^c$ moves closer to $5$ (the mean of the incorrect state 1), the system prefers $s_{1:t}^{\text{switch}}$ which harms interpretability.
This justifies our claim that a degenerate HDP prior preserves interpretability. 

\begin{figure}[H]
    \centering
    \includegraphics[width=0.5\linewidth]{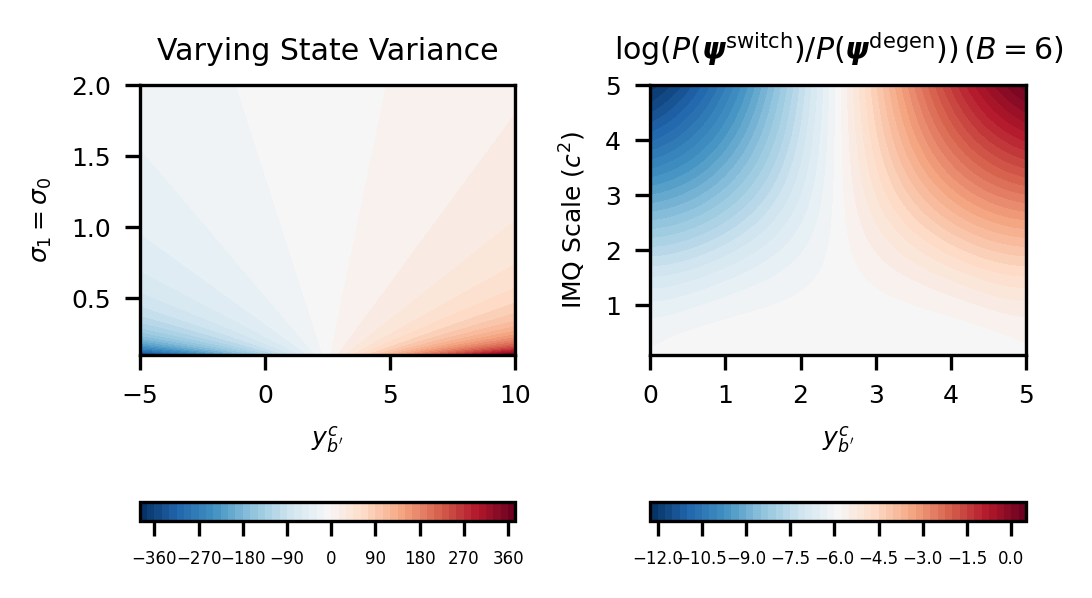}
    \caption{
    The left hand plot shows a lack of preference if state variance is high. 
    In the right hand plot, the batch size is $B=6$ and an outlier $\vy_{1}^c$ is in the batch. 
    }
    \label{fig:switch_states}
\end{figure}

\subsection{Inference with heavy-tailed posterior predictives}\label{app:student_t_experiment}
As mentioned in the literature review, a common robustification tactic is to replace the likelihood model with a heavy-tailed distribution to accommodate outliers.
This experiment aims to illustrate that a heavy-tailed distribution is not sufficient to achieve bounded PIFs and is non-robust theoretically.

Consider the univariate LG observation model with unknown variance with unknown mean. 
The setup admits a posterior predictive under the Student-t distribution.
For simplicity, assume (1D) observations are generated by
\begin{align}
    y_t \mid \mu_t, \sigma_t \sim \Norm(\mu, \sigma^2)
\end{align}
with Normal-inverse-gamma prior on 
\begin{align}
    (\mu, \sigma^2) \sim {\rm NIG}(\mu_0, \kappa_0, \alpha_0, \beta_0)
\end{align}
where $\mu_0, \kappa_0, \alpha_0, \beta_0$ are initialising hyperparameters.
Under this conjugate specification, the posterior remains in the same family and admits closed-form updates.
Marginalising out $(\mu, \sigma^2)$ , the resulting posterior predictive distribution for a new observation is a Student-t distribution. 
In particular, conditioning on data $y_{1:t}$ the posterior predictive takes the form
\begin{align}
    y_{t+1} \mid y_{1:t}, \mu_t, \kappa_t, \alpha_t, \beta_t 
    \sim t_{2\alpha_t}\bigg(
        \mu_t,
        \frac{\beta_t}{\alpha_t}\bigg[1 + \frac{1}{\kappa_t}\bigg]
    \bigg).
\end{align}
The update rules for the sufficient statistics $\mu_t, \kappa_t, \alpha_t, \beta_t$ are given by
\begin{align}
    \kappa_t &= \kappa_0 + t \,,\\
    \mu_t &= \frac{\kappa_0 \mu_0 + t \bar{y}_t}{\kappa_0 + t}\,,\\
    \alpha_t &= \alpha_0 + \frac{t}{2}\,,\\
    \beta_t &= \beta_0 + \frac{1}{2}\sum_{i=1}^t (y_i - \bar{y}_t)^2 
    + \frac{\kappa_0 t}{2(\kappa_0 + t)}(\bar{y}_t - \mu_0)^2 \,.
\end{align}
Now consider the difference in log-likelihood between two Student-t distributions $t_{\nu_1}^{(1)}$ and $t_{\nu_2}^{(2)}$ with degrees of freedom $\nu_1$ and $\nu_2$ respectively,
\begin{equation}
    \begin{aligned}
        \Delta(y_t)
        &=
        \log \Gamma\!\left(\frac{\nu_1+1}{2}\right)
        - \log \Gamma\!\left(\frac{\nu_1}{2}\right)
        - \frac{1}{2}\log(\nu_1 \pi \sigma_1^2)
        - \frac{\nu_1+1}{2}\log\!\left(1 + \frac{(y_t-\mu_1)^2}{\nu_1 \sigma_1^2}\right)
        \\&\qquad -
        \Bigg[
        \log \Gamma\!\left(\frac{\nu_2+1}{2}\right)
        - \log \Gamma\!\left(\frac{\nu_2}{2}\right)
        - \frac{1}{2}\log(\nu_2 \pi \sigma_2^2)
        - \frac{\nu_2+1}{2}\log\!\left(1 + \frac{(y_t-\mu_2)^2}{\nu_2 \sigma_2^2}\right)
        \Bigg].
    \end{aligned}
\end{equation}
When the outlier magnitude tends to infinity, i.e. $| y_t | \rightarrow \infty$, then the asymptotic behaviour becomes
\begin{align}
    \lim\limits_{|y_t| \rightarrow \infty}\log P(y_t \mid t_\nu(\mu,\sigma^2))
        &= - (\nu+1)\log|y_t| + C \\
    \implies \lim\limits_{|y_t| \rightarrow \infty}\Delta(y_t)
        &= (\nu_2 - \nu_1)\log|y_t| + C,
\end{align}
where $\nu_1 = 2\alpha_1$ and $\nu_2 = 2\alpha_2$.
Therefore the log-likelihood difference tends to positive infinity if $\alpha_2 > \alpha_1$.
Recall the update rules of $\alpha_t$, $\alpha_2 > \alpha_1$ when distribution 2 is updated more times than distribution 1. 
Hence, for large outliers, the iHMM system a less updated/newer state with heavy prejudice as the observation's magnitude becomes large.

We illustrate this empirically with a toy two-state example, the new state has a degree of freedom (df) of 5 and the old state has a df of 1000, both are centred with a zero mean. 
In Figure~\ref{fig:compare_t_dist}, as we move the observation away from 0, the iHMM prefers the new state with heavy preference.
\begin{figure}[H]
    \centering
    \includegraphics[width=0.5\linewidth]{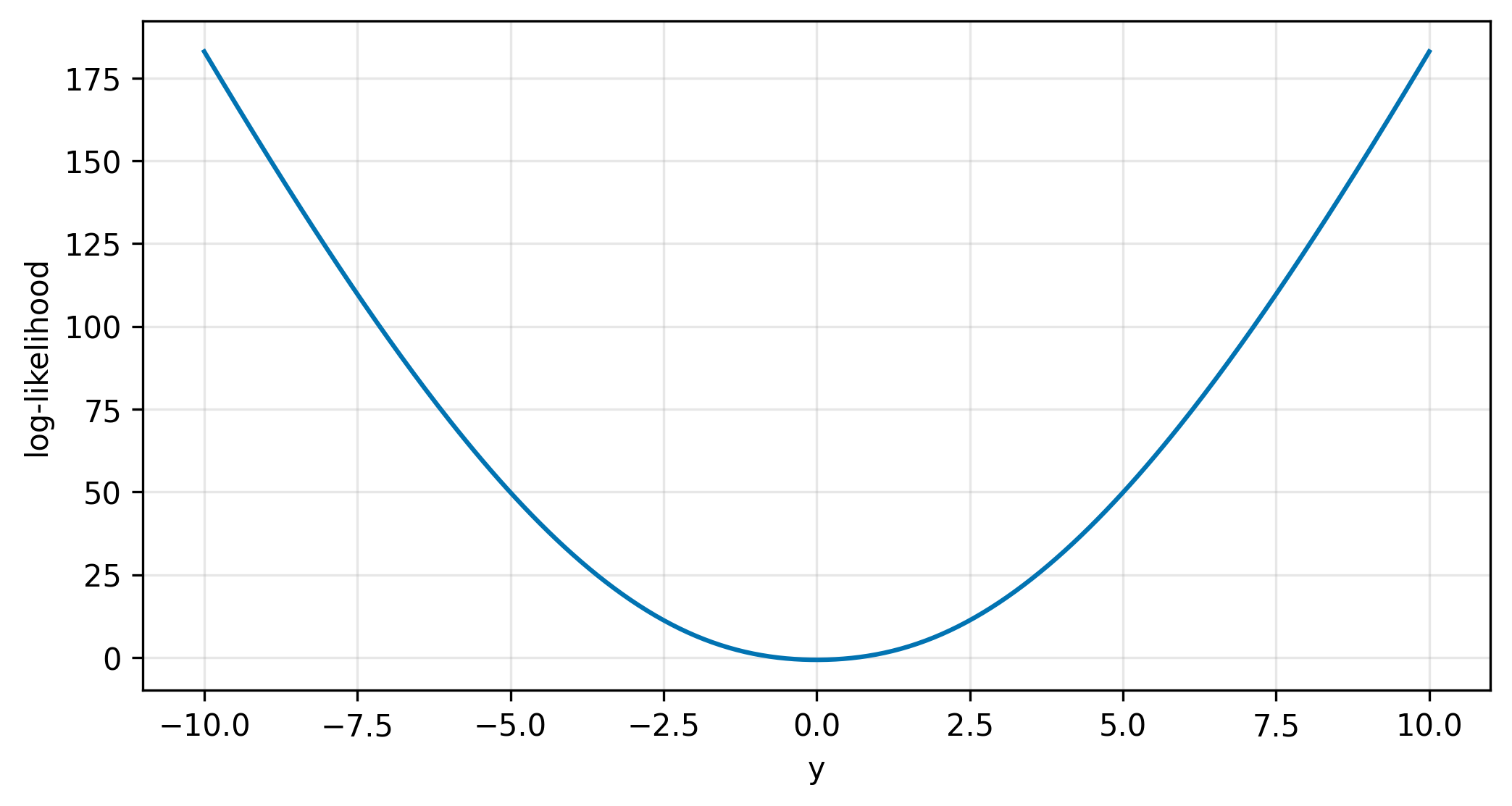}
    \caption{Difference in log-likelihood for the unknown mean unknown variance case. }
    \label{fig:compare_t_dist}
\end{figure}

Finally, to illustrate the state-switching and observation-space non-robustness exists in real-life data, we apply the unknown-variance unknown-mean iHMM on the well-log data (Figure~\ref{fig:batched_vs_t_zoomed}).
In particular, the values between 100 and 500 are plotted, there is a window of outliers near the 350-th observations.
The \batched remains robust to the outlier in the observation space as well as the state inference path, yet the classical \vanilla displays noticeable change in its predictions.

\begin{figure}[H]
    \centering
    \includegraphics[width=0.75\linewidth]{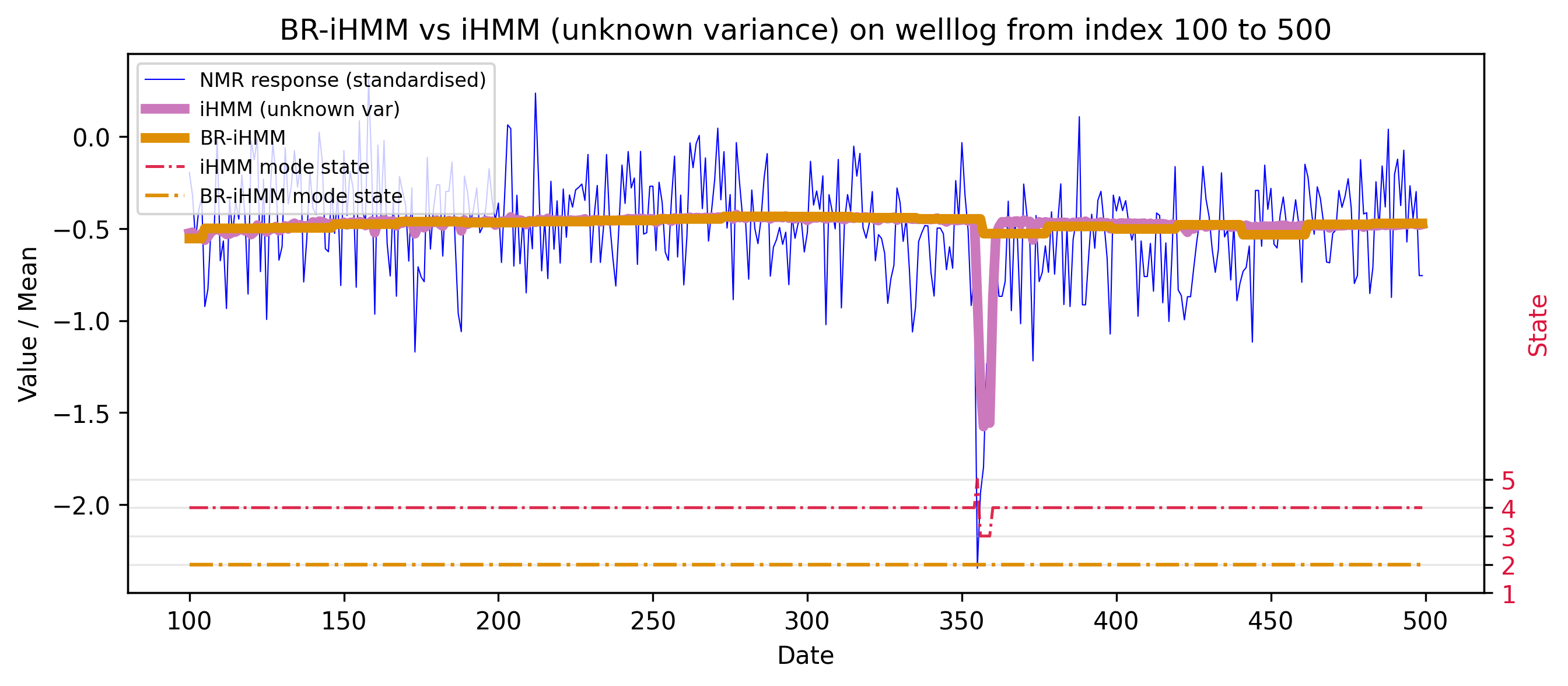}
    \caption{\batched (with known variance) vs classical \unknownvar. 
    The unknown variance iHMM has a Student-t posterior predictive and is still sensitive to outliers.
    Student-t remains non-robust and changes states, yet the \batched model is robust.}
    \label{fig:batched_vs_t_zoomed}
\end{figure}
Hence, using a t-distribution is not a guarantee that the PIF is bounded, in fact, given large enough observations, a Student-t posterior predictive runs into the same problems as our LG example, recovering a similar non-robust observation space result illustrated in Lemmas~\ref{lem:unbounded_pif_state}. 
We emphasise that our paper focuses on the linear-Gaussian case to illustrate analytical results, but we believe the general intuition of doubly robustness applies to other distributions in the Bayesian paradigm. 
In that, given there is state-space robustness, the joint PIF explodes for non-robust observation models.
Intuitively, as we update existing observation posteriors over time, the parameters get increasingly confident and fixated in their expectation. 
Any outliers that are reasonably out of range of the existing states will inevitably be assigned to a new state, as is the intended outcome for classical iHMMs, and the pathology that our \batched is trying to prevent to improve prequential tasks. 

\subsection{Summary of fitted hyperparameters}

\begin{table}[H]
    \centering
    \begin{tabular}{llllll}
    \toprule
    Dataset & Hyperparameter & \batched & \wolf & \vanilla \\
    \hline
    \multirow{3}{*}{Well-log}  
    & $\tau_{{\rm ESS}}$    & 10 & 10 & 54 \\
    & $c^2$                 & 2.124 & 1.657 & -- \\
    & $B$                   & 21 & -- & --  \\
    \hline 
    \multirow{3}{*}{Synthetic}  
    & $\tau_{{\rm ESS}}$    & 2 & 1 & 0 \\
    & $c^2$                 & 1.348 & 3.589 & -- \\
    & $B$                   & 2 & -- & --  \\
    \hline 
    \multirow{5}{*}{OFI}  
    & $\tau_{{\rm ESS}}$    & 80 & 75 & 73 \\
    & $c^2$                 & 4.122 & 4.334 & -- \\
    & $B$                   & 1 & -- & --  \\
    & $\Sigma_0$            & 0.001 & 0.001 & 0.107 \\
    & $\rho^*$              & 0.114 & 0.087 & 0.030 \\
    \hline 
    \multirow{5}{*}{Electricity}  
    & $\tau_{{\rm ESS}}$    & 26 & 50 & 14 \\
    & $c^2$                 & 1.0 & 3.909 & -- \\
    & $B$                   & 6 & -- & --  \\
    & $\Sigma_0$            & 90.482 & 99.619 & 31.334 \\
    & $p$                   & 2 & 1 & 1 \\
    \bottomrule
    \end{tabular}
    \caption{
    Summary of fitted hyperparameters obtained from Bayesian optimisation.
    Same hyperparameters are used in subsequent runs throughout the same dataset. 
    Extra hyperparameters are included where appropriate.
    $\Sigma_0$ is the initial variance for newly initiated states, for an $m$-dimensional state vector, the initial covariance matrix is then set to $\Sigma_0 \vI_m$ where $\vI_m$ is the $m$-dimensional identity matrix. 
    For the Well-log and Synthetic experiment, $\Sigma_0$ is set to $1$.
    $\rho^*$ is the autoregressive coefficient in the OFI DGP \eqref{form:ofi_dgp}.
    $p$ denotes the degree of feature engineering of the weather covariates, see Section~\ref{app:electricity}.
    \texttt{MAX\_STATES} = 30 for all three online iHMM models.
    } 
    \label{tab:fitted_hyperparams}
\end{table}
We specifically highlight that $B$ is data-dependent. 
For instance, in the Synthetic experiment, $B=2$ as outliers are isolated incidents, and dampening its effect with one other non-contaminated observation is enough (see Figure~\ref{fig:diff_batch_sizes}).
In the well-log dataset, $B=21$ and is enough to combat the burst-window outliers that are triggered by events that last for a certain period of time (see Figure~\ref{fig:diff_batch_sizes_welllog}).
In the OFI experiment, $B=1$, and recovers similar hyperparameters as the \wolf model as the improvement using batches is not apparent. 
Our \batched model therefore fallbacks into the \wolf benchmark. 
In other words, the \batched model performs at least as good as \wolf and \vanilla in cases where batching does not improve the predictive accuracy.

\subsection{Well-log data}
A complete plot of predicted values and MAP states are in Figure~\ref{fig:welllog}.

\begin{figure*}[h]
    \centering
    \includegraphics[width=1\linewidth]{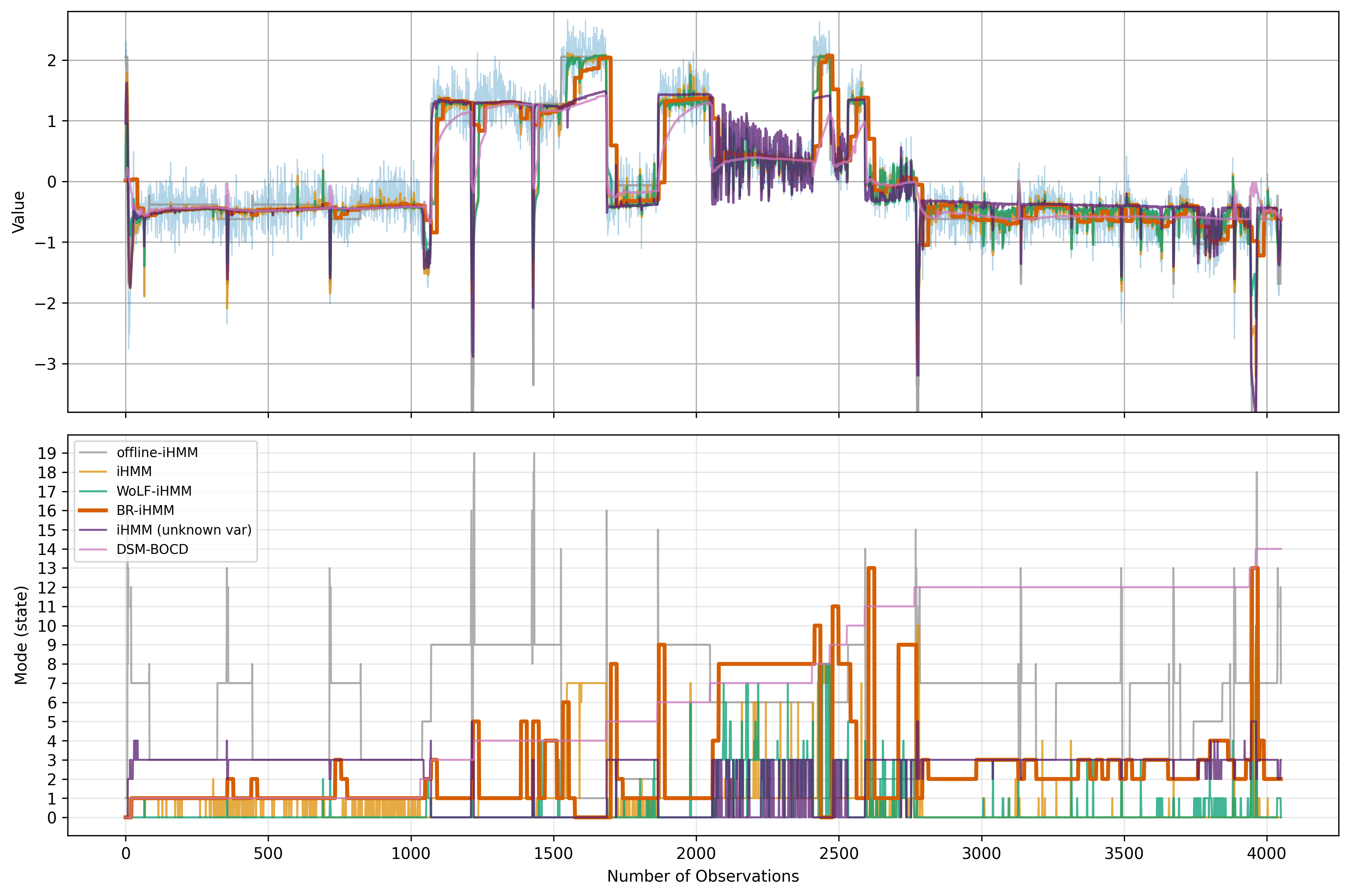}
    \caption{Comparison of iHMM filtered mean and maximum a-posteriori (MAP) state inference with different degrees of robustness on the well-log data. Top subplot overlays the \textcolor{snsblue}{well-log data} with the filtered values of \vanilla, \unknownvar, \wolf, \batched, \dsmgauss, and \beam. Bottom subplot shows the MAP state inference.}
    \label{fig:welllog}
\end{figure*}

Both \vanilla and \wolf frequently created artefact states, \batched was able to distinguish a selection of reusable states but created artefact states at changepoint boundaries, \dsmgauss changepoints are used as a benchmark but suffers from predictive capabilities due to the creation of a new state at every changepoint. The bottom plot reflects the non-reusability problem of \dsmgauss by plotting each changepoint as a new state.
Furthermore, we included \unknownvar to show that even a Student-t posterior predictive is susceptible to the pathology of creating new states for a large enough outlier.
As iHMMs do not restart the training process from scratch, all iHMMs produced predictions visibly closer to the well-log samples at those regions. 

\pagebreak
\paragraph{Detection delay}
We use the \beam MAP states as the benchmark CPs, we remove CPs that last less than 50 data points and treat those as transient outliers.
The list of CPs is as follows:
\begin{verbatim}
    [20   84  360  445  720  825 1071 1223 1433 1527 1718 1867 2049 2409
 2470 2532 2593 2784 3191 3261 3520 3743 3890 3966]
\end{verbatim}

The DD across models is in Table~\ref{tab:welllog_regime} below.
\begin{table}[H]
    \centering
    \begin{tabular}{llll}
    \toprule
    Model & \textbf{PPV} & \textbf{TPR} & \textbf{Delay} \\
    \midrule
    \batched & $\textbf{0.4739} \pm \textbf{0.0636}$ & $\textbf{0.5797} \pm \textbf{0.0885}$ & $78.06 \pm 46.95$ \\
    \wolf    & $0.3384 \pm 0.588$ & $0.3384 \pm 0.0588$ & $63.20 \pm 16.50$ \\
    \vanilla & $0.4086 \pm 0.0635$ & $0.4970 \pm 0.0924$ & $\textbf{19.19} \pm \textbf{11.38}$ \\
    \bottomrule
    \end{tabular}
    \caption{\textit{Regime classification performances on synthetic data.} 
    PPV, TPR, and delays are averaged over 100 runs for online models. 
    Higher PPV and TPR is better, lower detection delay is better. 
    }
    \label{tab:welllog_regime}
\end{table}
For this particular experiment, the true positives (TP), false positive (FP), and false negatives (FN) are matched post-process as state labels are mismatched and differing state labels do not imply an incorrect classification.
We produce the TP, FP, and FN labels for the predicted states with Algorithm~\ref{alg:state_matching}.

\paragraph{Different batch sizes (Well-log)}
We investigate the effect of different batch sizes with respect to the well-log data.
Following the optimisation process, we plot the lookahead MAE instead of RMSE for this experiment.
The results are presented below in Figure~\ref{fig:diff_batch_sizes_welllog}.

\begin{figure}[h]
    \centering
    \includegraphics[width=0.7\linewidth]{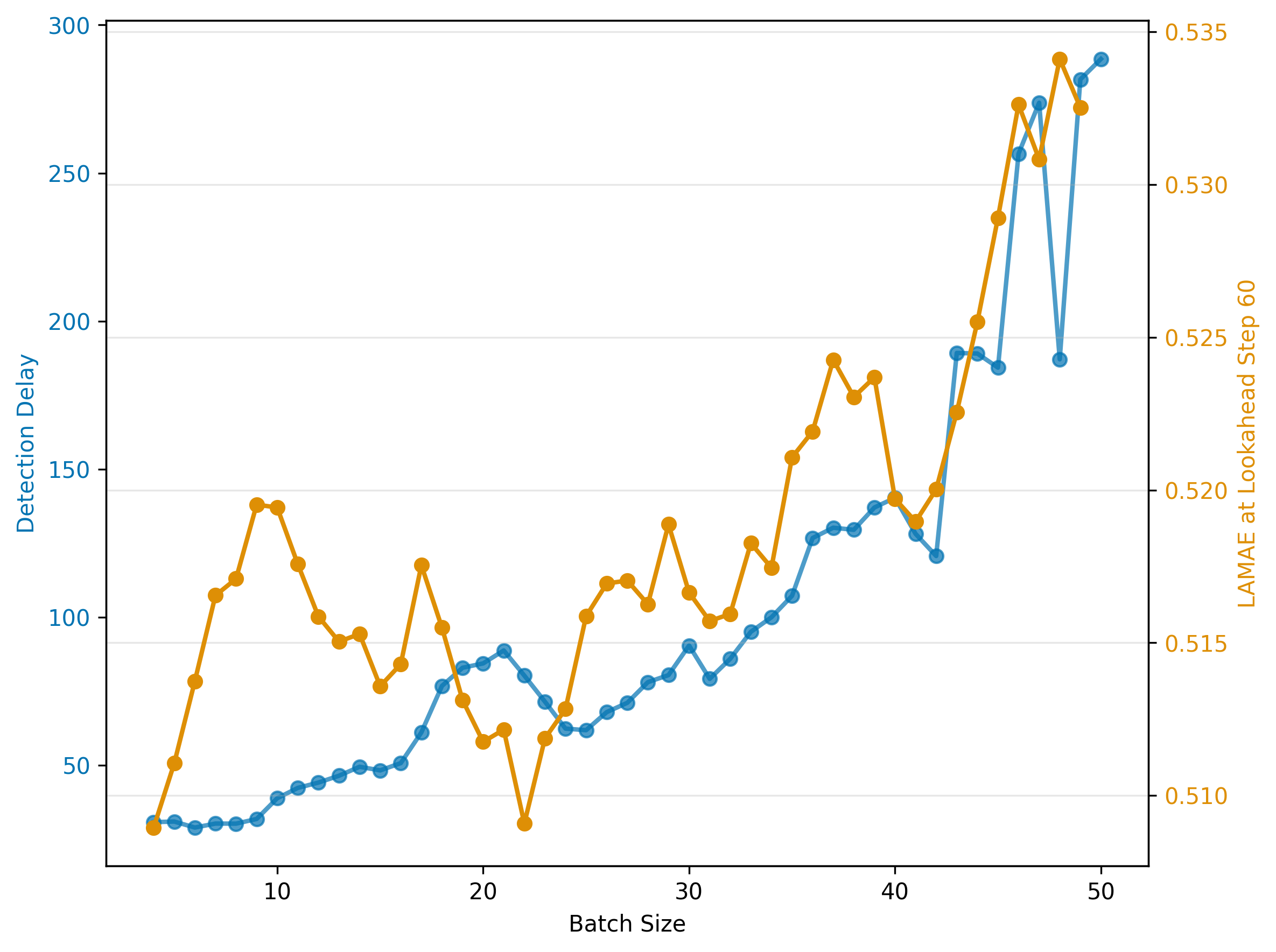}
    \caption{
    \textit{Lookahead MAE and detection delay for different batch sizes.} 
    The blue graph, corresponding to the axis on the left, plots the detection delay of \batched as we increase the batch size.
    The orange graph, corresponding to the axis on the right, plots the lookahead MAE for different batch sizes.
    }
    \label{fig:diff_batch_sizes_welllog}
\end{figure}

As expected, the detection delay generally increases as we increase the batch size.
This reaffirms the intuition that CPs have increased chance to be intra-batch as batch size increases and hence requires waiting for the next batch to arrive before recognising a change in regime.
For lookahead MAE, we see a dip at around batch size of 20, this concurs with the optimal batch size of 21 that we obtained in via \texttt{bayesian-optimization}.

\subsection{Synthetic linear data}
Complete plot of Figure~\ref{fig:toy_linear} in Figure \ref{fig:app_toy_linear}.
\begin{figure}[h!]
    \centering
    \includegraphics[width=1.0\linewidth]{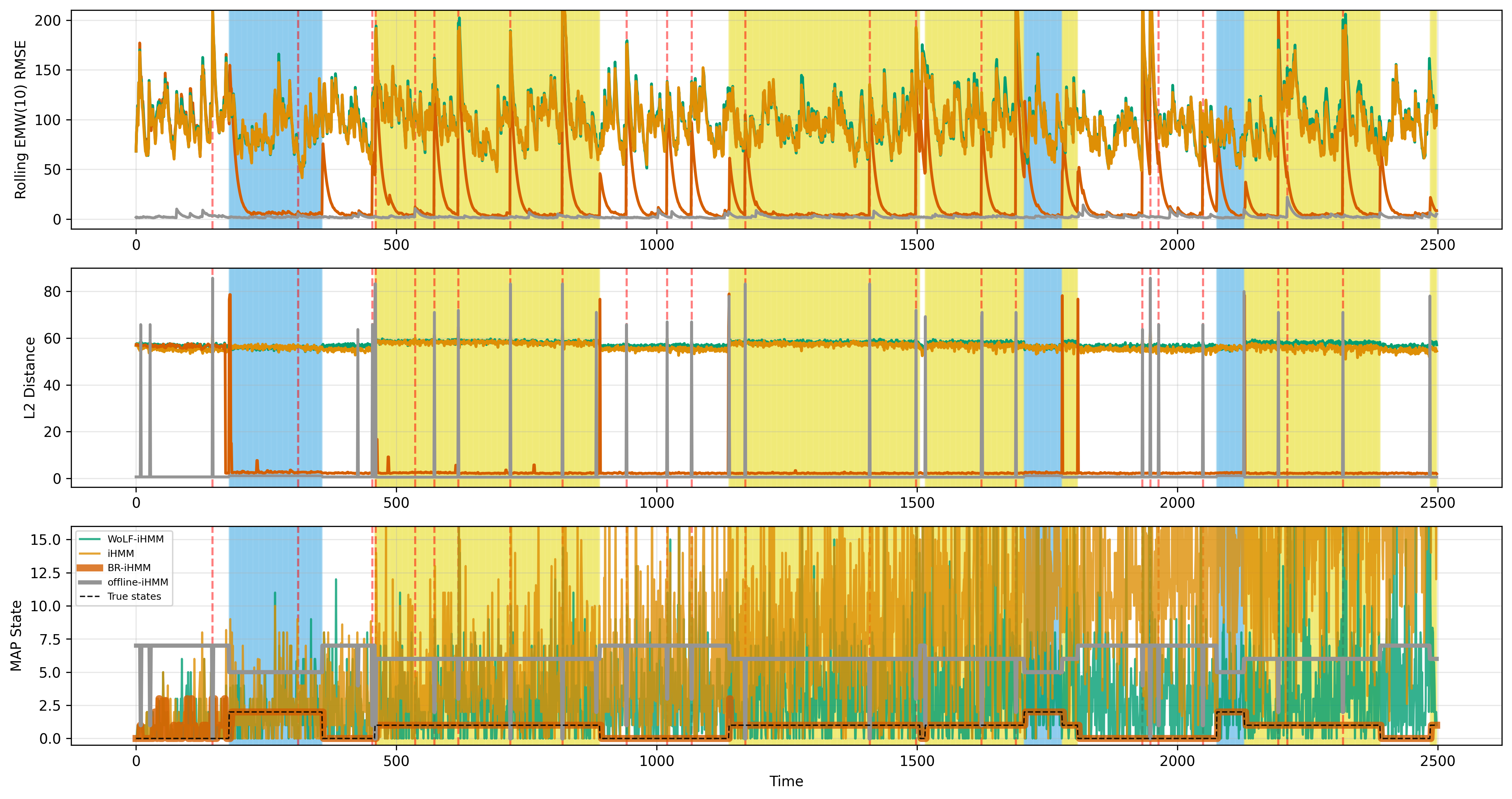}
    \caption{\textit{Synthetic linear data.} Outlier observations are marked by \textcolor{snsred}{\textbf{red}} vertical dashes. \textcolor{snsorange}{\textbf{Yellow}} and \textcolor{snsblue}{\textbf{blue}} regions correspond to regimes 2 and 3. First subplot tracks rolling RMSE of mean predictions across the 100 runs. Second subplot compares the L2 distance between average latent state (per model) versus the true data generating state $\vmu_t$. The bottom plot shows average MAP state. Each plot overlays the results for all four models. All data points are plotted.}
    \label{fig:app_toy_linear}
\end{figure}

\paragraph{Different batch sizes}
We further investigate the effect of batch size on detection delays. 
Specifically, we run the \batched model with batch sizes ranging from $B \in [1, 100]$ and calculate the detection delay on the synthetic linear data we generated above.
\begin{figure}[h]
    \centering
    \includegraphics[width=0.5\linewidth]{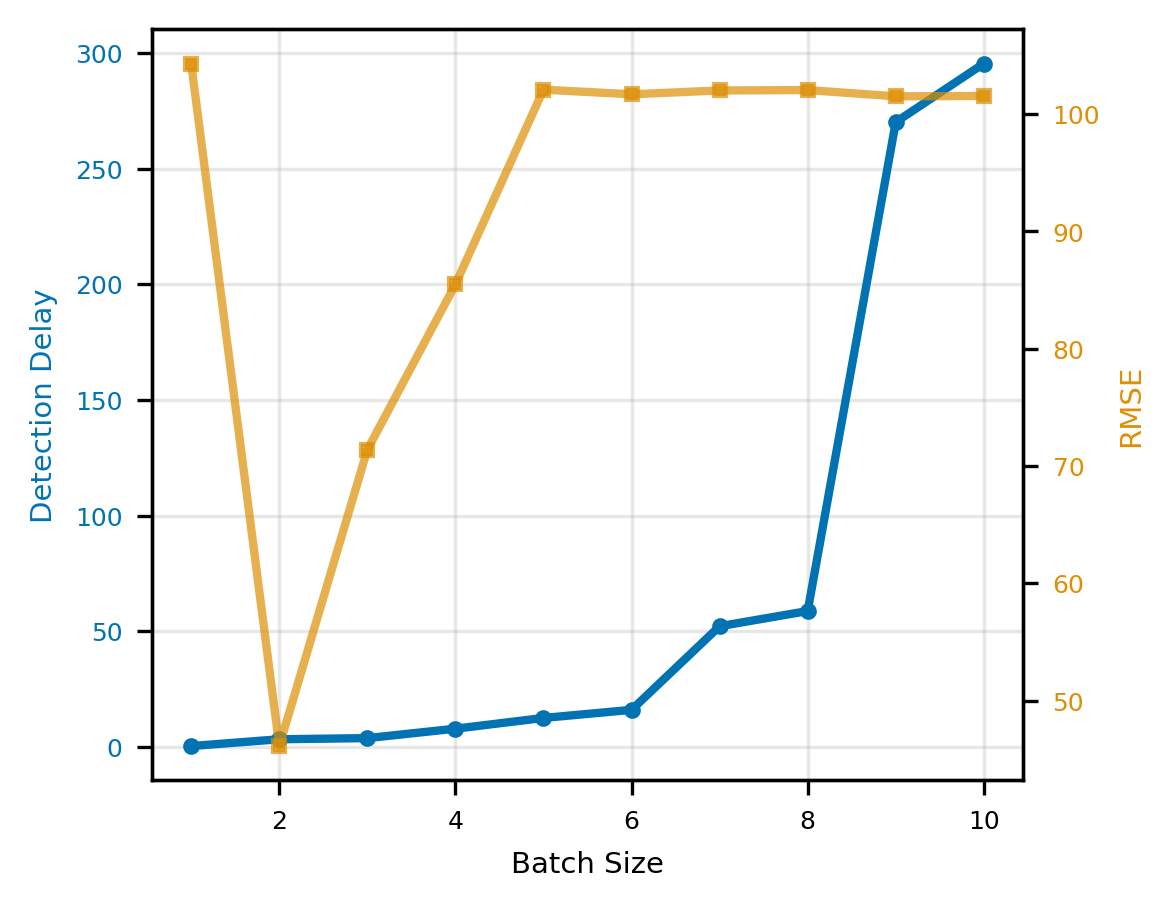}
    \caption{\textit{RMSE and detection delay for different batch sizes.} 
    The blue graph, corresponding to the axis on the left, plots the detection delay of \batched as we increase the batch size.
    The orange graph, corresponding to the axis on the right, plots the RMSE for different batch sizes.
    }
    \label{fig:diff_batch_sizes}
\end{figure}

We find that the RMSE is optimal at $B=2$, which is expected as the outliers are isolated anomalies. 
For such cases, a batch size of 2 suffices to mitigate its effect on the path posterior as (i) the path posterior penalises the outlier's contribution and revert to the non-contaminated observation in the same batch, and (ii) the batch size of 2 is the minimum cost to propagate the correct prediction to the next batch.
Increasing the batch size simply increases detection delay which also increases the RMSE as predictions may originate from the previous regime.
Furthermore, the figure confirms the intuition that increasing the batch size can only increase the detection delay as the CP has more chances of being mid-batch than at inter-batch indices.

\paragraph{Regime detection}\label{sec:regime_detection}
Table \ref{tab:app_cp_stats} shows the results pertaining to changepoint detection, and demonstrates that \batched enjoy more accurate regimes detection at the expense of a small detection delay.

\begin{table}[H]
    \centering
    \begin{tabular}{llll}
    \toprule
    Model & \textbf{PPV} & \textbf{TPR} & \textbf{Delay} \\
    \midrule
    \batched & $\textbf{0.963} \pm \textbf{0.001}$ & $\textbf{0.996} \pm \textbf{0.001}$ & $3.4 \pm 0.0$ \\
    \wolf    & $0.533 \pm 0.002$ & $0.344 \pm 0.002$ & $\textbf{0.0} \pm \textbf{0.0}$ \\
    \vanilla & $0.551 \pm 0.002$ & $0.363 \pm 0.008$ & $0.2 \pm 0.0$ \\
    \hline
    \beam    & $0.996$           & $0.997$           & $0.0$  \\    
    \bottomrule
    \end{tabular}
    \caption{
    Values are averaged over 100 runs for online models. 
    Higher PPV and TPR is better, lower DD is better. 
    }
    \label{tab:app_cp_stats}
\end{table}

For PPV and TPR, the closer to 1, the better. For delays, the closer to 0, the better.

\subsection{IID synthetic data with burst-like outliers}\label{app:iid}
Consider the following DGP
\begin{equation}\label{form:iid_dgp}
    \begin{aligned}
        s_t &\in \{1, 2, 3\}, \quad s_0=1\\
        \vtheta_t &= \vtheta_{s_t}, \\
        y_t &= \vtheta_{s_t} + e_t, \\
        P(s_t \mid s_{t-1}) &= \begin{bmatrix}
            0.995 & 0.0025 & 0.0025 \\
            0.0025 & 0.995 & 0.0025 \\
            0.0025 & 0.0025 & 0.995
        \end{bmatrix},
    \end{aligned}
\end{equation}
where $e_t \sim \Norm(0, 0.25)$ for data with Gaussian noise and  $2\sqrt{2/3} \,e_t \sim {\rm t}(3)$ for data with $t$ noise.
We first generate 1180 uncontaminated data points based on this DGP.
We then simulate burst-like outliers by first randomly selecting 3 data points to be the start of an outlier window, $c_1, c_2, c_3 \in [1, 1175]$.
This experiment was originally constructed to emulate the outliers seen in the well-log data set as a controlled environment of whether the method is effective against multiple outliers in a batch.
For each $c_i$, we contaminate the next 5 values with an additive offset, that is, for each $i$
\begin{equation}
    \begin{aligned}
        y_{c_i}^c &= y_{c_i} + 1 \\ 
        y_{c_i+1}^c &= y_{c_i+1} + 3 \\
        y_{c_i+2}^c &= y_{c_i+1} + 5 \\
        y_{c_i+3}^c &= y_{c_i+1} + 3 \\
        y_{c_i+4}^c &= y_{c_i+1} + 1.
    \end{aligned}
\end{equation}
In our experiments, $c_1 = 134, c_2 = 411, c_3 = 900$.

\paragraph{Gaussian noise}
We compare the state inference process of the online iHMM models \batched, \wolf, and \vanilla.
\batched is successful in inferring the correct number of states, whereas \vanilla and \wolf all created artefact states.
The filtered values of \batched is also resilient against the burst-window of outliers.
In particular, both \vanilla and \wolf exhibited the rapid-switching phenomena in different parts of the time series.
\begin{figure}[H]
    \centering
    \includegraphics[width=0.9\linewidth]{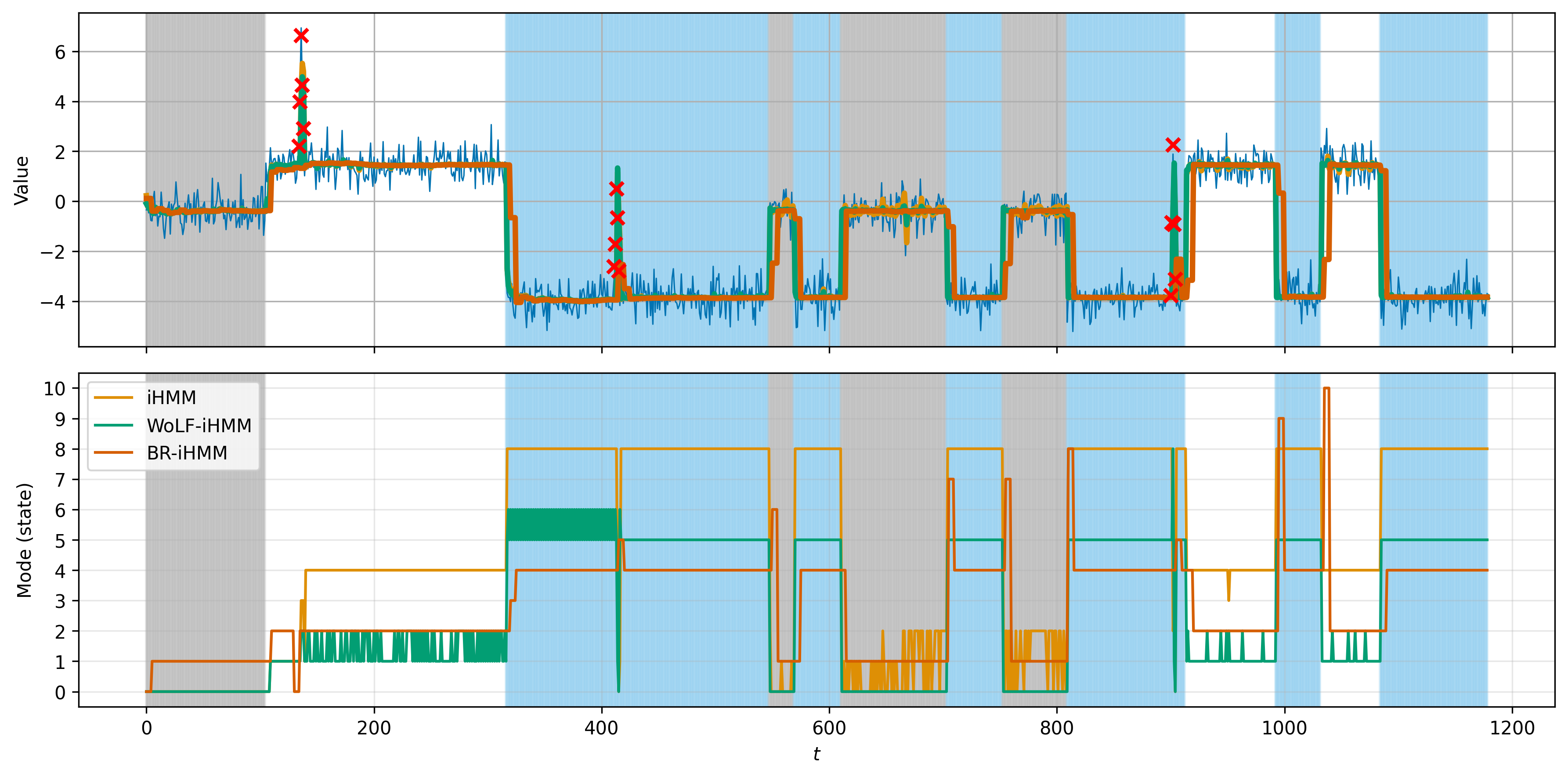}
    \caption{
        \textit{IID synthetic data with Gaussian noise.}
        Outliers are marked with a \textcolor{red}{red cross}.
        Top panel compares predicted value, lower panel compares MAP states.
    }
    \label{fig:compare_gauss}
\end{figure}

\paragraph{T-noise}
The rapid-switching problem is exacerbated in the presence of t-noise (see Figure~\ref{fig:compare_t}). 
The MAP state of the \batched method remains the same trajectory as that inferred in the Gaussian experiment, demonstrating it's resilience against fat tailed noise.
On the contrary, \wolf was visibly switching to states due to outliers, demonstrating rapid-switching at times, but also creating artefact states to switch to state that happens to be close to the outlier.
We deliberately underestimate the LG observation noise $R_t = 0.25 < e_t$ in the model to make the model more misspecified. 
This exacerbates the rapid-switching pathology in \wolf and \vanilla.
\begin{figure}[H]
    \centering
    \includegraphics[width=0.9\linewidth]{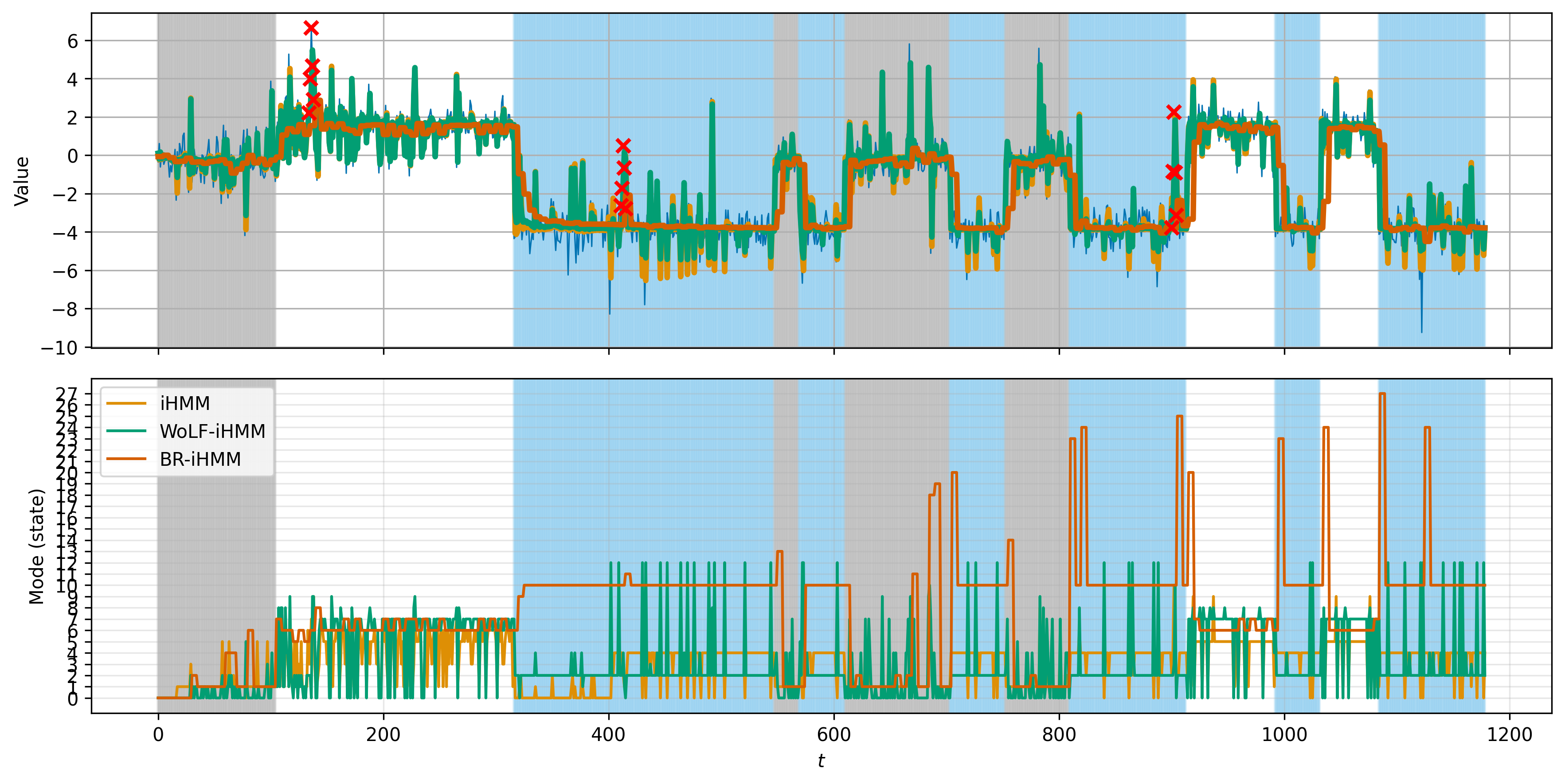}
    \caption{
        \textit{IID synthetic data with t-noise.}
        Outliers are marked with a \textcolor{red}{red cross}. 
        Top panel compares predicted value, lower panel compares MAP states.
    }
    \label{fig:compare_t}
\end{figure}

\subsection{Order flow imbalance}\label{app:ofi}

\paragraph{Calculation of order flow imbalance.}
We replicate as simplified version of the experiment in \citet{tsaknaki2025online}, where order flow imbalance (OFI) is the sum of signed volumes of trades that occurred within a time interval.
Let $V$ denote the total number of trades in a day and fix a bucket size of $z=400$.
Bucketing with respect to $z$, yields a time series of $T=\lfloor V/z\rfloor=8273$ observations, with each bucket corresponding to approximately one minute on the trade clock.
The value $z=400$ is chosen heuristically for the Microsoft LOB to roughly partition each day's LOB into one minute intervals.
Let $v_1,\ldots,v_V$ denote the signed trade volumes throughout the day, a trade is buyer-initiated if $v_i>0$ and seller-initiated otherwise.
The OFI time series is then defined as
\[
y_t = \sum_{j=1}^{z} v_{(t-1)z+j}, \qquad t=1,\ldots,T
\]
which is a simple sum of the traded quantity within the interval.

\paragraph{Autoregressive DGP}
We assume a regime-dependent autoregressive DGP,
\begin{equation}\label{form:ofi_dgp}
    y_t = \rho^* y_{t-1} - (1-\rho^*)\,\vtheta^*_{s_t} + e_t,
\end{equation}
where $e_t \sim \Norm(0,\sigma^2(1-\rho^{*2}))$, $\sigma^2$ is known, and $\rho^*$ is a hyperparameter.
The observations are standardised, and no explicit outlier removal is performed.
The model is parameterised via
\[
\vtheta_t =
\begin{bmatrix}
\vtheta_{s_t} & 1
\end{bmatrix}^\top,
\quad
f(\vx_t)=
\begin{bmatrix}
\rho^*-1 & \rho^*\,y_{t-1}
\end{bmatrix}.
\]

\paragraph{\batched falls back to \wolf}
From Table~\ref{tab:app_forecast_results}, it is clear that there is a lack of statistical significance in the improvement of \batched over \wolf and \vanilla.
Indeed, the t-statistic between \batched and \vanilla is about $t \approx 0.35$, and the t-statistic between \batched and \wolf is about $t \approx 0.58$, which is far from significant.
Recall Table~\ref{tab:fitted_hyperparams}, in which $B=1$ for \batched in Bayesian optimisation.
We reiterate the advantage of \batched as it falls back to \wolf when the batching does not provide gains in predictive accuracy, in that in the worst case (either from a severely mismatched assumed DGP or extremely rapid-switching data) \batched performs \textit{at least as well} as \wolf and \vanilla.

\subsection{Hourly electricity demand}\label{app:electricity}

\paragraph{Feature engineering and model setup}
The regression features combine exogenous covariates $\vx_t$ (temperature, pressure, humidity, wind) with periodic basis functions.
For a chosen polynomial degree $p\in\mathbb{N}$, we define the feature map
\begin{align}\label{form:electricity_model}
    f_p(\vx_t,t)
    =
    \big[
    1,\;
    \vx_t^{(1)},\ldots,\vx_t^{(p)},\;
    \cos(2\pi t/Q),\;
    \sin(2\pi t/Q)
    \big],
\end{align}
where $\vx_t^{(k)}$ denotes element-wise powers of $\vx_t$, and the periodic terms correspond to hourly ($Q=24$) and semi-annual ($Q=24\times 182$) cycles.
The resulting model is
\[
y_t = f_p(\vx_t,t)\,\vtheta_{s_t} + e_t,
\]
with fixed observation noise variance ${\rm var}(e_t)=R$.
The initial parameter variance $P_0$ and the degree $p$ are treated as hyperparameters.

\paragraph{Extra plots}
We present some extra plots in this section.
\begin{figure}[H]
    \centering
    \includegraphics[width=0.7\linewidth]{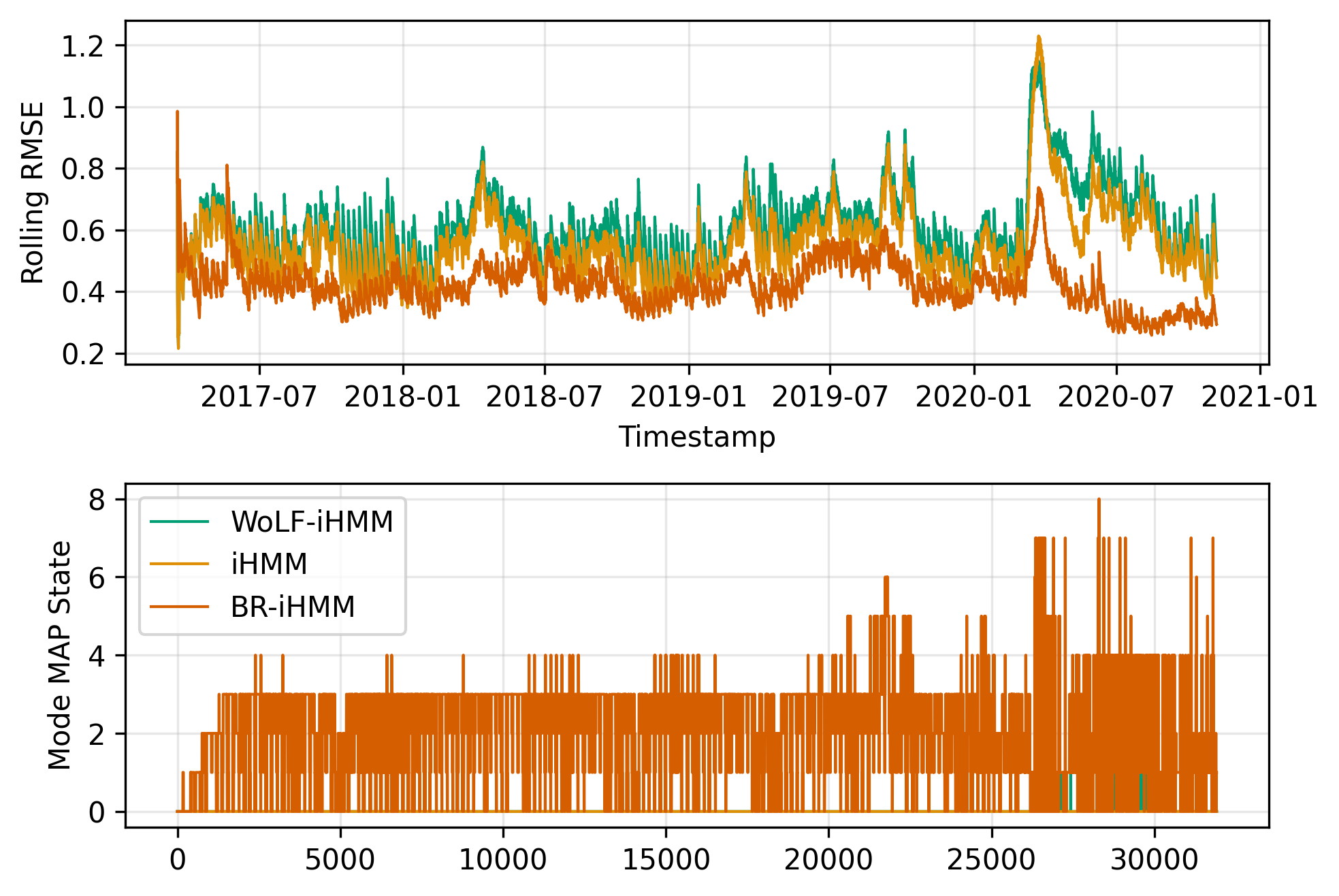}
    \caption{\textit{Hourly electricity demand.} 
    Top panel plots the rolling RMSE with a window of 100 for each model.
    Bottom panel plots MAP state. 
    Each plot overlays the results for all three models. 
    The MAP state of \vanilla did not change, \wolf detected a small change towards the end.}
    \label{fig:electricity}
\end{figure}

\begin{figure}[H]
    \centering
    \includegraphics[width=0.8\linewidth]{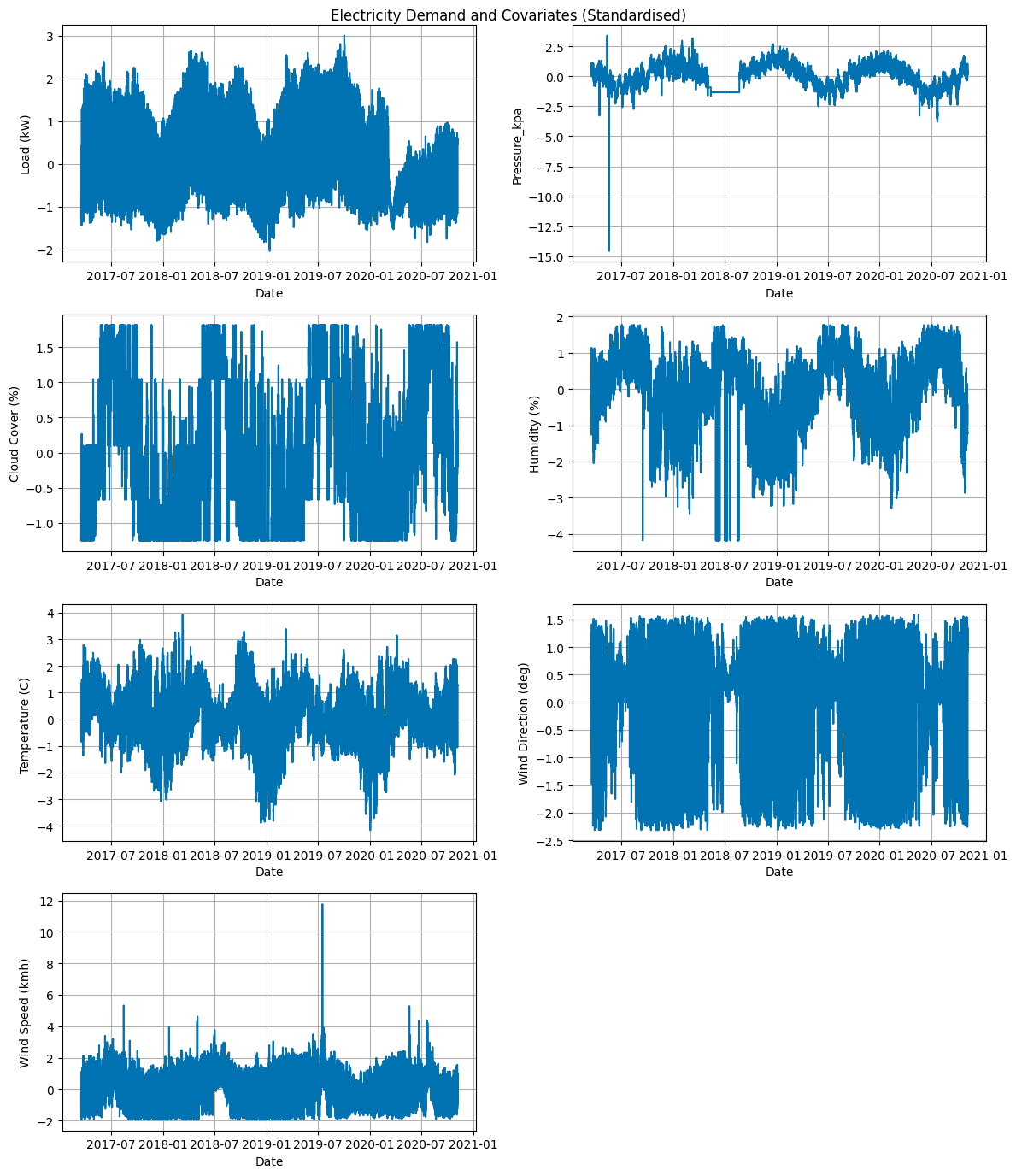}
    \caption{\textit{Electricity data.} Standardised data for all 31912 data points standardised per dimension. Electricity demand is measured in \texttt{Load (kW)}. Data shows clear periodicity.}
    \label{fig:raw_electricity}
\end{figure}

\begin{figure}[H]
    \centering
    \includegraphics[width=0.9\linewidth]{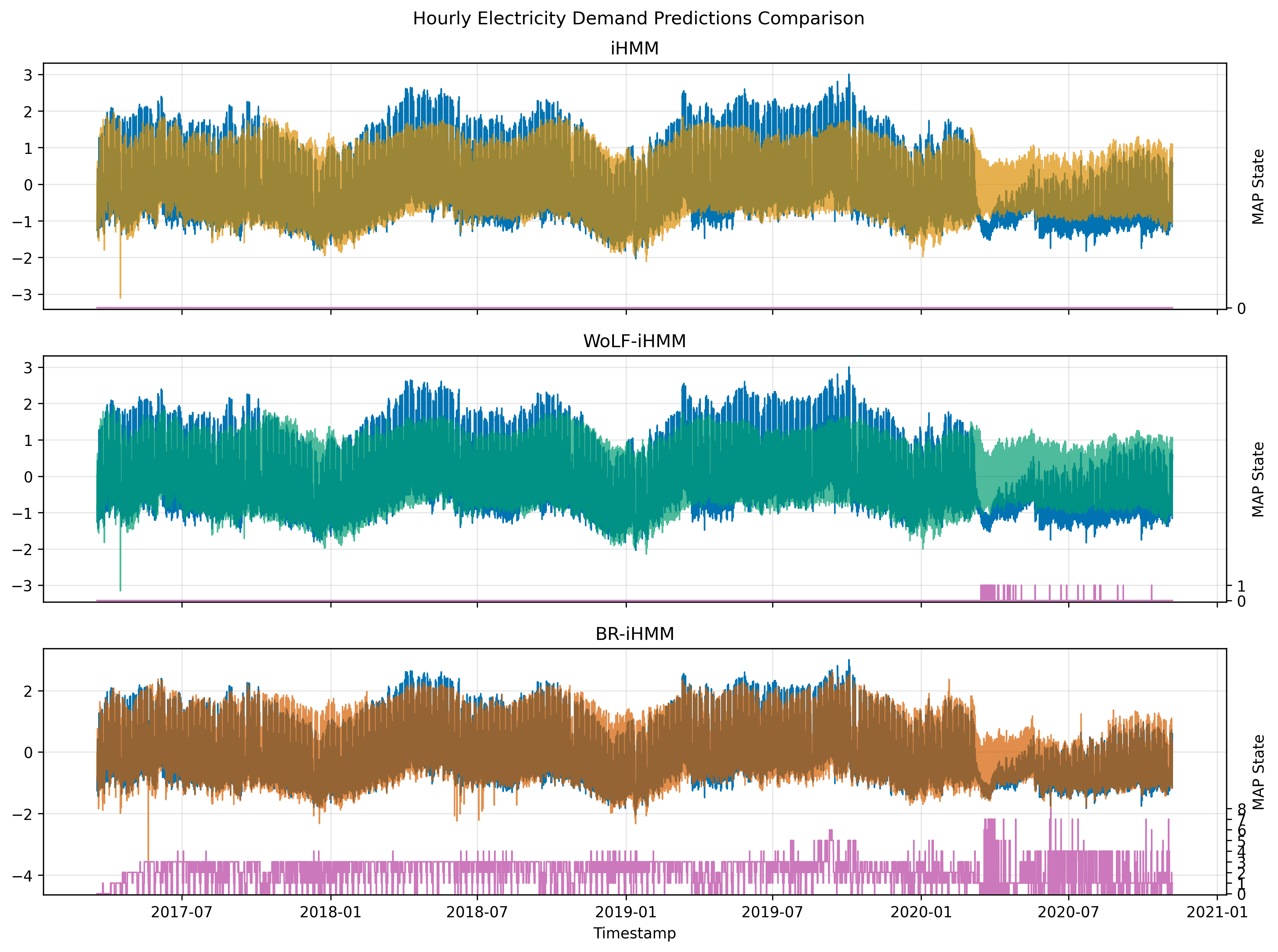}
    \caption{Electricity demand forecasts produced by \vanilla, \wolf, and \batched respectively. MAP states are plotted on the right hand axis for each model in \textcolor{snspurple}{\textbf{purple}}. The true standardised electricity demands are plotted in \textcolor{blue}{\textbf{blue}} in the background.}
    \label{fig:pred_electricity}
\end{figure}

\section{Limitations and extensions}\label{app:limitations}

\subsection{Fixed batch size $B$}
Currently, the batch size $B$ is fixed a priori and is dependent on the data. 
This restricts the state-inference to an arbitrary partition of time.

To alleviate this shortcoming, we explore methods to model $B$ adaptively in the online context.
One possible extension is to place a hierarchical Bayesian structure and estimate $B$ as a random variable instead.
A natural choice of prior for $B$ would be the Gamma distribution with Poisson likelihood, in which the expected number of occurrences can be estimated for a size of $B$ as well as the gaps between such events (which we know have exponential gaps). 
Concretely, define 
\begin{align}
    \lambda_t \sim \Gam(a_\lambda, b_\lambda)
\end{align} 
as the latent rate of outlier occurrences.
Let 
\begin{align}
    B_t \sim \Gam(a_\Delta, b_\Delta)    
\end{align}
be the size of the batch and 
\begin{align}
    \Delta_t \mid B_t, \lambda_t \sim {\rm Poi}(B_t \, \lambda_t)
\end{align}
as the number of events occurring within the batch of size $B_t$ with Poisson likelihood. 
See that integrating out $\lambda_t$ yields a negative Binomial distribution on $\Delta_t$. 
From this formulation, we know event gaps between two consecutive outliers $g_t \in \mathbb{N}$ are exponential 
\begin{align}
    g_t \mid \lambda_t \sim {\rm Exp}(\lambda_t).    
\end{align}
We require a method to retrospectively count the number of outliers $\Delta_t$. 
A naive method is to define a threshold $\tau_w \in (0, 1)$, such that the number of outliers is defined as the number of observations within the batch with a weight lower than $\tau_w$, with respect to the assigned state $s_t'$,
\begin{align}
    \Delta_t = \sum_{t' = t+1}^{t+B} \indic(w^2_{s_{t'}, t'} < \tau_w).
\end{align}
This allows us to update the latent rate of outlier rates with 
\begin{align}
    \lambda_t \mid  \Delta_t, B_t \sim \Gam(a_\lambda + \Delta_t, b_\lambda + B_t),    
\end{align}
in closed-form, and hence the estimation of $B_t$ sequentially based on the probability $P(\Delta_t = n \mid B_t) < \tau_g$ (i.e. the probability of $n$ outliers in a batch size of $B_t$) versus some user-determined threshold $\tau_g \in (0, 1)$. 
Adaptability and robustness trade-off is therefore encoded probabilistically. 

\subsection{Results specific to LG emissions}
We acknowledge that our results in Section~\ref{app:theory} are specific to the LG model, however the intuition behind the proofs translate to other emission distributions.
We now provide more general claims to the conditions that induce bounded joint PIFs without proofs.

\paragraph{General claim on bounded state-space PIF}
In Lemma~\ref{lem:degen_state_dist}, we show that the LG log-likelihood infinitely favours a newer state over an existing one.
In Section~\ref{app:observation_space_robustness_GB}, we show that the same pathology exists for a univariate Student-t distribution.
The sketch of the proof requires us to consider the asymptotic behaviour of the difference in log-likelihoods between a new state and an existing state's posterior predictive as the (contaminated) observation $\Vert \vy_t \Vert_2^2 \rightarrow \infty$.
Let $g(\vy_t)$ be the rate of growth of the log-likelihood difference, such that
\begin{align}
    \Delta(\vy_t) = \log P(\vy_t \mid \dpp_{t-1}, s_t = \text{new state}) - \log P(\vy_t \mid \dpp_{t-1}, s_t = \text{existing state})\in O(g(\vy_t)).
\end{align}
In the LG case, $g(\vy_t) = \normtwo{\vy_t}$. 
In the univariate Student-t case, $g(y_t) = \log(|y_t|)$.
Therefore, it suffices to bound $\Delta(\vy_t)$ as $\Vert \vy_t \Vert_2^2 \rightarrow \infty$.
A sufficient condition to enforce this bound is the \textcolor{snsgreen}{WoLF} update, consider a weighting function that satisfies the following new assumptions
\begin{equation}
    \begin{aligned}
        \sup_{\vy \in \Real^d} W(\vy, \pred) < \infty, \\
        \sup_{\vy \in \Real^d} W(\vy, \pred) g(\vy) < \infty.
    \end{aligned}
\end{equation}
When this is satisfied, the asymptotic behaviour of the \textcolor{snsgreen}{WoLF} log-likelihood difference between any two states is finite, which yields non-zero state distributions for all possible state-paths.
The state PIF is therefore bounded subsequently.
This requirement is also satisfied in the batched posterior predictive \eqref{form:batched_posterior_pred}.

\paragraph{General claim on bounded observation-space PIF}
Recall the observation-space PIF amounts to a bounded KLD between two posterior distributions, one contaminated with an outlier, and the other without.
Furthermore, the batched observation-space PIF is bounded as long as the one-step posterior update is bounded by induction (see Lemma~\ref{lem:bounded_pif_obs}).
For analytical purposes, we assume it is possible to write the KLD's rate of growth as
\begin{align}
    {\rm KL}\big(
        P(\vtheta_t \mid \data_{1:t}^c) 
        \mid\mid 
        P(\vtheta_t \mid \data_{1:t}) 
    \big) \in O(h(\vy_t^c))
\end{align}
for some function $h(\vy_t^c)$.
In the LG case, $h(\vy_t^c) = g(\vy_t^c) = \normtwo{\vy_t^c}$. 
Given the LG update rules, the boundedness is satisfied by the same weighting function $W$, bounding the state-space and observation-space PIF simultaneously.
The precise $W$ depends on the choice of posterior and the corresponding update rules.
We plan to explore whether it is possible to analytically define the admissible $W$'s for all continuous emission models. 

\paragraph{Emission models with no closed-form updates}
As a simple demonstration, we experimented on a regime-switching GARCH model with iHMM via SMC. 
The stochastic volatility model does not have closed-form updates, so we resort to SMC-type updates. The regime $s_t$ determines which ${\rm GARCH}(1, 1)$ coefficients are active at time t. 
Regime 0 of the DGP corresponds to a low-volatility parameter set, and regime 1 to a high-volatility parameter set.
We consider the regime-switching GARCH-model with the model setup:
\begin{equation}
    \begin{aligned}
        s_t &\in {0,1} \\
        y_t &= \sigma_t \varepsilon_t, \\
        \varepsilon_t &\sim \mathcal N(0,1), \\
        \sigma_t^2 &= \omega_{s_t} + \alpha_{s_t} y_{t-1}^2 + \beta_{s_t} \sigma_{t-1}^2.        
    \end{aligned}
\end{equation}
Inference is performed using SMC.

\begin{figure}[H]
    \centering
    \includegraphics[width=0.75\linewidth]{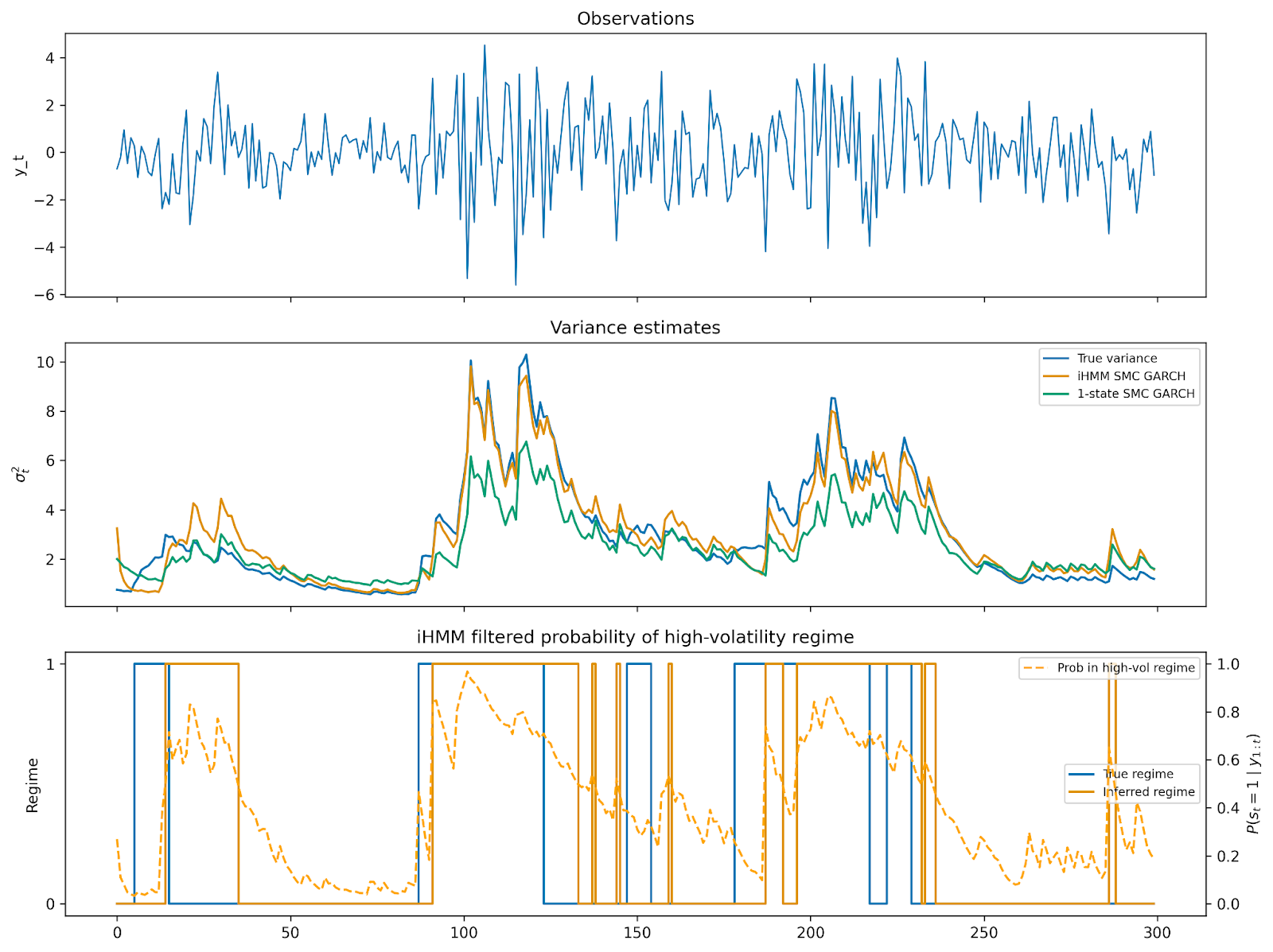}
    \caption{
    Top panel plots the observations.
    Middle panel plots the actual variance, along with variance estimates between a naive one-state GARCH and an iHMM of GARCH models.
    Bottom panel plots inferred regime probabilities for the iHMM GARCH model.
    }
    \label{fig:ihmm_garch}
\end{figure}

The resulting plots illustrate that periods of elevated variability in the observations coincide with the high-volatility regime, and that the filtered regime probabilities closely track these changes over time. Moreover, the estimated conditional variance adapts dynamically to both recent observations and regime switches, demonstrating that the proposed framework remains applicable in settings where volatility is specified deterministically rather than via latent stochastic processes.

\subsection{Towards linking bounded PIFs to higher predictive accuracy}
In this work, robustness is characterised through bounded posterior influence functions (PIFs), which quantify the sensitivity of posterior distributions to arbitrary contamination.
While bounded PIF ensures that extreme observations cannot arbitrarily distort the posterior, its relationship to predictive performance is not trivial.
Nevertheless, bounded PIF provides a necessary stability condition for reliable prediction: without it, a single outlier can arbitrarily corrupt the posterior and degrade future predictions. 
Empirically, we observe that enforcing bounded PIF leads to improved predictive performance in settings with outliers or model misspecification (Section~\ref{app:experiments}), suggesting that stability of the posterior is a key driver of prequential accuracy in these regimes.
A formal characterisation of the relationship between bounded PIF and predictive accuracy remains an open problem and is an interesting direction for future work.
Concretely, one such results requires us to relate
\[
\mathcal{R}_{t+1} = \mathbb{E}[(\vy_{t+1} - \hat \vy_{t+1})^2]
\]
which denotes the predictive error. 
Similarly, let $\mathcal{R}_{t+1}^c$ denote the corresponding error under contaminated data. 
A natural target is to establish a bound of the form
\[
|\mathcal{R}_{t+1}^c - \mathcal{R}_{t+1}|
\le
C \sqrt{\mathrm{KL}(P(\vtheta \mid \data_{1:t}) \,\|\, P(\vtheta \mid \data_{1:t}^c))}
=
C \sqrt{\mathrm{PIF}(\vy_t^c)},
\]
for some constant $C > 0$.
Extending this argument to the iHMM setting requires controlling both state and parameter perturbations through the joint PIF decomposition, which introduces additional challenges due to the mixture structure of the predictive distribution.
This suggests a concrete direction for future work: establishing predictive-error stability bounds of the form $\Delta \mathrm{RMSE} \le C \sqrt{\mathrm{PIF}}$ for online HSSMs.

\subsection{Long-term sensor failures}
When consecutive outliers exceed $B$, the guarantee no longer applies directly as multiple batches are affected.
This arises in scenarios when there are long-term sensor failures. 
Particularly, assuming the observations produced by the sensor failures follow the same distribution, then the model typically allocates such observations to separate states rather than distorting existing ones; persistent outliers are grouped into consistent states across batches. 
This contains their effect at the level of state allocation rather than parameter drift.
From a practical perspective, this behaviour can be desirable. Persistent sensor failures or systematic corruptions are treated as distinct regimes, which can improve interpretability. In addition, rarely visited states can be pruned over time, limiting long-term impact.

\end{appendices}




\end{document}